\documentclass[twoside,11pt]{article}

\usepackage{blindtext}

\usepackage[abbrvbib, preprint]{jmlr2e}

\usepackage{amsmath}
\usepackage{booktabs}
\usepackage{multirow}

\newtheorem{assumption}{Assumption}

\usepackage{lastpage}
\jmlrheading{XX}{2026}{1-\pageref{LastPage}}{Submitted 3/26}{Under review}{XX-0000}{Xuan Qi, Yi Wei, Fanqi Yu, Furao shen, Vittorio Murino, and Cigdem Beyan}

\ShortHeadings{Training-Time Batch Normalization Reshapes Local Partition Geometry}{Qi, Wei, Yu, Shen, Murino, and Beyan}
\firstpageno{1}

\begin{document}

\title{Training-Time Batch Normalization Reshapes Local Partition Geometry in Piecewise-Affine Networks}

\author{\name Xuan Qi$^{*,\dagger}$ \email xuan.qi@iit.it \\
       \addr AI for Good\\
       Istituto Italiano di Tecnologia, Genoa, Italy\\
       DITEN\\
       University of Genoa, Genoa, Italy\\
       \AND
       \name Yi Wei$^{*}$ \email ywei@smail.nju.edu.cn \\
       \addr State Key Laboratory of Novel Software Technology\\
       School of Intelligence Science and Technology\\
       Nanjing University, Jiangsu, China\\
       \AND
       \name Fanqi Yu \email fanqi.yu@iit.it \\
       \addr AI for Good\\
       Istituto Italiano di Tecnologia, Genoa, Italy\\
       DITEN\\
       University of Genoa, Genoa, Italy\\
       \AND
       \name Furao Shen \email frshen@nju.edu.cn \\
       \addr State Key Laboratory of Novel Software Technology\\
       School of Artificial Intelligence\\
       Nanjing University, Jiangsu, China\\
       \AND
       \name Vittorio Murino \email vittorio.murino@iit.it \\
       \name Cigdem Beyan \email cigdem.beyan@univr.it \\
       \addr AI for Good\\
       Istituto Italiano di Tecnologia, Genoa, Italy\\
       Department of Computer Science\\
       University of Verona, Verona, Italy}
\editor{My editor}

\maketitle
\begingroup
\renewcommand{\thefootnote}{\fnsymbol{footnote}}
\footnotetext[1]{Equal contribution. \quad $^{\dagger}$Corresponding author.}
\endgroup

\begin{abstract}
Batch normalization (BN) is central to modern deep networks, but its effect on the realized function during training remains less understood than its optimization benefits. We study training-time BN in continuous piecewise-affine (CPA) networks through the geometry of switching hyperplanes and the induced affine-region partition. Conditioned on a mini-batch, we show that BN defines for each neuron a reference hyperplane through the batch centroid, and that breakpoint-switching hyperplanes are parallel translates whose offsets are expressed in batch-standardized coordinates and are independent of the raw bias. This yields an exact criterion for when a switching hyperplane intersects a local $\ell_\infty$ window and motivates a local region-density functional based on exact affine-region counts. Under explicit sufficient conditions, we show that BN increases expected local partition refinement in ReLU and more general piecewise-affine networks, and that this mechanism transfers locally through depth inside parent affine regions where the upstream representation map is an affine embedding. These results provide a function-level geometric account of training-time BN as a batch-conditional recentering mechanism near the data.
\end{abstract}

\begin{keywords}
batch normalization, CPA networks, affine regions, switching hyperplanes, expressivity
\end{keywords}

\section{Introduction}\label{sec:intro}

Batch Normalization (BN)~\cite{book1} is a central component of modern deep neural networks and is used in a wide range of architectures. Its empirical benefits are well documented, and much of the existing theory explains these benefits through optimization-related effects, including improved conditioning, smoother objectives, stabilized gradients, and the ability to use larger effective learning rates~\cite{book2,book3,book4,book8,book9,book10,book11,book12,book13}. While these explanations clarify why BN facilitates optimization, they leave largely open a complementary question: \emph{how does BN affect the function that the network realizes during training?} In particular, beyond its role in optimization, it remains unclear how BN influences the structure and complexity of the learned input--output mapping.

This question is especially natural for networks built from piecewise-affine nonlinearities such as ReLU~\cite{book26}, LeakyReLU~\cite{book48}, and, more generally, continuous piecewise-affine (CPA) activations. Such networks realize CPA maps whose input--output behavior is organized by a partition of the input space into affine regions. Within each region, the network acts as an affine map, and nonlinear expressivity arises from the number, location, and arrangement of these regions relative to the data. From this perspective, the affine-region partition provides a natural geometric object for studying the realized behavior of CPA networks, and a substantial literature has used affine regions and switching sets to analyze the expressivity of ReLU and related architectures~\cite{book5,book6,book7,book14,book15,book16,book17,book18,book19,book20,book21,book22,book23,book24}. Understanding the effect of BN at the function level therefore requires understanding how it changes the switching-hyperplane arrangement and the induced affine-region partition, especially in neighborhoods relevant to the data distribution.

This leads to the central question of the paper:
\begin{quote}
\emph{How does training-time BN reshape the CPA partition near the data?}
\end{quote}

This question is nontrivial for several reasons. First, BN is inherently a \emph{training-time}, \emph{batch-dependent} operation: its centering and scaling rely on mini-batch statistics, so the effective pre-activations of each sample depend on the other samples in the batch. Second, in CPA networks, even small changes in pre-activation geometry can shift switching hyperplanes and alter local region boundaries. Third, although the inference-time form of BN can be absorbed into an affine reparameterization, this does not capture the batch-dependent geometry that arises during training. Consequently, if BN has a distinctive function-level effect, it must be understood through the training-time geometry of the induced partition rather than through the static inference-time function alone.

In this paper, we take exactly this geometric viewpoint. We study training-time BN in CPA networks through the breakpoint-switching hyperplanes that define the affine-region partition. Since our focus is on local behavior near the data rather than worst-case global complexity, we introduce a local quantity, \emph{local region density}, defined by exact affine-region counts inside $\ell_\infty$ neighborhoods. This gives a common functional object with which BN and non-BN models can be compared on equal geometric footing.

Our analysis starts from a simple but consequential observation: conditioned on a mini-batch, BN recenters switching hyperplanes relative to the batch itself. This batch-conditional recentering yields a concrete geometric representation in which switching hyperplanes are naturally described relative to the batch centroid and batch scale. This perspective leads to two key questions. First, when does such recentering make switching hyperplanes more likely to intersect a local neighborhood around the data? Second, when does this increased intersection behavior translate into a finer local partition? These two questions organize the theory developed in the paper.

We provide a function-level geometric account of how training-time BN shapes the realized CPA partition. Our main contributions are as follows.
\begin{itemize}
    \item \textbf{Geometric characterization of training-time BN.}
    Conditioned on a mini-batch, we show that standard BN induces, for each neuron, a through-centroid reference hyperplane, and that breakpoint-switching hyperplanes are parallel translates whose offsets are expressed in batch-standardized coordinates and are independent of the raw bias.

    \item \textbf{A local comparison framework based on exact region counts.}
    We introduce a local region-density functional based on exact affine-region counts inside $\ell_\infty$ windows and derive an exact criterion for when a switching hyperplane intersects such a window. This turns the effect of BN on switching geometry into a precise local comparison problem.
    
    \item \textbf{Single-layer local refinement results.}
    We show that, at the single-layer level, BN can increase local partition refinement under explicit sufficient conditions. This comparison is developed for both ReLU networks and more general CPA activations.

    \item \textbf{A multilayer transfer principle.}
    We show that, inside parent affine regions where the upstream representation map is an affine embedding, the same local refinement mechanism transfers through depth, linking deeper-layer switching geometry to the induced local partition in input space.

    \item \textbf{Empirical validation of the mechanism and its consequences.}
    We complement the theory with experiments that directly test the proposed geometric mechanism and its observable implications, including exact local region enumeration in low dimensions, mechanism-level diagnostics, and supporting evidence in deeper and higher-dimensional settings.
\end{itemize}

Conceptually, our results complement existing optimization-centered accounts of BN. They suggest that BN should also be understood as a training-time geometric mechanism: by recentering switching structure relative to the batch, it reshapes how the realized CPA partition evolves near the data during optimization. This perspective does not replace standard explanations of BN, but provides a complementary function-level view that is especially natural for piecewise-affine networks.

The remainder of the paper is organized as follows. Section~\ref{sec:relatedwork} reviews related work. Section~\ref{sec:methodology} introduces the CPA and BN framework and defines local region density. Section~\ref{sec:bn-local-density-standard-linf} develops the geometric characterization of training-time BN and the resulting local refinement theory, including the multilayer transfer principle. Section~\ref{sec:experiments} presents experiments that validate both the geometric mechanism and its observable consequences.

\section{Related Work}
\label{sec:relatedwork}

Our work lies at the intersection of two research directions: analyses of BN and geometric studies of CPA networks. Relative to these literatures, our focus is on the \emph{training-time geometry of the realized function}. In particular, we study how batch-dependent normalization reshapes the local arrangement of switching hyperplanes near the data, rather than focusing only on optimization effects or on worst-case static properties of CPA architectures.

\subsection{Batch Normalization Beyond Optimization}

Since its introduction, BN has largely been studied through optimization-related mechanisms. Early work attributed its effectiveness to reducing internal covariate shift~\cite{book1}, while later analyses emphasized reparameterization effects, smoother objectives, more stable gradient dynamics, and improved conditioning~\cite{book2,book3,book4,book27,book28}. These explanations are important and largely complementary, but they primarily address optimization behavior rather than the geometry of the function realized during training.

A smaller body of work moves closer to the present viewpoint. In particular, some prior studies have examined BN through activation-pattern statistics or interpreted it as a data-dependent form of initialization~\cite{book29,book30}. These works provide useful empirical evidence that normalization can affect how network responses are organized, but they are primarily observational in character and do not develop a geometric framework for the batch-conditional switching structure induced by BN during training. Our contribution is to make this geometric aspect explicit: we formulate a batch-conditional framework for CPA networks in which BN induces a through-centroid reference hyperplane and corresponding breakpoint-switching hyperplanes, and we use this framework to study local partition refinement near the data.

\subsection{Geometry of Continuous Piecewise-Affine Networks}

A substantial literature studies the expressivity of ReLU and more general CPA networks through the affine regions induced by activation patterns~\cite{book50,book51}. Foundational results showed that the number of linear regions can grow exponentially with depth and only polynomially with width, thereby identifying depth as a central source of geometric complexity~\cite{book5,book6,book7,book14,book20,book21,book22}. Subsequent work refined these insights through sharper combinatorial bounds and approximation-theoretic interpretations~\cite{book16,book17,book18,book19,book23,book33,book34,book37,book38,book42,book43,book44,book45,book46}. More algorithmic studies developed exact counting methods and visualization tools for CPA partitions and decision boundaries~\cite{book15,book24,book35,book36,book52}.

Our work differs from this literature in two main respects. First, rather than focusing on global or worst-case region complexity, we study \emph{local} partition refinement in neighborhoods centered at the data. Second, our analysis is explicitly \emph{training-time} and \emph{batch-conditional}: rather than considering only static architectural properties or initialization, we study how BN changes the realized local switching geometry during optimization. Some recent works have examined how initialization or training influences partition geometry~\cite{book39,book40,book41}, but they do not isolate BN-specific geometric effects or connect them to exact local region counts.

Taken together, our work complements both the BN and CPA-geometry literatures. Relative to the BN literature, we add a function-level geometric account of training-time BN. Relative to the CPA-geometry literature, we move from static capacity questions to \emph{data-centered local geometry}. Methodologically, we combine batch-conditional hyperplane analysis with exact region enumeration, yielding a local comparison framework in which BN and non-BN models can be studied on equal geometric footing.

\section{Preliminaries and Geometric Framework}
\label{sec:methodology}

This section fixes the notation and geometric objects used throughout the paper. We first specify the network and activation conventions, then formalize affine-region partitions for CPA networks, distinguish training-time and inference-time forms of BN, and finally define the local region-density functional used in the theoretical and empirical comparisons.

\subsection{Mathematical Notation and Conventions}
\label{sec:notation}

Unless otherwise stated, scalars are denoted by lowercase letters, vectors by lowercase letters, and matrices by uppercase letters. Boldface is reserved for vector-valued collections whose entries are themselves indexed objects, such as $\mathbf M(r)=(M_1(r),\dots,M_n(r))$. For a vector $v$, $v_j$ denotes its $j$th entry; for a matrix $A$, $A_{ij}$ denotes its $(i,j)$ entry. We write $\operatorname{Diag}(v)$ for the diagonal matrix with diagonal $v$, and $\odot$ for the Hadamard product.

We consider feedforward networks of depth $L\in\mathbb{N}$ with widths $(D_0,\dots,D_L)$, where $D_0$ and $D_L$ denote the input and output dimensions. Layers are indexed by $l\in\{1,\dots,L\}$, with hidden layers $l\in\{1,\dots,L-1\}$. The input is $h^{(0)}(x):=x\in\mathbb{R}^{D_0}$. For each layer $l$, the weight matrix and bias vector are
\[
W^{(l)}\in\mathbb{R}^{D_l\times D_{l-1}}
\qquad\text{and}\qquad
b^{(l)}\in\mathbb{R}^{D_l},
\]
respectively. For hidden layers, $z^{(l)}(x)$ denotes the pre-activation vector and $h^{(l)}(x)$ the post-activation vector.

The activation $\sigma:\mathbb{R}\to\mathbb{R}$ is applied elementwise and is assumed to be CPA. Thus there exist breakpoints
\[
-\infty=\tau_0<\tau_1<\cdots<\tau_{K-1}<\tau_K=+\infty
\]
and affine coefficients $\{(a_k,\eta_k)\}_{k=1}^K\subset\mathbb{R}^2$ such that
\[
\sigma(t)=a_k t+\eta_k
\qquad\text{for }t\in(\tau_{k-1},\tau_k),\quad k=1,\dots,K,
\]
together with the continuity constraints
\[
a_k\tau_k+\eta_k=a_{k+1}\tau_k+\eta_{k+1},
\qquad k=1,\dots,K-1.
\]
Unless stated otherwise, pointwise statements are understood on the complement of the switching set introduced formally in Section~\ref{sec:affine-regions}; that is, we work away from inputs for which some hidden-layer pre-activation coincides with a breakpoint.

For a CPA network $f$, we write $\mathcal{R}(f)$ for the set of affine regions, namely the connected components of $\mathbb{R}^{D_0}$ after removing the switching set. On each region $R\in\mathcal{R}(f)$, the network restricts to an affine map
\[
f(x)=A_Rx+b_R,
\qquad x\in R.
\]
We write $D^{(l)}(R)$ for the diagonal slope-gating matrix induced by the active CPA pieces at layer $l$ on region $R$.

In the multilayer analysis we also consider \emph{parent} affine regions of the prefix map
\[
g^{(\ell-1)}(x):=h^{(\ell-1)}(x).
\]
On such a parent region $R$, the prefix map is affine, and we write
\[
g^{(\ell-1)}(x)=\widetilde A_R x+\widetilde d_R.
\]
We use $(\widetilde A_R,\widetilde d_R)$ for prefix-map coefficients in order to distinguish them from the full-network coefficients $(A_R,b_R)$ defined above.

Mini-batches at layer $l-1$ are denoted by
\[
\mathcal{B}^{(l-1)}=\{h_k^{(l-1)}\}_{k=1}^M,
\]
where $M$ is the batch size. When a fixed hidden layer $\ell$ is under local hyperplane-arrangement analysis, we also use the representation-space batch notation
\[
U:=\{u^{(1)},\dots,u^{(M)}\}\subset\mathbb{R}^{D_{\ell-1}},
\qquad
\bar u:=\frac{1}{M}\sum_{i=1}^M u^{(i)}.
\]
BN introduces learnable per-feature parameters
\[
\gamma^{(l)},\beta^{(l)}\in\mathbb{R}^{D_l}.
\]
The training-time batch statistics at layer $l$ are denoted by $\mu^{(l)},v^{(l)}\in\mathbb{R}^{D_l}$, and the inference-time running estimates by $\bar\mu^{(l)},\bar v^{(l)}\in\mathbb{R}^{D_l}$.

Let $P_X$ denote a data distribution supported on $\mathcal{X}\subseteq\mathbb{R}^{D_0}$. Throughout, we assume $\mathbb{E}\|X\|<\infty$, so that the centroid
\[
\bar x:=\mathbb{E}_{X\sim P_X}[X]\in\mathbb{R}^{D_0}
\]
is well defined. Experimental trials are indexed by $s\in\{1,\dots,S\}$, and training epochs by $t$. We write $f_t(\cdot):=f(\cdot;\theta_t)$ for the network at epoch $t$.

Whenever we fix a hidden layer $\ell$ for a local hyperplane-arrangement analysis, we use the shorthand
\begin{equation}
\label{eq:layerwise-shorthand}
d:=D_{\ell-1},
\qquad
n:=D_\ell,
\qquad
Q:=K-1,
\end{equation}
where $d$ is the dimension of the representation space feeding layer $\ell$, $n$ is the width of layer $\ell$, and $Q$ is the number of internal breakpoints of the CPA activation. We also define
\begin{equation}
\label{eq:switching-index-sets}
\mathcal J_\ell:=\{1,\dots,n\},
\qquad
\mathcal Q:=\{1,\dots,Q\},
\qquad
\Lambda_\ell:=\mathcal J_\ell\times\mathcal Q.
\end{equation}
An element $a\in\Lambda_\ell$ is written as $a=(j,q)$, where $j$ indexes the neuron and $q$ indexes the breakpoint. Accordingly, breakpoint-switching objects may be indexed either by $(j,q)$ or by the composite index $a$. In the ReLU case, $K=2$ and hence $Q=1$, so $\Lambda_\ell=\mathcal J_\ell\times\{1\}$ and the composite index $a=(j,1)$ is canonically identified with the neuron index $j$.

Table~\ref{tab:layerwise-notation} summarizes the layerwise notation used repeatedly in the single-layer and multilayer analyses. For convenience, the table also lists several switching-geometry symbols that are introduced formally later, at their first point of use.

\begin{table}[t]
\centering
\begin{tabular}{lp{0.72\linewidth}}
\toprule
Symbol & Meaning \\
\midrule
$\ell$ & Fixed hidden layer under analysis \\
$d:=D_{\ell-1}$ & Dimension of the representation space feeding layer $\ell$ \\
$n:=D_\ell$ & Width of layer $\ell$ \\
$Q:=K-1$ & Number of internal breakpoints of the CPA activation \\
$\mathcal J_\ell=\{1,\dots,n\}$ & Neuron index set \\
$\mathcal Q=\{1,\dots,Q\}$ & Breakpoint index set \\
$\Lambda_\ell=\mathcal J_\ell\times\mathcal Q$ & Composite switching-index set, with $a=(j,q)$ \\
$U=\{u^{(1)},\dots,u^{(M)}\}$ & Representation-space mini-batch at the input of layer $\ell$ \\
$\bar u$ & Centroid of $U$ \\
$g_j(u)=\langle w_j,u\rangle+b_j$ & Pre-activation of neuron $j$ on representation space \\
$H_a=H_{j,q}$ & Baseline breakpoint-switching hyperplane $\{u:g_j(u)=\tau_q\}$ \\
$H_a^{\mathrm{BN}}=H_{j,q}^{\mathrm{BN}}$ & BN breakpoint-switching hyperplane \\
$\delta_a=\delta_{j,q}:=(\tau_q-\beta_j)/\gamma_j$ & Breakpoint-standardized BN offset parameter \\
$H_j^\circ$ & Through-centroid reference hyperplane for neuron $j$ \\
$I_a(r),\,I_a^{\mathrm{BN}}(r)$ & Window-cut indicators for $H_a$ and $H_a^{\mathrm{BN}}$ \\
$\Delta_a,\,\Delta_a^{\mathrm{BN}}$ & Corresponding normalized offsets \\
$C(r),\,C^{\mathrm{BN}}(r)$ & Total breakpoint-level window-cut counts \\
$M_j(r),\,M_j^{\mathrm{BN}}(r)$ & Neuron-wise family cut counts \\
$\mathbf M(r),\,\mathbf M^{\mathrm{BN}}(r)$ & Vectors of neuron-wise family cut counts \\
$p_a(r),\,p_a^{\mathrm{BN}}(r)$ & Conditional cut probabilities at radius $r$ \\
$\mathcal A_\ell,\,\mathcal A_\ell^{\mathrm{BN}}$ & Baseline and BN breakpoint-switching hyperplane families at layer $\ell$ \\
\bottomrule
\end{tabular}
\caption{Unified layerwise notation used in the single-layer and multilayer analyses. Symbols associated with switching hyperplanes are introduced formally later, at their first point of use.}
\label{tab:layerwise-notation}
\end{table}

Section~\ref{sec:affine-regions} formalizes CPA networks and affine regions, Section~\ref{sec:bn} specifies BN together with its training-time and inference-time affine representations, and Section~\ref{sec:local-density} defines the local region-density functional used throughout the paper.

\subsection{Continuous Piecewise-Affine Networks and Partition Geometry}
\label{sec:affine-regions}

Consider a feedforward network of depth $L\in\mathbb{N}$ with widths $(D_0,\dots,D_L)$, weights $W^{(l)}\in\mathbb{R}^{D_l\times D_{l-1}}$, and biases $b^{(l)}\in\mathbb{R}^{D_l}$ for $l=1,\dots,L$. Let $h^{(0)}(x):=x\in\mathbb{R}^{D_0}$. For hidden layers $l=1,\dots,L-1$,
\begin{equation}
\label{eq:pre-post}
z^{(l)}(x)=W^{(l)}h^{(l-1)}(x)+b^{(l)},
\qquad
h^{(l)}(x)=\sigma\!\bigl(z^{(l)}(x)\bigr),
\end{equation}
where $\sigma$ is applied elementwise and shared across layers. The output layer is linear:
\[
f(x;\theta)=z^{(L)}(x)=W^{(L)}h^{(L-1)}(x)+b^{(L)}.
\]
Equivalently,
\[
f(x;\theta)
=
\bigl(T^{(L)}\circ \sigma\circ T^{(L-1)}\circ\cdots\circ \sigma\circ T^{(1)}\bigr)(x),
\qquad
T^{(l)}(u):=W^{(l)}u+b^{(l)}.
\]

Assume that $\sigma$ is CPA with finitely many pieces; that is, there exist breakpoints
\[
-\infty=\tau_0<\tau_1<\cdots<\tau_{K-1}<\tau_K=+\infty
\]
and coefficients $\{(a_k,\eta_k)\}_{k=1}^K\subset\mathbb{R}^2$ such that
\begin{equation}
\label{eq:cpa-sigma}
\sigma(t)=a_k t+\eta_k
\qquad\text{for }t\in(\tau_{k-1},\tau_k),\quad k=1,\dots,K,
\end{equation}
together with the continuity constraints
\[
a_k\tau_k+\eta_k=a_{k+1}\tau_k+\eta_{k+1},
\qquad k=1,\dots,K-1.
\]
ReLU is the special case $K=2$, $\tau_1=0$, $(a_1,\eta_1)=(0,0)$, and $(a_2,\eta_2)=(1,0)$. Since compositions of affine maps with CPA maps are CPA, the realized network map $f(\cdot;\theta)$ is CPA on $\mathbb{R}^{D_0}$.

To define the induced partition, let
\[
\mathcal N:=\{(l,j):l=1,\dots,L-1,\ j=1,\dots,D_l\}
\]
index the hidden-layer neurons, and define the piece selector
\[
\pi:\mathbb{R}\setminus\{\tau_1,\dots,\tau_{K-1}\}\to\{1,\dots,K\}
\]
by $\pi(t)=k$ if and only if $t\in(\tau_{k-1},\tau_k)$. The switching set is
\begin{equation}
\label{eq:switching-set}
\Psi
:=
\bigcup_{(l,j)\in\mathcal N}\ \bigcup_{r=1}^{K-1}
\bigl\{x\in\mathbb{R}^{D_0}: z_j^{(l)}(x)=\tau_r\bigr\}.
\end{equation}
Because each $z_j^{(l)}$ is CPA, every level set $\{x:z_j^{(l)}(x)=\tau_r\}$ is a finite union of lower-dimensional polyhedral sets. Hence $\Psi$ is a finite union of lower-dimensional polyhedral sets and therefore has Lebesgue measure zero.

For $x\notin\Psi$, define the activation pattern $p(x)\in\{1,\dots,K\}^{|\mathcal N|}$ by
\begin{equation}
\label{eq:act-pattern}
[p(x)]_{(l,j)}:=\pi\!\bigl(z_j^{(l)}(x)\bigr),
\qquad
(l,j)\in\mathcal N.
\end{equation}
On each connected component $R$ of $\mathbb{R}^{D_0}\setminus\Psi$, the activation pattern is constant. We call such a component an \emph{affine region}, and denote the collection of all affine regions by $\mathcal{R}(f)$.

Fix $R\in\mathcal{R}(f)$. For each hidden layer $l=1,\dots,L-1$, define the slope and intercept vectors induced by the active CPA pieces on $R$ by
\[
a_j^{(l)}(R):=a_{[p(x)]_{(l,j)}},
\qquad
c_j^{(l)}(R):=\eta_{[p(x)]_{(l,j)}},
\qquad
\text{for any }x\in R.
\]
These definitions are independent of the choice of $x\in R$ because the activation pattern is constant on $R$. Let
\[
D^{(l)}(R):=\operatorname{Diag}\!\bigl(a^{(l)}(R)\bigr)\in\mathbb{R}^{D_l\times D_l}
\]
be the corresponding diagonal slope-gating matrix. Then, for all $x\in R$,
\begin{equation}
\label{eq:cpa-affine-on-R-layer}
h^{(l)}(x)=D^{(l)}(R)\,z^{(l)}(x)+c^{(l)}(R),
\qquad
l=1,\dots,L-1.
\end{equation}
Consequently, the network restricts to an affine map on $R$:
\[
f(x)=A_Rx+b_R,
\qquad
x\in R.
\]

For later use, it is convenient to record the recursion for the effective affine coefficients of the pre-activations on $R$. Writing $z^{(l)}(x)=A_R^{(l)}x+b_R^{(l)}$, we have
\begin{equation}
\label{eq:affine-rec-init}
A_R^{(1)}:=W^{(1)},
\qquad
b_R^{(1)}:=b^{(1)},
\end{equation}
and, for $l=2,\dots,L-1$,
\begin{equation}
\label{eq:affine-rec-hidden-cpa}
\begin{aligned}
A_R^{(l)}
&:=W^{(l)}D^{(l-1)}(R)A_R^{(l-1)},\\
b_R^{(l)}
&:=W^{(l)}\!\Bigl(D^{(l-1)}(R)b_R^{(l-1)}+c^{(l-1)}(R)\Bigr)+b^{(l)}.
\end{aligned}
\end{equation}
Applying \eqref{eq:cpa-affine-on-R-layer} in the output layer yields
\begin{equation}
\label{eq:affine-rec-out}
\begin{aligned}
A_R
&:=W^{(L)}D^{(L-1)}(R)A_R^{(L-1)},\\
b_R
&:=W^{(L)}\!\Bigl(D^{(L-1)}(R)b_R^{(L-1)}+c^{(L-1)}(R)\Bigr)+b^{(L)}.
\end{aligned}
\end{equation}
When $L=2$, the recursion \eqref{eq:affine-rec-hidden-cpa} is vacuous and \eqref{eq:affine-rec-out} uses $A_R^{(1)}$ and $b_R^{(1)}$ directly.

\subsection{Batch-Conditional Dynamics and Inference Reparameterization}
\label{sec:bn}

Fix a hidden layer $l\in\{1,\dots,L-1\}$ with width $D_l$ and a mini-batch
\[
\mathcal{B}^{(l-1)}=\{h_k^{(l-1)}\}_{k=1}^M,
\qquad
h_k^{(l-1)}\in\mathbb{R}^{D_{l-1}}.
\]
Define the corresponding pre-activations
\[
z_k^{(l)}=W^{(l)}h_k^{(l-1)}+b^{(l)}\in\mathbb{R}^{D_l},
\qquad k=1,\dots,M,
\]
and let $\gamma^{(l)},\beta^{(l)}\in\mathbb{R}^{D_l}$ denote the learnable BN parameters. Training-time BN computes the per-feature batch statistics
\begin{equation}
\label{eq:bn-stats}
\mu_j^{(l)}=\frac{1}{M}\sum_{k=1}^M z_{k,j}^{(l)},
\qquad
v_j^{(l)}=\frac{1}{M}\sum_{k=1}^M \bigl(z_{k,j}^{(l)}-\mu_j^{(l)}\bigr)^2,
\qquad
j=1,\dots,D_l,
\end{equation}
and applies the transformation
\begin{equation}
\label{eq:bn-train}
\widehat z_{k,j}^{(l)}
=
\gamma_j^{(l)}\frac{z_{k,j}^{(l)}-\mu_j^{(l)}}{\sqrt{v_j^{(l)}+\varepsilon}}
+\beta_j^{(l)},
\qquad
\varepsilon>0.
\end{equation}
Equivalently, in vector form,
\[
\widehat z_k^{(l)}
=
\gamma^{(l)}\odot\frac{z_k^{(l)}-\mu^{(l)}}{\sqrt{v^{(l)}+\varepsilon}}
+\beta^{(l)},
\]
where the division and square root are understood coordinatewise. Throughout the paper, BN is applied before the nonlinearity, so the post-BN activations are
\[
h_k^{(l)}=\sigma\!\bigl(\widehat z_k^{(l)}\bigr).
\]

At inference time, the batch statistics are replaced by running estimates accumulated during training. Writing the running mean and variance as $\bar\mu^{(l)}$ and $\bar v^{(l)}$, inference-time BN takes the form
\begin{equation}
\label{eq:bn-infer}
\widehat z^{(l)}
=
\gamma^{(l)}\odot\frac{z^{(l)}-\bar\mu^{(l)}}{\sqrt{\bar v^{(l)}+\varepsilon}}
+\beta^{(l)}.
\end{equation}
Using $z^{(l)}=W^{(l)}h^{(l-1)}+b^{(l)}$, define
\[
A^{(l)}
:=
\operatorname{Diag}\!\Bigl(\frac{\gamma^{(l)}}{\sqrt{\bar v^{(l)}+\varepsilon}}\Bigr),
\qquad
\widetilde b^{(l)}
:=
A^{(l)}\bigl(b^{(l)}-\bar\mu^{(l)}\bigr)+\beta^{(l)}.
\]
Then inference-time BN is exactly an affine reparameterization before the nonlinearity:
\begin{equation}
\label{eq:bn-affine-reparam}
\widehat z^{(l)}=A^{(l)}W^{(l)}h^{(l-1)}+\widetilde b^{(l)}.
\end{equation}

During training, by contrast, $\mu^{(l)}$ and $v^{(l)}$ depend on the current mini-batch, so the mapping of an individual sample is batch-conditional. To make this dependence explicit, fix a reference batch $\mathcal{B}=\{\mathcal{B}^{(0)},\dots,\mathcal{B}^{(L-2)}\}$ and let
\[
\mu_{\mathcal B}^{(l)},\,v_{\mathcal B}^{(l)}
\]
denote the layer-$l$ batch statistics induced by that reference batch. For a test input $x$, define the frozen-batch BN transform by
\begin{equation}
\label{eq:bn-affine-reparam-train}
\widehat z_{\mathcal B}^{(l)}(x)
=
A_{\mathcal B}^{(l)}W^{(l)}h^{(l-1)}(x)+\widetilde b_{\mathcal B}^{(l)},
\end{equation}
where
\[
A_{\mathcal B}^{(l)}
:=
\operatorname{Diag}\!\Bigl(\frac{\gamma^{(l)}}{\sqrt{v_{\mathcal B}^{(l)}+\varepsilon}}\Bigr),
\qquad
\widetilde b_{\mathcal B}^{(l)}
:=
A_{\mathcal B}^{(l)}\bigl(b^{(l)}-\mu_{\mathcal B}^{(l)}\bigr)+\beta^{(l)}.
\]
By replacing every BN layer with its corresponding frozen-batch affine representation \eqref{eq:bn-affine-reparam-train}, we obtain a deterministic CPA map, denoted
\[
f(\cdot;\theta\mid\mathcal B).
\]
Thus, when we refer to \emph{training-time BN geometry}, we mean the geometry of this batch-conditional CPA map.

In the experiments, all local region counts for BN during training are computed in this batch-conditional geometry by freezing a reference mini-batch and evaluating the induced map $f(\cdot;\theta\mid\mathcal B)$. By \eqref{eq:bn-affine-reparam}, once running statistics are fixed, BN does not enlarge the CPA function class at inference time; it is simply an affine reparameterization of $(W^{(l)},b^{(l)})$. The distinctive effect studied in this paper therefore arises from the batch-dependent geometry of training-time BN rather than from the static inference-time map.

\subsection{Local Region Density}
\label{sec:local-density}

Let $P_X$ be a data distribution supported on $\mathcal{X}\subseteq\mathbb{R}^{D_0}$, and assume $\mathbb{E}\|X\|<\infty$ so that the centroid
\[
\bar x:=\mathbb{E}_{X\sim P_X}[X]\in\mathbb{R}^{D_0}
\]
is well defined.

For any measurable neighborhood $\Omega\subseteq\mathbb{R}^{D_0}$, define the \emph{region-intersection count}
\begin{equation}
\label{eq:region-count}
N_{\mathrm{reg}}(f,\Omega)
:=
\#\bigl\{R\in\mathcal{R}(f):R\cap\operatorname{int}(\Omega)\neq\varnothing\bigr\}.
\end{equation}
Using $\operatorname{int}(\Omega)$ makes the count insensitive to boundary-only contacts. When $\Omega$ is a local $\ell_\infty$ window, we also refer to $N_{\mathrm{reg}}(f,\Omega)$ as the \emph{local region count}.

For $x_0\in\mathbb{R}^{D_0}$ and $r>0$, write
\[
B_\infty(x_0,r):=\{x\in\mathbb{R}^{D_0}:\|x-x_0\|_\infty\le r\}
\]
for the closed $\ell_\infty$ ball. The realized local region density is
\begin{equation}
\label{eq:local-density-deterministic-general}
\rho^\circ(f;x_0,r)
:=
\frac{N_{\mathrm{reg}}(f,B_\infty(x_0,r))}{\operatorname{vol}_{D_0}(B_\infty(x_0,r))}
=
\frac{N_{\mathrm{reg}}(f,B_\infty(x_0,r))}{(2r)^{D_0}}.
\end{equation}
When initialization, data order, optimization, or the reference mini-batch are random, we consider the expected density
\begin{equation}
\label{eq:local-density-general}
\rho(f;x_0,r)
:=
\mathbb{E}\!\bigl[\rho^\circ(f;x_0,r)\bigr]
=
\frac{\mathbb{E}\!\bigl[N_{\mathrm{reg}}(f,B_\infty(x_0,r))\bigr]}{(2r)^{D_0}},
\end{equation}
where the expectation is taken over the relevant experimental randomness.

Our main theoretical object is the centroid-centered special case
\begin{equation}
\label{eq:local-density-deterministic}
\rho^\circ(f;\bar x,r):=\rho^\circ(f;x_0,r)\big|_{x_0=\bar x},
\end{equation}
with expected version
\begin{equation}
\label{eq:local-density}
\rho(f;\bar x,r):=\rho(f;x_0,r)\big|_{x_0=\bar x}.
\end{equation}

The center-parametrized definition also admits several useful variants. A sample-centered density averages over random anchors:
\begin{equation}
\label{eq:sample-centered-density}
\rho^{\mathrm{sample}}(f;r)
:=
\mathbb{E}_{X_0\sim P_X}\!\bigl[\rho(f;X_0,r)\bigr].
\end{equation}
When labels $Y\in\{1,\dots,C\}$ are available, let
\[
\bar x_c:=\mathbb{E}[X\mid Y=c],
\qquad
\pi_c:=\mathbb{P}(Y=c).
\]
The class-wise local density is $\rho_c^{\mathrm{class}}(f;r):=\rho(f;\bar x_c,r)$. An aggregate class-centered density is
\begin{equation}
\label{eq:class-centered-density}
\rho^{\mathrm{class}}(f;r)
:=
\sum_{c=1}^C \omega_c\,\rho(f;\bar x_c,r),
\end{equation}
where $\omega_c\ge 0$ and $\sum_{c=1}^C \omega_c=1$. The choices $\omega_c=\pi_c$ and $\omega_c=1/C$ correspond to class-frequency-weighted and class-balanced averages, respectively. For a finite radius grid $\mathcal R=\{r_1,\dots,r_T\}\subset(0,\infty)$, we also define the radius profile
\begin{equation}
\label{eq:radius-profile}
\mathcal P_f(x_0;\mathcal R)
:=
\bigl(\rho(f;x_0,r_1),\dots,\rho(f;x_0,r_T)\bigr).
\end{equation}

For finite-depth, finite-width CPA networks, the map $f$ is CPA on $\mathbb{R}^{D_0}$ and every compact set intersects only finitely many affine regions. In particular,
\[
N_{\mathrm{reg}}(f,B_\infty(x_0,r))<\infty,
\]
so $\rho^\circ(f;x_0,r)$ is well defined.

At epoch $t$, let $\theta_t$ denote the current parameters and write $f_t(\cdot):=f(\cdot;\theta_t)$. In the low-dimensional settings considered in the experiments, we compute local region counts exactly by enumerating all affine regions intersecting the window of interest. For a baseline model this gives
\[
N_{\mathrm{reg},t}(\Omega):=N_{\mathrm{reg}}(f_t,\Omega).
\]
For a BN model during training, we fix a reference mini-batch $\mathcal B$, freeze the corresponding BN statistics, and evaluate the induced batch-conditional map $f_t(\cdot\mid\mathcal B)$ via \eqref{eq:bn-affine-reparam-train}, yielding
\[
N_{\mathrm{reg},t}^{\mathrm{BN}}(\Omega;\mathcal B)
:=
N_{\mathrm{reg}}\!\bigl(f_t(\cdot\mid\mathcal B),\Omega\bigr).
\]
In this way, BN and non-BN local region counts are compared on equal footing as region-intersection counts of CPA maps on the same window $\Omega$.

For fixed windows, local region density differs from local region count only by the constant factor $(2r)^{-D_0}$. Accordingly, in the experiments we report exact local region counts and average them across random seeds and, for BN, across the chosen reference mini-batches when applicable.

\section{Theoretical Analysis of BN-Induced Partition Refinement}
\label{sec:bn-local-density-standard-linf}

We analyze the local affine partition induced by a fixed hidden layer in a baseline CPA network and in its training-time standard-BN counterpart. Unless stated otherwise, local neighborhoods are $\ell_\infty$ windows centered at the batch centroid. All statements concerning BN are understood \emph{conditional on a fixed mini-batch} $U$ and the corresponding batch statistics under the current parameters, equivalently conditional on $(\theta,U)$. Expectations are taken over the remaining randomness, including initialization, data order, SGD noise, and, for BN, the choice of the reference mini-batch used to freeze training-time statistics.

The analysis proceeds in two steps. First, we derive exact batch-conditional geometric identities: the BN switching sets are affine hyperplanes in representation space, and window-cut events admit an exact $\ell_\infty$ criterion. Second, we use these identities to formulate explicit \emph{sufficient conditions} under which BN yields larger expected local partition refinement. Thus, the exact part of the analysis is geometric, whereas the comparison results rely on additional stochastic-order and genericity assumptions.

The basic mechanism is the following. Inside a fixed local parent region, a new CPA layer can refine the partition only through breakpoint-switching hyperplanes that intersect the region. For a local window $B$, the relevant quantities are therefore (i) which breakpoint-switching hyperplanes cut $B$, and (ii) how the resulting cuts are arranged. In an $\ell_\infty$ window, the cut event is determined exactly by a normalized offset at the window center. BN is special because, after centering and scaling, each neuron admits a natural \emph{through-centroid reference hyperplane}; the BN breakpoint-switching hyperplanes are parallel translates of this reference hyperplane, with offsets expressed in batch-standardized coordinates.

\subsection{Hyperplane arrangements in representation space}
\label{subsec:layerwise-density-linf}

We work in the input representation space of a fixed hidden layer $\ell$. Let
\[
d:=D_{\ell-1},\qquad n:=D_\ell,\qquad Q:=K-1,
\]
and consider $\mathcal U_\ell\subseteq\mathbb R^d$. For the local arrangement analysis it is harmless to take $\mathcal U_\ell=\mathbb R^d$ and restrict attention to the relevant window.

Let
\[
U:=\{u^{(1)},\dots,u^{(M)}\}\subset\mathcal U_\ell
\]
be a mini-batch with centroid
\begin{equation}
\label{eq:batch-centroid-u}
\bar u:=\frac{1}{M}\sum_{i=1}^M u^{(i)}\in\mathbb R^d,
\end{equation}
and fix $r>0$. We write
\[
B:=B_\infty(\bar u,r)
\]
for the corresponding $\ell_\infty$ window and assume $B\subset\mathcal U_\ell$.

For neuron $j\in\mathcal J_\ell$, define the scalar pre-activation
\begin{equation}
\label{eq:preactivation-aj}
g_j(u):=\langle w_j,u\rangle+b_j,
\qquad
w_j\in\mathbb R^d\setminus\{0\},\ \ b_j\in\mathbb R,
\end{equation}
where the layer superscript is suppressed for readability. For each switching index $a=(j,q)\in\Lambda_\ell$, the corresponding breakpoint-switching hyperplane is
\begin{equation}
\label{eq:baseline-hyperplane-unified}
H_a\equiv H_{j,q}:=\{u\in\mathbb R^d:g_j(u)=\tau_q\}.
\end{equation}
The resulting arrangement at layer $\ell$ is
\begin{equation}
\label{eq:arrangement-unified}
\mathcal A_\ell:=\{H_a:a\in\Lambda_\ell\}
=
\{H_{j,q}:j\in\mathcal J_\ell,\ q\in\mathcal Q\}.
\end{equation}
In the ReLU case, $Q=1$, and we identify $a=(j,1)$ with $j$.

For any finite hyperplane family $\mathcal H$ in $\mathbb R^d$, let $\mathcal R(\mathcal H)$ denote the connected components of
\[
\mathbb R^d\setminus \bigcup_{H\in\mathcal H}H.
\]
The local region count induced by $\mathcal H$ inside $B$ is
\begin{equation}
\label{eq:layerwise-region-count}
N_{\mathrm{reg}}(\mathcal H,B)
:=
\#\Bigl\{\text{connected components of }B\setminus \textstyle\bigcup_{H\in\mathcal H}H\Bigr\}.
\end{equation}
When no arrangement cell meets $B$ only through $\partial B$, this is equivalently
\[
N_{\mathrm{reg}}(\mathcal H,B)
=
\#\{R\in\mathcal R(\mathcal H):R\cap \operatorname{int}(B)\neq\varnothing\}.
\]
We call a hyperplane a \emph{window cut} if it intersects $\operatorname{int}(B)$. The corresponding local region density is
\begin{equation}
\label{eq:layerwise-region-density}
\rho(\mathcal H;B)
:=
\frac{N_{\mathrm{reg}}(\mathcal H,B)}{\operatorname{vol}_d(B)}
=
\frac{N_{\mathrm{reg}}(\mathcal H,B)}{(2r)^d}.
\end{equation}

\begin{remark}[Layerwise viewpoint]
\label{rem:layerwise-viewpoint}
For deep CPA networks, the preimage in input space of a deep-layer switching hyperplane is generally a piecewise-affine hypersurface rather than a single hyperplane. Working in representation space keeps the switching sets affine and permits a standard hyperplane-arrangement analysis~\cite{book31,book32}.
\end{remark}

\begin{remark}[Window cuts]
\label{rem:window-cut-interior}
The use of $\operatorname{int}(B)$ excludes boundary-only contacts, which do not create new connected components inside $B$.
\end{remark}

\subsection{Geometry of batch-conditional switching hyperplanes}
\label{subsec:bn-hyperplanes}

For neuron $j\in\mathcal J_\ell$, the batch statistics of $g_j$ on $U$ are
\begin{equation}
\label{eq:bn-stats-scalar}
\mu_j:=\frac{1}{M}\sum_{i=1}^M g_j(u^{(i)}),
\qquad
v_j:=\frac{1}{M}\sum_{i=1}^M \bigl(g_j(u^{(i)})-\mu_j\bigr)^2.
\end{equation}
Training-time standard BN transforms $g_j$ into
\begin{equation}
\label{eq:bn-normalized-aj}
\widehat g_j(u):=\gamma_j\,\frac{g_j(u)-\mu_j}{\sqrt{v_j+\varepsilon}}+\beta_j,
\qquad \varepsilon>0,
\end{equation}
and throughout we assume $\gamma_j\neq 0$.

\begin{lemma}[Exact breakpoint-switching hyperplane under standard BN]
\label{lem:bn-hyperplane-exact}
Conditioned on $U$, for each $a=(j,q)\in\Lambda_\ell$ with $w_j\neq 0$ and $\gamma_j\neq 0$, the BN breakpoint-switching set $\{u:\widehat g_j(u)=\tau_q\}$ is the affine hyperplane
\begin{equation}
\label{eq:bn-hyperplane-unified}
H_a^{\mathrm{BN}}
\equiv H_{j,q}^{\mathrm{BN}}
=
\Bigl\{
u:\langle w_j,u\rangle=\langle w_j,\bar u\rangle+\delta_a\sqrt{v_j+\varepsilon}
\Bigr\},
\qquad
\delta_a\equiv\delta_{j,q}:=\frac{\tau_q-\beta_j}{\gamma_j}.
\end{equation}
In particular, the raw bias $b_j$ does not affect the offset of $H_a^{\mathrm{BN}}$ relative to $\bar u$, and $v_j$ is independent of $b_j$.
\end{lemma}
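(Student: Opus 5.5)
The proof is a direct computation, carried out conditionally on $U$, so that $\mu_j$ and $v_j$ are fixed scalars. The plan is to first compute $\mu_j$ and $v_j$ in terms of the centroid, then solve the equation $\widehat g_j(u)=\tau_q$ for $\langle w_j,u\rangle$, and finally read off the claims about $b_j$.

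For the batch statistics, linearity of $g_j$ gives
\[
\mu_j=\frac1M\sum_{i=1}^M\bigl(\langle w_j,u^{(i)}\rangle+b_j\bigr)=\langle w_j,\bar u\rangle+b_j=g_j(\bar u).
\]
Consequently $g_j(u^{(i)})-\mu_j=\langle w_j,u^{(i)}-\bar u\rangle$. This shows that
\[
v_j=\frac1M\sum_{i=1}^M\langle w_j,u^{(i)}-\bar u\rangle^2=w_j^\top\widehat\Sigma_U w_j,
\]
where $\widehat\Sigma_U$ is the empirical covariance of $U$. The expression does not involve $b_j$, which already establishes the final assertion of the lemma.

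For the switching set, substitute these identities into \eqref{eq:bn-normalized-aj}:
\[
\widehat g_j(u)=\gamma_j\,\frac{\langle w_j,u-\bar u\rangle}{\sqrt{v_j+\varepsilon}}+\beta_j .
\]
The bias cancels because $g_j(u)-\mu_j=\langle w_j,u-\bar u\rangle$. Setting $\widehat g_j(u)=\tau_q$, we divide by $\gamma_j\neq0$ and multiply by $\sqrt{v_j+\varepsilon}>0$; this is legitimate since $\varepsilon>0$ and $v_j\ge0$. The result is
\[
\langle w_j,u\rangle=\langle w_j,\bar u\rangle+\delta_{j,q}\sqrt{v_j+\varepsilon},
\]
which is exactly \eqref{eq:bn-hyperplane-unified}. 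Because $w_j\neq0$, this is the zero set of a nonconstant affine function, and hence a genuine affine hyperplane.

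Independence of the offset from $b_j$ then follows immediately. The right-hand side depends only on $w_j$, $\bar u$, $\gamma_j$, $\beta_j$, $\tau_q$, $\varepsilon$ and $v_j$, and $v_j$ has just been shown to be $b_j$-free.

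There is no real obstacle in this argument. The only point requiring care is the nondegeneracy bookkeeping: we need $\gamma_j\neq0$ for the division, $\varepsilon>0$ so that the square root is strictly positive even when $v_j=0$ (which can happen if $U$ is degenerate in direction $w_j$), and $w_j\neq0$ so that the set is a hyperplane rather than empty or all of $\mathbb R^d$.
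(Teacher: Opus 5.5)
Your proposal is correct and follows the paper's proof essentially step for step: you compute $\mu_j=\langle w_j,\bar u\rangle+b_j$, note that $v_j$ depends only on the centered projections $\langle w_j,u^{(i)}-\bar u\rangle$, and solve $\widehat g_j(u)=\tau_q$ by dividing by $\gamma_j$ and multiplying by $\sqrt{v_j+\varepsilon}$. Your explicit nondegeneracy bookkeeping ($\gamma_j\neq 0$, $\varepsilon>0$, $w_j\neq 0$) is a small but welcome addition that the paper leaves implicit.
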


\begin{proof}
Using \eqref{eq:batch-centroid-u},
\[
\mu_j
=
\frac{1}{M}\sum_{i=1}^M\bigl(\langle w_j,u^{(i)}\rangle+b_j\bigr)
=
\langle w_j,\bar u\rangle+b_j.
\]
Likewise,
\[
v_j
=
\frac{1}{M}\sum_{i=1}^M \bigl(g_j(u^{(i)})-\mu_j\bigr)^2
=
\frac{1}{M}\sum_{i=1}^M \bigl(\langle w_j,u^{(i)}\rangle-\langle w_j,\bar u\rangle\bigr)^2,
\]
so $v_j$ is shift-invariant and independent of $b_j$. The equation $\widehat g_j(u)=\tau_q$ is equivalent to
\[
g_j(u)=\mu_j+\frac{\tau_q-\beta_j}{\gamma_j}\sqrt{v_j+\varepsilon},
\]
and substituting $g_j(u)=\langle w_j,u\rangle+b_j$ and $\mu_j=\langle w_j,\bar u\rangle+b_j$ yields \eqref{eq:bn-hyperplane-unified}.
\end{proof}

\subsection{Exact window-cut criterion for \texorpdfstring{$\ell_\infty$}{linf} neighborhoods}
\label{subsec:intersection}

\begin{lemma}[$\ell_\infty$ window-cut criterion]
\label{lem:linf-intersection}
Let $w\in\mathbb R^d\setminus\{0\}$ and $c\in\mathbb R$. For $B=B_\infty(\bar u,r)$,
\begin{equation}
\label{eq:linf-intersection}
\{u:\langle w,u\rangle=c\}\cap \operatorname{int}(B)\neq\varnothing
\iff
|c-\langle w,\bar u\rangle|<r\|w\|_1.
\end{equation}
Moreover,
\begin{equation}
\label{eq:linf-closed-intersection}
\{u:\langle w,u\rangle=c\}\cap B\neq\varnothing
\iff
|c-\langle w,\bar u\rangle|\le r\|w\|_1.
\end{equation}
If $H:=\{u:\langle w,u\rangle=c\}$, then
\begin{equation}
\label{eq:l2-distance-hyperplane}
\mathrm{dist}_2(\bar u,H)=\frac{|c-\langle w,\bar u\rangle|}{\|w\|_2}.
\end{equation}
\end{lemma}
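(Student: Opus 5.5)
The plan is to reduce everything to the range of the linear functional $\phi(u):=\langle w,u\rangle$ over the box, after translating to the origin. Write $u=\bar u+s$ with $\|s\|_\infty\le r$ for $B$ and $\|s\|_\infty<r$ for $\operatorname{int}(B)=\{u:\|u-\bar u\|_\infty<r\}$. The hyperplane meets the set iff the value $c-\langle w,\bar u\rangle$ lies in the image of $s\mapsto\langle w,s\rangle$ over the corresponding set of $s$. So the key step is to compute these two images exactly.

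For the closed box, the image is $[-r\|w\|_1,\,r\|w\|_1]$. The bound $|\langle w,s\rangle|\le\|w\|_1\|s\|_\infty\le r\|w\|_1$ is Hölder's inequality. The extremes are attained at $s=\pm r\operatorname{sign}(w)$, using the coordinatewise sign with $\operatorname{sign}(0):=0$. Since the box is convex and $\phi$ is continuous, the image is an interval. This gives \eqref{eq:linf-closed-intersection}.

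For the open box, I will show the image is exactly the open interval $(-r\|w\|_1,\,r\|w\|_1)$. For the inclusion $\subseteq$: if $\|s\|_\infty<r$, then, since $w\neq0$, some $w_i\neq0$ and $|w_i s_i|<r|w_i|$, so the Hölder bound is strict. For $\supseteq$: given $|t|<r\|w\|_1$, take $s=\lambda r\operatorname{sign}(w)$ with $\lambda=t/(r\|w\|_1)\in(-1,1)$. Then $\|s\|_\infty=|\lambda|r<r$ and $\langle w,s\rangle=t$. This gives \eqref{eq:linf-intersection}. The only point needing care, and the mild obstacle, is the strictness argument for the open box. It uses $w\neq0$ and the observation that coordinates with $w_i=0$ contribute nothing.

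Finally, \eqref{eq:l2-distance-hyperplane} is the standard Euclidean distance formula. The orthogonal projection of $\bar u$ onto $H$ is $\bar u+\frac{c-\langle w,\bar u\rangle}{\|w\|_2^2}w$, which lies in $H$ and differs from $\bar u$ by a multiple of the normal $w$. For any $u\in H$, Cauchy–Schwarz gives $|c-\langle w,\bar u\rangle|=|\langle w,u-\bar u\rangle|\le\|w\|_2\|u-\bar u\|_2$, and equality is attained at the projection. This yields the formula.
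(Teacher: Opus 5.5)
Your proof is correct and takes the same route as the paper: translate to the centroid, then use H\"older duality to identify the range of $s\mapsto\langle w,s\rangle$ over the $\ell_\infty$ box, with the Euclidean distance formula handled as the standard one. Your version also fills in two steps the paper leaves implicit: that this range is the full interval (by convexity of the box), and the strictness argument using $w\neq 0$ that justifies passing from the closed cube to its interior.
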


\begin{proof}
Write $u=\bar u+v$ with $\|v\|_\infty\le r$. Then
\[
\langle w,u\rangle=\langle w,\bar u\rangle+\langle w,v\rangle.
\]
By H\"older duality,
\[
\max_{\|v\|_\infty\le r}\langle w,v\rangle=r\|w\|_1,
\qquad
\min_{\|v\|_\infty\le r}\langle w,v\rangle=-r\|w\|_1.
\]
This gives \eqref{eq:linf-closed-intersection}; replacing the closed cube by its interior gives \eqref{eq:linf-intersection}. Equation \eqref{eq:l2-distance-hyperplane} is the standard Euclidean distance formula.
\end{proof}

For each $a=(j,q)\in\Lambda_\ell$, define the window-cut indicators
\begin{equation}
\label{eq:indicators-unified}
I_a(r):=\mathbb 1\{H_a\cap \operatorname{int}(B)\neq\varnothing\},
\qquad
I_a^{\mathrm{BN}}(r):=\mathbb 1\{H_a^{\mathrm{BN}}\cap \operatorname{int}(B)\neq\varnothing\}.
\end{equation}
By Lemmas~\ref{lem:linf-intersection} and \ref{lem:bn-hyperplane-exact},
\begin{equation}
\label{eq:indicator-forms-unified}
\begin{aligned}
I_a(r)
&=
\mathbb 1\Bigl\{\bigl|\tau_q-(\langle w_j,\bar u\rangle+b_j)\bigr|<r\|w_j\|_1\Bigr\},\\
I_a^{\mathrm{BN}}(r)
&=
\mathbb 1\Bigl\{|\delta_a|\,\sqrt{v_j+\varepsilon}<r\|w_j\|_1\Bigr\}.
\end{aligned}
\end{equation}
Let
\begin{equation}
\label{eq:window-cut-counts-unified}
C(r):=\sum_{a\in\Lambda_\ell}I_a(r),
\qquad
C^{\mathrm{BN}}(r):=\sum_{a\in\Lambda_\ell}I_a^{\mathrm{BN}}(r)
\end{equation}
be the breakpoint-level window-cut counts, and define the normalized offsets
\begin{equation}
\label{eq:normalized-offsets-unified}
\Delta_a:=\frac{|\tau_q-(\langle w_j,\bar u\rangle+b_j)|}{\|w_j\|_1},
\qquad
\Delta_a^{\mathrm{BN}}:=\frac{|\delta_a|\sqrt{v_j+\varepsilon}}{\|w_j\|_1}.
\end{equation}
Then
\[
I_a(r)=\mathbb 1\{\Delta_a<r\},
\qquad
I_a^{\mathrm{BN}}(r)=\mathbb 1\{\Delta_a^{\mathrm{BN}}<r\}.
\]

\begin{remark}[General window centers]
\label{rem:general-window-centers}
For $u_0\in\mathbb R^d$ and $B_\infty(u_0,r):=\{u:\|u-u_0\|_\infty\le r\}$,
\[
I_a(u_0,r)
=
\mathbb 1\Bigl\{|\tau_q-(\langle w_j,u_0\rangle+b_j)|<r\|w_j\|_1\Bigr\},
\]
and
\[
I_a^{\mathrm{BN}}(u_0,r)
=
\mathbb 1\Bigl\{|\langle w_j,\bar u-u_0\rangle+\delta_a\sqrt{v_j+\varepsilon}|<r\|w_j\|_1\Bigr\}.
\]
The centroid-centered setting is the special case $u_0=\bar u$.
\end{remark}

\subsection{The through-centroid recentering mechanism}
\label{subsubsec:through-centroid}

Define the standardized BN coordinate
\begin{equation}
\label{eq:bn-standardized}
\widetilde g_j(u):=\frac{g_j(u)-\mu_j}{\sqrt{v_j+\varepsilon}}
=
\frac{\langle w_j,u\rangle+b_j-\mu_j}{\sqrt{v_j+\varepsilon}}.
\end{equation}
This isolates the centering-and-scaling part of BN from the learnable affine rescaling.

\begin{lemma}[Through-centroid hyperplane]
\label{lem:through-centroid}
Conditioned on $U$, the zero-level set of $\widetilde g_j$ is
\begin{equation}
\label{eq:centered-hyperplane}
H_j^\circ:=\{u:\widetilde g_j(u)=0\}
=
\{u:\langle w_j,u\rangle=\langle w_j,\bar u\rangle\},
\end{equation}
and hence $\bar u\in H_j^\circ$ for every neuron $j$. Moreover,
\begin{equation}
\label{eq:std-var}
\mathrm{Var}_{i\in[M]}\!\bigl(\widetilde g_j(u^{(i)})\bigr)
=
\frac{v_j}{v_j+\varepsilon}
<1.
\end{equation}
\end{lemma}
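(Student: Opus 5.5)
The plan is to reuse the batch-statistic identities already derived in the proof of Lemma~\ref{lem:bn-hyperplane-exact} and then do two direct computations.

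\textbf{Zero-level set.} By the proof of Lemma~\ref{lem:bn-hyperplane-exact}, $\mu_j=\langle w_j,\bar u\rangle+b_j$. Substituting this into \eqref{eq:bn-standardized} gives
\[
\widetilde g_j(u)=\frac{\langle w_j,u-\bar u\rangle}{\sqrt{v_j+\varepsilon}}.
\]
The bias cancels here. Since $\sqrt{v_j+\varepsilon}>0$ because $\varepsilon>0$, the equation $\widetilde g_j(u)=0$ is equivalent to $\langle w_j,u\rangle=\langle w_j,\bar u\rangle$. This is a genuine affine hyperplane because $w_j\neq 0$, which yields \eqref{eq:centered-hyperplane}. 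Plugging in $u=\bar u$ shows $\bar u\in H_j^\circ$ for every $j$.

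\textbf{Variance bound.} Here $\mathrm{Var}_{i\in[M]}$ is read as the same uniform empirical (biased, $1/M$) variance used in \eqref{eq:bn-stats-scalar}.

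First, the empirical mean of $\widetilde g_j(u^{(i)})$ is $\frac{1}{\sqrt{v_j+\varepsilon}}\cdot\frac1M\sum_i(g_j(u^{(i)})-\mu_j)=0$.

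Second, the empirical variance is therefore the mean square:
\[
\frac1M\sum_{i=1}^M \frac{(g_j(u^{(i)})-\mu_j)^2}{v_j+\varepsilon}=\frac{v_j}{v_j+\varepsilon}.
\]

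Third, the strict inequality follows from $v_j\ge 0$ and $\varepsilon>0$, since these give $v_j<v_j+\varepsilon$.

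There is no real obstacle in this argument. The only points needing care are stating that the variance convention matches \eqref{eq:bn-stats-scalar}, and noting that the degenerate case $v_j=0$ (all batch points lie on one level set of $w_j$) is still covered: the variance is then $0<1$ and the hyperplane formula is unaffected.
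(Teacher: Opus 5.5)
Your proposal is correct and follows the paper's proof essentially verbatim: substitute $\mu_j=\langle w_j,\bar u\rangle+b_j$ to identify the zero-level set, then compute the empirical variance directly. The only difference is that you spell out the variance step (zero empirical mean, and the $1/M$ convention matching \eqref{eq:bn-stats-scalar}) that the paper dismisses as immediate.
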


\begin{proof}
Since
\[
\mu_j=\frac{1}{M}\sum_{i=1}^M g_j(u^{(i)})=\langle w_j,\bar u\rangle+b_j,
\]
we have
\[
\widetilde g_j(u)=0
\iff
g_j(u)=\mu_j
\iff
\langle w_j,u\rangle+b_j=\langle w_j,\bar u\rangle+b_j,
\]
which proves \eqref{eq:centered-hyperplane}. The variance identity is immediate from the definition of $\widetilde g_j$.
\end{proof}

\begin{corollary}[Deterministic window cut for the reference hyperplane]
\label{cor:centered-intersects}
For every $r>0$ and every neuron $j$,
\[
H_j^\circ\cap \operatorname{int}(B_\infty(\bar u,r))\neq\varnothing.
\]
Equivalently, $I_j^\circ(r):=\mathbb 1\{H_j^\circ\cap \operatorname{int}(B)\neq\varnothing\}\equiv 1$.
\end{corollary}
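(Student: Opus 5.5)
The corollary follows directly from Lemma~\ref{lem:through-centroid} combined with the exact criterion of Lemma~\ref{lem:linf-intersection}.

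First, Lemma~\ref{lem:through-centroid} gives
\[
H_j^\circ=\{u:\langle w_j,u\rangle=\langle w_j,\bar u\rangle\}.
\]
This is a genuine affine hyperplane because $w_j\neq 0$ by \eqref{eq:preactivation-aj}. Second, we apply the open-window criterion \eqref{eq:linf-intersection} with $w=w_j$ and $c=\langle w_j,\bar u\rangle$. The left-hand quantity is then $|c-\langle w_j,\bar u\rangle|=0$. The right-hand quantity is $r\|w_j\|_1$, which is strictly positive since $r>0$ and $w_j\neq 0$. The strict inequality $0<r\|w_j\|_1$ therefore holds, and the criterion gives $H_j^\circ\cap\operatorname{int}(B_\infty(\bar u,r))\neq\varnothing$.

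A more elementary route is also available. By Lemma~\ref{lem:through-centroid}, $\bar u\in H_j^\circ$. Moreover, $\bar u\in\operatorname{int}(B_\infty(\bar u,r))$ for every $r>0$, because the open cube $\{u:\|u-\bar u\|_\infty<r\}$ is contained in the interior and contains its own center. Hence $\bar u$ itself witnesses the intersection. We would present this direct argument and mention the criterion-based one as a consistency check against \eqref{eq:indicator-forms-unified}, where it corresponds to setting $\delta_a=0$.

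Finally, the indicator statement $I_j^\circ(r)\equiv 1$ is a restatement of the nonempty intersection for all $r>0$ and all neurons $j$. It holds conditionally on $U$ and is uniform in the batch, the parameters, and $\gamma_j,\beta_j$, since none of these enter $H_j^\circ$. There is no real obstacle. The only point needing care is the strictness of the inequality: it requires $r>0$ and $w_j\neq0$, which is exactly why the statement uses the interior.
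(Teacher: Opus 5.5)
Your proposal is correct and matches the paper's reasoning: the corollary is read off from Lemma~\ref{lem:through-centroid}, because $\bar u\in H_j^\circ$ and $\bar u$ lies in the interior of $B_\infty(\bar u,r)$ for $r>0$; your criterion-based check via \eqref{eq:linf-intersection} at zero offset is an equivalent restatement. One small imprecision: the batch and $w_j$ do enter $H_j^\circ$, through $\bar u$ and the normal vector, so the uniformity comes from the window being centered at the same $\bar u$ through which $H_j^\circ$ passes, not from $H_j^\circ$ being independent of the batch.
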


\begin{remark}[Relation to BN breakpoint-switching hyperplanes]
\label{rem:standard-vs-centered}
For each $a=(j,q)\in\Lambda_\ell$, the BN breakpoint-switching hyperplane $H_a^{\mathrm{BN}}$ is a parallel translate of $H_j^\circ$. Its offset from the centroid is controlled by the standardized shift $\delta_a=(\tau_q-\beta_j)/\gamma_j$ and the batch scale $\sqrt{v_j+\varepsilon}$, and is independent of the raw bias $b_j$.
\end{remark}

\subsection{Single-layer sufficient conditions for local refinement}
\label{subsubsec:probabilistic-single-layer}

We now turn from exact geometry to comparison results. Throughout this subsection, probabilities and expectations are taken over the remaining randomness after conditioning on the fixed mini-batch $U$. We state the comparison theory first in the general CPA setting, where each neuron contributes a parallel family of breakpoint-switching hyperplanes. The ReLU case is then recovered as a specialization with $Q=1$.

For each $a\in\Lambda_\ell$, define the conditional window-cut probabilities
\begin{equation}
\label{eq:cut-probabilities-unified}
p_a(r):=\mathbb P(\Delta_a<r\mid U),
\qquad
p_a^{\mathrm{BN}}(r):=\mathbb P(\Delta_a^{\mathrm{BN}}<r\mid U).
\end{equation}

\begin{proposition}[Expected breakpoint-level window-cut counts]
\label{prop:expected-C}
For every $r>0$,
\[
\mathbb E[C(r)\mid U]=\sum_{a\in\Lambda_\ell} p_a(r),
\qquad
\mathbb E[C^{\mathrm{BN}}(r)\mid U]=\sum_{a\in\Lambda_\ell} p_a^{\mathrm{BN}}(r).
\]
Consequently, if $p_a^{\mathrm{BN}}(r)\ge p_a(r)$ for all $a\in\Lambda_\ell$, then
\[
\mathbb E[C^{\mathrm{BN}}(r)\mid U]\ge \mathbb E[C(r)\mid U].
\]
The same implication remains valid after taking an outer expectation over $(U,\theta)$.
\end{proposition}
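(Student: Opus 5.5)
The argument is linearity of conditional expectation applied to the indicator decompositions already set up in \eqref{eq:window-cut-counts-unified}. First, I recall that $\Lambda_\ell=\mathcal J_\ell\times\mathcal Q$ is a finite index set of cardinality $nQ$. The counts are sums of indicators: $C(r)=\sum_{a}I_a(r)$ and $C^{\mathrm{BN}}(r)=\sum_a I_a^{\mathrm{BN}}(r)$. By Lemmas~\ref{lem:linf-intersection} and \ref{lem:bn-hyperplane-exact}, these satisfy $I_a(r)=\mathbb 1\{\Delta_a<r\}$ and $I_a^{\mathrm{BN}}(r)=\mathbb 1\{\Delta_a^{\mathrm{BN}}<r\}$. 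Each indicator is bounded in $[0,1]$, so every conditional expectation exists. Since the sum is finite, no interchange-of-limits issue arises. Linearity then gives $\mathbb E[C(r)\mid U]=\sum_a\mathbb E[I_a(r)\mid U]$. Because $\mathbb E[\mathbb 1\{\Delta_a<r\}\mid U]=\mathbb P(\Delta_a<r\mid U)=p_a(r)$ by \eqref{eq:cut-probabilities-unified}, the first identity follows. The BN identity is proved the same way.

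One small point needs checking: $\Delta_a$ and $\Delta_a^{\mathrm{BN}}$ must be measurable functions of the underlying randomness, so that the conditional probabilities make sense. They are continuous functions of $(w_j,b_j,\gamma_j,\beta_j,U)$, since $w_j\neq0$ and $\gamma_j\neq0$ keep the denominators nonzero. Measurability is therefore routine, and I would note it in one line.

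For the comparison, I assume $p_a^{\mathrm{BN}}(r)\ge p_a(r)$ for every $a$ and sum these inequalities over the finite set $\Lambda_\ell$. Combined with the two identities, this gives $\mathbb E[C^{\mathrm{BN}}(r)\mid U]\ge\mathbb E[C(r)\mid U]$. For the outer expectation, I read the hypothesis as holding almost surely in $(U,\theta)$, or for every conditioning value. Taking expectations of the almost-sure inequality, monotonicity of expectation together with the tower property gives $\mathbb E[C^{\mathrm{BN}}(r)]\ge\mathbb E[C(r)]$. Integrability is automatic because $0\le C,C^{\mathrm{BN}}\le nQ$.

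There is no real obstacle here. The only delicate point is stating precisely in what sense the hypothesis must hold for the outer-expectation claim. A pointwise (almost-sure) per-index hypothesis suffices. The weaker requirement, that the summed inequality $\sum_a p_a^{\mathrm{BN}}\ge\sum_a p_a$ hold almost surely, would also suffice, and I would remark on this.
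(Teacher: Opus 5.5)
Your proposal is correct and is essentially the argument the paper relies on: the paper states this proposition without a written proof, treating it as immediate from linearity of conditional expectation applied to the finite indicator sums $C(r)=\sum_a I_a(r)$ and $C^{\mathrm{BN}}(r)=\sum_a I_a^{\mathrm{BN}}(r)$. Your handling of the outer expectation via the tower property and monotonicity of expectation, with the hypothesis read as holding almost surely in $(U,\theta)$, matches that reasoning; so does your remark that the summed inequality suffices.
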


\begin{definition}[First-order stochastic dominance]
\label{def:fosd}
For real-valued random variables $X$ and $Y$, write
\[
X\succeq_{\mathrm{st}}Y
\]
if $\mathbb P(X\le t)\le \mathbb P(Y\le t)$ for all $t\in\mathbb R$.
\end{definition}

\begin{corollary}[Cut-count domination from offset domination]
\label{cor:offset-fosd-unified}
Fix $r>0$ and condition on $U$. If
\[
\Delta_a\succeq_{\mathrm{st}}\Delta_a^{\mathrm{BN}}
\qquad\text{for all }a\in\Lambda_\ell,
\]
then
\[
\mathbb P(\Delta_a^{\mathrm{BN}}<r\mid U)\ge \mathbb P(\Delta_a<r\mid U)
\qquad\text{for all }a\in\Lambda_\ell,
\]
and hence
\[
\mathbb E[C^{\mathrm{BN}}(r)\mid U]\ge \mathbb E[C(r)\mid U].
\]
\end{corollary}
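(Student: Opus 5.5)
The plan is to pass from the non-strict inequalities in Definition~\ref{def:fosd} to the strict-inequality events $\{\Delta<r\}$ that define the cut indicators, and then to invoke Proposition~\ref{prop:expected-C}. Everything is conditional on $U$, so the dominance hypothesis is read as a statement about the conditional laws of $\Delta_a$ and $\Delta_a^{\mathrm{BN}}$ given $U$.

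\emph{Step 1 (from $\le t$ to $<r$).} Fix $a\in\Lambda_\ell$. By Definition~\ref{def:fosd}, $\mathbb P(\Delta_a\le t\mid U)\le \mathbb P(\Delta_a^{\mathrm{BN}}\le t\mid U)$ for every $t\in\mathbb R$. Take an increasing sequence $t_k\uparrow r$ with $t_k<r$. The events $\{\Delta\le t_k\}$ increase to $\{\Delta<r\}$, so continuity from below of the (conditional) probability measure gives $\mathbb P(\Delta\le t_k\mid U)\to\mathbb P(\Delta<r\mid U)$, applied to both $\Delta_a$ and $\Delta_a^{\mathrm{BN}}$. Passing to the limit in the inequality yields $p_a(r)\le p_a^{\mathrm{BN}}(r)$. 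This is the only point needing any care: the definition is stated with $\le$, while the cut indicators~\eqref{eq:indicators-unified} use strict inequality. The limiting argument avoids any assumption that there are no atoms at $r$.

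\emph{Step 2 (summation).} The inequality $p_a^{\mathrm{BN}}(r)\ge p_a(r)$ now holds for every $a\in\Lambda_\ell$. Proposition~\ref{prop:expected-C} expresses $\mathbb E[C(r)\mid U]$ and $\mathbb E[C^{\mathrm{BN}}(r)\mid U]$ as the finite sums $\sum_a p_a(r)$ and $\sum_a p_a^{\mathrm{BN}}(r)$. Summing the termwise inequalities over the finite set $\Lambda_\ell$ gives the claimed comparison of expected cut counts.

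I expect no real obstacle. The only subtlety is the strict-versus-weak inequality in Step 1, and the monotone-limit argument handles it.
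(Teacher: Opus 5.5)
Your proposal is correct and follows the route the paper leaves implicit (the corollary is stated without a separate proof): first-order dominance gives $p_a^{\mathrm{BN}}(r)\ge p_a(r)$ for each $a$, and summing via Proposition~\ref{prop:expected-C} compares the expected cut counts. Your continuity-from-below step, passing from the events $\{\Delta\le t_k\}$ to $\{\Delta<r\}$, correctly supplies the strict-versus-weak inequality detail that the paper does not spell out.
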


\begin{remark}[Offset domination is a sufficient assumption]
\label{rem:offset-plausible-not-automatic}
The condition $\Delta_a\succeq_{\mathrm{st}}\Delta_a^{\mathrm{BN}}$ should be viewed as a substantive sufficient assumption rather than a consequence of the preceding geometric identities. Its role is to formalize the regime in which BN offsets are typically smaller than their baseline counterparts. This is consistent with the fact that $\Delta_a$ depends directly on the raw bias, whereas $\Delta_a^{\mathrm{BN}}$ is bias-free and measured in batch-standardized units. In the ReLU case, the common initialization $\beta_j=0$ and $\gamma_j=1$ gives $\Delta_{(j,1)}^{\mathrm{BN}}=0$ at initialization.
\end{remark}

\begin{corollary}[Common-uniform coupling]
\label{cor:common-uniform-coupling}
Fix $r>0$ and condition on $U$. Suppose that, within each model, the indicator families
\[
\{I_a(r)\}_{a\in\Lambda_\ell}
\qquad\text{and}\qquad
\{I_a^{\mathrm{BN}}(r)\}_{a\in\Lambda_\ell}
\]
are conditionally independent across $a$, with success probabilities $p_a(r)$ and $p_a^{\mathrm{BN}}(r)$, respectively. If
\begin{equation}
\label{eq:probability-domination-unified}
p_a^{\mathrm{BN}}(r)\ge p_a(r)
\qquad\text{for all }a\in\Lambda_\ell,
\end{equation}
then there exists a coupling such that
\begin{equation}
\label{eq:indicator-field-domination-unified}
I_a^{\mathrm{BN}}(r)\ge I_a(r)
\qquad\text{almost surely for all }a\in\Lambda_\ell.
\end{equation}
\end{corollary}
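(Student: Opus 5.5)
The plan is the standard monotone (quantile) coupling of Bernoulli variables, built from one shared uniform variable per switching index. Because only the joint law of each indicator family matters (conditional on $U$), it suffices to build a new probability space carrying two families $\{\tilde I_a\}_{a\in\Lambda_\ell}$ and $\{\tilde I_a^{\mathrm{BN}}\}_{a\in\Lambda_\ell}$. Each must have the same conditional joint law as the original family and satisfy the pointwise domination \eqref{eq:indicator-field-domination-unified}.

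First I record that, by the conditional independence hypothesis, the joint law of $\{I_a(r)\}_{a\in\Lambda_\ell}$ given $U$ is the product of Bernoulli$(p_a(r))$ laws. Likewise, the law of $\{I_a^{\mathrm{BN}}(r)\}_{a\in\Lambda_\ell}$ is the product of Bernoulli$(p_a^{\mathrm{BN}}(r))$ laws. So the target laws are fully determined by the success probabilities in \eqref{eq:cut-probabilities-unified}.

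Second, I take i.i.d.\ random variables $\{V_a\}_{a\in\Lambda_\ell}$, each uniform on $(0,1)$ and independent of $U$; since $\Lambda_\ell$ is finite, this is an elementary product space. I then set
\[
\tilde I_a:=\mathbb 1\{V_a<p_a(r)\},\qquad \tilde I_a^{\mathrm{BN}}:=\mathbb 1\{V_a<p_a^{\mathrm{BN}}(r)\}.
\]
Each $\tilde I_a$ is Bernoulli$(p_a(r))$ and each $\tilde I_a^{\mathrm{BN}}$ is Bernoulli$(p_a^{\mathrm{BN}}(r))$. Within each family the coordinates are functions of distinct independent $V_a$, so they are independent. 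Hence each family reproduces the correct conditional joint law.

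Third, for domination: if $\tilde I_a=1$ then $V_a<p_a(r)\le p_a^{\mathrm{BN}}(r)$ by \eqref{eq:probability-domination-unified}, so $\tilde I_a^{\mathrm{BN}}=1$. Thus $\tilde I_a^{\mathrm{BN}}\ge \tilde I_a$ holds surely for every $a$, which is \eqref{eq:indicator-field-domination-unified}. As a byproduct, summing over $a$ gives $C^{\mathrm{BN}}(r)\ge C(r)$ almost surely under the coupling. This is consistent with Proposition~\ref{prop:expected-C}.

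The only genuine subtlety is interpretive rather than technical. The coupling preserves each model's marginal joint law but need not reproduce the original joint law between the BN and baseline families; that law is generally unspecified anyway, since the two are different networks. I will state explicitly that ``there exists a coupling'' refers to this construction, done conditionally on $U$. Because $p_a(r)$ and $p_a^{\mathrm{BN}}(r)$ are $U$-measurable, the construction is measurable in $U$, so it also works jointly after integrating over $U$.
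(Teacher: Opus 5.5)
Your proposal is correct and matches the paper's proof. Both take i.i.d.\ $\mathrm{Unif}(0,1)$ variables $V_a$ independent of $U$ and threshold the same $V_a$ at $p_a(r)$ and at $p_a^{\mathrm{BN}}(r)$, so the domination follows directly from $p_a^{\mathrm{BN}}(r)\ge p_a(r)$; your strict threshold $V_a<p_a(r)$ in place of the paper's $V_a\le p_a(r)$ makes no difference, since $V_a$ has a continuous law.
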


\begin{proof}
Let $\{V_a\}_{a\in\Lambda_\ell}$ be i.i.d.\ $\mathrm{Unif}(0,1)$ random variables, independent of $U$, and define
\[
\widetilde I_a(r):=\mathbb 1\{V_a\le p_a(r)\},
\qquad
\widetilde I_a^{\mathrm{BN}}(r):=\mathbb 1\{V_a\le p_a^{\mathrm{BN}}(r)\}.
\]
These variables have the required conditional laws, and \eqref{eq:probability-domination-unified} implies
\[
\widetilde I_a^{\mathrm{BN}}(r)\ge \widetilde I_a(r)
\qquad\text{almost surely for every }a\in\Lambda_\ell.
\]
\end{proof}

For general CPA activations, each neuron contributes a parallel family of breakpoint-switching hyperplanes,
\[
\{H_{j,q}:q\in\mathcal Q\}
\qquad\text{and}\qquad
\{H_{j,q}^{\mathrm{BN}}:q\in\mathcal Q\}.
\]
The natural combinatorial object is therefore the vector of family-wise cut counts
\begin{equation}
\label{eq:familywise-counts}
M_j(r):=\sum_{q\in\mathcal Q}I_{j,q}(r),
\qquad
M_j^{\mathrm{BN}}(r):=\sum_{q\in\mathcal Q}I_{j,q}^{\mathrm{BN}}(r),
\qquad
j\in\mathcal J_\ell,
\end{equation}
collected into
\[
\mathbf M(r):=(M_1(r),\dots,M_n(r)),
\qquad
\mathbf M^{\mathrm{BN}}(r):=(M_1^{\mathrm{BN}}(r),\dots,M_n^{\mathrm{BN}}(r)).
\]

\begin{assumption}[Window-stable multi-family arrangement]
\label{ass:window-stable-parallel}
Fix $B=B_\infty(\bar u,r)\subset\mathbb R^d$. For both $\mathcal A_\ell$ and $\mathcal A_\ell^{\mathrm{BN}}$, assume within $B$:
\begin{enumerate}
\item[(i)] for each neuron $j$, the window-cutting hyperplanes in its family are pairwise distinct and mutually parallel;
\item[(ii)] hyperplanes selected from distinct neuron families intersect transversely whenever they intersect, and no selection of $d+1$ such hyperplanes has a common point in $B$;
\item[(iii)] every nonempty intersection of at most $d$ selected window-cutting hyperplanes from distinct families lies inside $B_\infty(\bar u,r-\eta)$ for some $\eta\in(0,r)$.
\end{enumerate}
\end{assumption}

\begin{lemma}[Local region count from family-wise cut counts]
\label{lem:region-count-parallel-families}
Under Assumption~\ref{ass:window-stable-parallel},
\begin{equation}
\label{eq:parallel-family-region-count}
N_{\mathrm{reg}}(\mathcal A_\ell,B)
=
R_d^{\parallel}\!\bigl(\mathbf M(r)\bigr)
:=
\sum_{\substack{S\subseteq\mathcal J_\ell\\ |S|\le d}}
\prod_{j\in S} M_j(r),
\end{equation}
with the empty product equal to $1$. Likewise,
\begin{equation}
\label{eq:parallel-family-region-count-bn}
N_{\mathrm{reg}}(\mathcal A_\ell^{\mathrm{BN}},B)
=
R_d^{\parallel}\!\bigl(\mathbf M^{\mathrm{BN}}(r)\bigr).
\end{equation}
Moreover, $R_d^{\parallel}$ is coordinatewise nondecreasing on $\mathbb Z_{\ge 0}^n$. This is the standard insertion formula for hyperplane arrangements grouped into parallel classes~\cite{book31}.
\end{lemma}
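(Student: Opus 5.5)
The plan is to count regions by inserting the window-cutting hyperplanes one at a time and tracking the increment, so that the count reduces to the Euler/Zaslavsky-type insertion argument restricted to the convex set $\operatorname{int}(B)$. Since hyperplanes that do not cut $\operatorname{int}(B)$ leave the connected components of $B\setminus\bigcup H$ unchanged (by Remark~\ref{rem:window-cut-interior} and Lemma~\ref{lem:linf-intersection}), I will restrict to the window-cutting subfamily $\mathcal H_B=\{H_{j,q}: I_{j,q}(r)=1\}$, grouped into parallel classes $\mathcal H_j$ of sizes $M_j(r)$. The identity to establish is
\[
N_{\mathrm{reg}}(\mathcal H_B,B)=\sum_{k=0}^{d}\#\{\text{$k$-fold intersections of hyperplanes from distinct families meeting }\operatorname{int}(B)\},
\]
where a $k$-fold intersection counts $k$ hyperplanes from $k$ distinct families ($k=0$ contributes the single region $B$ itself). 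Given this, each choice of a set $S$ of $|S|=k\le d$ families and one hyperplane from each contributes exactly one intersection point set, which is nonempty and lies in $\operatorname{int}(B)$. Nonemptiness comes from transversality in Assumption~\ref{ass:window-stable-parallel}(ii): normals from distinct families in general position, so $k\le d$ of them have an intersection that is an affine subspace of dimension $d-k$. Membership in $\operatorname{int}(B)$ comes from (iii). This yields $\sum_{|S|\le d}\prod_{j\in S}M_j(r)$.

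To prove the identity, I will induct on the number of inserted hyperplanes. When $H$ is added to an arrangement $\mathcal H'$ inside the convex open set $\operatorname{int}(B)$, the number of regions increases by the number of regions of the induced arrangement $\{H\cap H': H'\in\mathcal H',\,H'\not\parallel H\}$ on the convex set $H\cap\operatorname{int}(B)$. Each region of $H\cap\operatorname{int}(B)$ splits exactly one old region into two, and regions are convex, so no other splitting occurs. Parallel hyperplanes from the same family are disjoint from $H$ by (i) and contribute nothing. By induction on dimension, applied within $H\cong\mathbb R^{d-1}$ with the window $H\cap\operatorname{int}(B)$, that number equals the count of intersections of $H$ with subsets of other-family hyperplanes meeting $\operatorname{int}(B)$. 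Summing the increments gives the identity. Assumption (ii) ensures that no $d+1$ hyperplanes share a point, so the sum truncates at $k=d$. Together with transversality, it also ensures that the induced arrangement on $H$ inherits (i)–(iii), which is what the induction needs.

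I expect the main obstacle to be this inheritance step, namely that the restricted window $H\cap\operatorname{int}(B)$ still satisfies the hypotheses. It is no longer a cube, so I would phrase the whole induction for an arbitrary open convex window $\Omega$. The hypothesis would be that every flat generated by distinct families either misses $\Omega$ or meets it, and the insertion formula is valid for any open convex $\Omega$. Condition (iii) then guarantees that every flat of codimension $\le d$ meets $\Omega$: it contains a point of $B_\infty(\bar u,r-\eta)\subset\operatorname{int}(B)$. This avoids boundary degeneracies, since only dimension counts on convex sets are used.

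The BN statement is identical, with $\mathcal A_\ell^{\mathrm{BN}}$ and $M_j^{\mathrm{BN}}(r)$ in place of $\mathcal A_\ell$ and $M_j(r)$. Finally, $R_d^{\parallel}$ is a polynomial with nonnegative coefficients in the nonnegative variables $M_j$, and each variable appears with degree at most one per monomial. Increasing one coordinate therefore changes each monomial by a nonnegative amount, which gives coordinatewise monotonicity.
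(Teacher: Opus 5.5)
Your argument follows the same route as the paper. The paper's proof only invokes the standard incremental insertion over parallel classes, after noting that hyperplanes in one family never meet. Your expansion of that sketch is sound: discarding non-cutting hyperplanes, using convexity of cells so that each piece of $H\cap\operatorname{int}(B)$ splits exactly one old cell, inducting over arbitrary open convex windows, and using multilinearity with nonnegative coefficients for monotonicity.

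One step does not follow from the hypotheses as written: your claim that every selection of $k\le d$ window-cutting hyperplanes from distinct families has nonempty intersection. Assumption~\ref{ass:window-stable-parallel}(ii) only demands transversality \emph{whenever they intersect}. It therefore allows two distinct neurons with parallel weight vectors, and then the product formula overcounts. Take $d=2$, $w_1=(1,0)$, $w_2=(2,0)$, with switching lines $\{u_1=0.1\}$ and $\{u_1=-0.1\}$ cutting $B_\infty(0,1)$. Under your reading of (iii), conditions (i)--(iii) all hold, since the pair intersection is empty and (ii)--(iii) are vacuous for it. The window has $3$ regions, yet $R_2^{\parallel}(1,1)=4$.

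Your insertion identity is still correct in this example: the number of regions equals the number of nonempty distinct-family flats meeting $\operatorname{int}(B)$. What breaks is the passage from that identity to $\sum_{|S|\le d}\prod_{j\in S}M_j(r)$. To close the gap, state explicitly that the normals $w_j$ of any $k\le d$ distinct cutting families are linearly independent. This is presumably what ``transverse'' is meant to encode, and the paper's two-line proof uses it tacitly as well. With that hypothesis stated, and your reading of (iii) as ``meets $B_\infty(\bar u,r-\eta)$'' rather than ``lies inside'', your proof is complete.
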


\begin{proof}
Hyperplanes from the same neuron family are parallel and do not intersect, so every relevant intersection selects at most one hyperplane from each family. The standard incremental insertion argument over parallel classes yields \eqref{eq:parallel-family-region-count} and \eqref{eq:parallel-family-region-count-bn}; coordinatewise monotonicity is immediate~\cite{book31}.
\end{proof}

\begin{definition}[Increasing stochastic order on count vectors]
\label{def:inc-order-vector}
For $\mathbb Z_{\ge0}^n$-valued random vectors $\mathbf X$ and $\mathbf Y$, write
\[
\mathbf X\succeq_{\mathrm{st}}^{\uparrow}\mathbf Y
\]
if
\[
\mathbb E[\varphi(\mathbf X)]\ge \mathbb E[\varphi(\mathbf Y)]
\]
for every bounded coordinatewise nondecreasing function $\varphi:\mathbb Z_{\ge0}^n\to\mathbb R$~\cite{book49}.
\end{definition}

\begin{theorem}[Single-layer local refinement under BN]
\label{thm:cpa-local-refinement-main}
Fix a hidden layer $\ell$ and a window $B=B_\infty(\bar u,r)\subset\mathbb R^d$, conditional on a fixed mini-batch $U$. Let
\[
\mathcal A_\ell=\{H_a:a\in\Lambda_\ell\},
\qquad
\mathcal A_\ell^{\mathrm{BN}}=\{H_a^{\mathrm{BN}}:a\in\Lambda_\ell\},
\]
and suppose Assumption~\ref{ass:window-stable-parallel} holds. If
\begin{equation}
\label{eq:cpa-familywise-st-order}
\mathbf M^{\mathrm{BN}}(r)\succeq_{\mathrm{st}}^{\uparrow}\mathbf M(r),
\end{equation}
then
\begin{equation}
\label{eq:cpa-regioncount-dominance}
\mathbb E\!\left[N_{\mathrm{reg}}(\mathcal A_\ell^{\mathrm{BN}},B)\mid U\right]
\ge
\mathbb E\!\left[N_{\mathrm{reg}}(\mathcal A_\ell,B)\mid U\right].
\end{equation}
Equivalently,
\begin{equation}
\label{eq:cpa-density-dominance}
\mathbb E\!\left[\rho(\mathcal A_\ell^{\mathrm{BN}};B)\mid U\right]
\ge
\mathbb E\!\left[\rho(\mathcal A_\ell;B)\mid U\right].
\end{equation}
\end{theorem}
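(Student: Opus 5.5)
The argument reduces region counts to the family-wise cut-count vectors via Lemma~\ref{lem:region-count-parallel-families}, and then applies the increasing stochastic order \eqref{eq:cpa-familywise-st-order} to the monotone map $R_d^{\parallel}$. Under Assumption~\ref{ass:window-stable-parallel}, imposed on both arrangements, the lemma gives the almost-sure identities
\[
N_{\mathrm{reg}}(\mathcal A_\ell,B)=R_d^{\parallel}(\mathbf M(r)),
\qquad
N_{\mathrm{reg}}(\mathcal A_\ell^{\mathrm{BN}},B)=R_d^{\parallel}(\mathbf M^{\mathrm{BN}}(r)),
\]
conditional on $U$. Taking conditional expectations reduces \eqref{eq:cpa-regioncount-dominance} to the inequality $\mathbb E[R_d^{\parallel}(\mathbf M^{\mathrm{BN}}(r))\mid U]\ge \mathbb E[R_d^{\parallel}(\mathbf M(r))\mid U]$.

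The one technical point is that Definition~\ref{def:inc-order-vector} covers only \emph{bounded} nondecreasing test functions, whereas $R_d^{\parallel}$ is a polynomial and hence unbounded on $\mathbb Z_{\ge0}^n$. Here this causes no difficulty. Each coordinate satisfies $M_j(r),M_j^{\mathrm{BN}}(r)\in\{0,1,\dots,Q\}$, because each neuron contributes only $Q$ breakpoint hyperplanes. Both random vectors therefore take values in the finite box $\{0,\dots,Q\}^n$.

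I would therefore use the truncated test function $\varphi(\mathbf m):=R_d^{\parallel}(\min(\mathbf m,Q\mathbf 1))$, with the minimum taken coordinatewise. This $\varphi$ is bounded, because it is a continuous function on a finite set of values. It is coordinatewise nondecreasing, since coordinatewise truncation is nondecreasing and $R_d^{\parallel}$ is nondecreasing by Lemma~\ref{lem:region-count-parallel-families}. It also coincides with $R_d^{\parallel}$ on the support of both vectors.

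The monotonicity of $R_d^{\parallel}$ is immediate from its form. It is a sum of products of nonnegative coordinates, so increasing any single coordinate $m_j$ only increases the terms with $j\in S$.

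Applying \eqref{eq:cpa-familywise-st-order} to $\varphi$ then gives $\mathbb E[\varphi(\mathbf M^{\mathrm{BN}}(r))\mid U]\ge\mathbb E[\varphi(\mathbf M(r))\mid U]$, which is exactly \eqref{eq:cpa-regioncount-dominance}. As a fallback, if one did not want to rely on boundedness, truncating at a level $K'$ and letting $K'\to\infty$ by monotone convergence would give the same conclusion. The density statement \eqref{eq:cpa-density-dominance} follows by dividing both sides by the deterministic constant $(2r)^d$, as in \eqref{eq:layerwise-region-density}.

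I expect the main obstacle to be bookkeeping rather than any substantive estimate. Specifically, one must make sure that Assumption~\ref{ass:window-stable-parallel} holds almost surely under the conditional law, so that the identity from Lemma~\ref{lem:region-count-parallel-families} is valid pathwise and not just for a fixed configuration. One must also make sure that the stochastic order is interpreted conditionally on $U$, so that it applies to the conditional expectations appearing in \eqref{eq:cpa-regioncount-dominance}.
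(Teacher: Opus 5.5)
Your proposal is correct and matches the paper's proof. Both use Lemma~\ref{lem:region-count-parallel-families} to reduce the two region counts to $R_d^{\parallel}$ evaluated at the family-wise cut-count vectors, apply the increasing stochastic order using that $R_d^{\parallel}$ is coordinatewise nondecreasing and bounded on $\{0,\dots,Q\}^n$, and divide by $(2r)^d$; your truncation $\varphi(\mathbf m)=R_d^{\parallel}(\min(\mathbf m,Q\mathbf 1))$ simply makes explicit the global boundedness required by Definition~\ref{def:inc-order-vector}, which the paper handles by noting boundedness on the support.
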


\begin{proof}
By Lemma~\ref{lem:region-count-parallel-families}, under Assumption~\ref{ass:window-stable-parallel},
\[
N_{\mathrm{reg}}(\mathcal A_\ell,B)
=
R_d^{\parallel}\!\bigl(\mathbf M(r)\bigr),
\qquad
N_{\mathrm{reg}}(\mathcal A_\ell^{\mathrm{BN}},B)
=
R_d^{\parallel}\!\bigl(\mathbf M^{\mathrm{BN}}(r)\bigr),
\]
where
\[
R_d^{\parallel}(m_1,\dots,m_n)
:=
\sum_{\substack{S\subseteq\mathcal J_\ell\\ |S|\le d}}
\prod_{j\in S} m_j.
\]
The same lemma states that $R_d^{\parallel}$ is coordinatewise nondecreasing on $\mathbb Z_{\ge0}^n$. Moreover, each component of $\mathbf M(r)$ and $\mathbf M^{\mathrm{BN}}(r)$ takes values in $\{0,1,\dots,Q\}$, so $R_d^{\parallel}$ is bounded on the support of these random vectors. Hence \eqref{eq:cpa-familywise-st-order} and Definition~\ref{def:inc-order-vector} imply
\[
\mathbb E\!\left[R_d^{\parallel}\!\bigl(\mathbf M^{\mathrm{BN}}(r)\bigr)\mid U\right]
\ge
\mathbb E\!\left[R_d^{\parallel}\!\bigl(\mathbf M(r)\bigr)\mid U\right].
\]
Substituting the region-count identities above yields \eqref{eq:cpa-regioncount-dominance}. Finally, since
\[
\rho(\mathcal H;B)
=
\frac{N_{\mathrm{reg}}(\mathcal H,B)}{\operatorname{vol}_d(B)}
=
\frac{N_{\mathrm{reg}}(\mathcal H,B)}{(2r)^d},
\]
division by the common constant $(2r)^d$ gives \eqref{eq:cpa-density-dominance}.
\end{proof}

\begin{corollary}[A sufficient route via breakpoint-level domination]
\label{cor:cpa-local-refinement-route}
Under the assumptions of Theorem~\ref{thm:cpa-local-refinement-main}, suppose that, within each model, the indicator families
\[
\{I_a(r)\}_{a\in\Lambda_\ell}
\qquad\text{and}\qquad
\{I_a^{\mathrm{BN}}(r)\}_{a\in\Lambda_\ell}
\]
are conditionally independent across $a$, and that
\begin{equation}
\label{eq:cpa-breakpoint-prob-dom}
p_a^{\mathrm{BN}}(r)\ge p_a(r)
\qquad\text{for all }a\in\Lambda_\ell.
\end{equation}
Then
\begin{equation}
\label{eq:cpa-familywise-as-dom}
M_j^{\mathrm{BN}}(r)\ge M_j(r)
\qquad\text{almost surely for all }j\in\mathcal J_\ell,
\end{equation}
and therefore
\[
\mathbf M^{\mathrm{BN}}(r)\succeq_{\mathrm{st}}^{\uparrow}\mathbf M(r).
\]
Consequently,
\[
\mathbb E\!\left[\rho(\mathcal A_\ell^{\mathrm{BN}};B)\mid U\right]
\ge
\mathbb E\!\left[\rho(\mathcal A_\ell;B)\mid U\right].
\]
In particular, the same conclusion holds if
\[
\Delta_a\succeq_{\mathrm{st}}\Delta_a^{\mathrm{BN}}
\qquad\text{for all }a\in\Lambda_\ell.
\]
\end{corollary}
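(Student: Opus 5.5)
The argument reduces to Theorem~\ref{thm:cpa-local-refinement-main}: it suffices to establish the increasing stochastic order \eqref{eq:cpa-familywise-st-order}, and I would obtain it from an almost-sure coupling. Conditional independence across $a$ together with \eqref{eq:cpa-breakpoint-prob-dom} is exactly the hypothesis of Corollary~\ref{cor:common-uniform-coupling}. That corollary gives i.i.d.\ uniforms $\{V_a\}$ and coupled indicators
\[
\widetilde I_a(r)=\mathbb 1\{V_a\le p_a(r)\},
\qquad
\widetilde I_a^{\mathrm{BN}}(r)=\mathbb 1\{V_a\le p_a^{\mathrm{BN}}(r)\},
\]
with $\widetilde I_a^{\mathrm{BN}}\ge\widetilde I_a$ almost surely. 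Because both families are conditionally independent Bernoulli families, each coupled family has the same joint conditional law as the original one.

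Next I would sum over $q\in\mathcal Q$ to form $\widetilde M_j:=\sum_q\widetilde I_{j,q}$ and $\widetilde M_j^{\mathrm{BN}}:=\sum_q\widetilde I^{\mathrm{BN}}_{j,q}$. Then $\widetilde M_j^{\mathrm{BN}}\ge\widetilde M_j$ almost surely for every $j$, which is the almost-sure statement \eqref{eq:cpa-familywise-as-dom} realized on the coupling space. The vectors $\widetilde{\mathbf M}$ and $\widetilde{\mathbf M}^{\mathrm{BN}}$ have the conditional laws of $\mathbf M(r)$ and $\mathbf M^{\mathrm{BN}}(r)$. Hence, for any bounded coordinatewise nondecreasing $\varphi$, we have $\varphi(\widetilde{\mathbf M}^{\mathrm{BN}})\ge\varphi(\widetilde{\mathbf M})$ pointwise. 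Taking conditional expectations gives $\mathbb E[\varphi(\mathbf M^{\mathrm{BN}})\mid U]\ge\mathbb E[\varphi(\mathbf M)\mid U]$, which is Definition~\ref{def:inc-order-vector}. Theorem~\ref{thm:cpa-local-refinement-main}, whose Assumption~\ref{ass:window-stable-parallel} is inherited, then yields the density inequality.

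For the final clause, Corollary~\ref{cor:offset-fosd-unified} converts $\Delta_a\succeq_{\mathrm{st}}\Delta_a^{\mathrm{BN}}$ into $p_a^{\mathrm{BN}}(r)\ge p_a(r)$ for all $a$. There is one point to check here. Definition~\ref{def:fosd} gives $\mathbb P(\Delta^{\mathrm{BN}}\le t)\ge\mathbb P(\Delta\le t)$, whereas the cut probabilities use strict inequalities. I would handle this by letting $t\uparrow r$ and using left limits, $\mathbb P(X<r)=\lim_{t\uparrow r}\mathbb P(X\le t)$. With that, the previous argument applies, still under the standing conditional-independence hypothesis.

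The step I expect to need the most care is the distinction between the literal almost-sure claim \eqref{eq:cpa-familywise-as-dom} and what the coupling actually delivers. The original random variables need not be pointwise ordered; they are ordered only on the coupled probability space. I would therefore state explicitly that \eqref{eq:cpa-familywise-as-dom} holds for the coupled versions. This suffices, because the stochastic order depends only on the marginal laws of $\mathbf M$ and $\mathbf M^{\mathrm{BN}}$, and the conclusion of Theorem~\ref{thm:cpa-local-refinement-main} depends only on those laws as well.
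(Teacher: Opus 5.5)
Your proposal is correct and follows the paper's proof essentially step for step. It uses the common-uniform coupling of Corollary~\ref{cor:common-uniform-coupling}, sums over $q\in\mathcal Q$ to get coordinatewise almost-sure domination of the family counts, deduces $\mathbf M^{\mathrm{BN}}(r)\succeq_{\mathrm{st}}^{\uparrow}\mathbf M(r)$ and the conclusion via Theorem~\ref{thm:cpa-local-refinement-main}, and uses Corollary~\ref{cor:offset-fosd-unified} for the final clause. Your two extra remarks are accurate and make explicit two points the paper's proof passes over silently: first, that \eqref{eq:cpa-familywise-as-dom} really concerns the coupled versions, which suffices because the stochastic order and the expected region counts depend only on the conditional laws; second, the left-limit argument that turns the non-strict inequality of Definition~\ref{def:fosd} into the strict cut events.
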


\begin{proof}
Under the conditional independence assumption and \eqref{eq:cpa-breakpoint-prob-dom}, Corollary~\ref{cor:common-uniform-coupling} yields a coupling such that
\[
I_a^{\mathrm{BN}}(r)\ge I_a(r)
\qquad\text{almost surely for all }a\in\Lambda_\ell.
\]
Fix $j\in\mathcal J_\ell$. Summing over $q\in\mathcal Q$ within the $j$th family gives
\[
M_j^{\mathrm{BN}}(r)
=
\sum_{q\in\mathcal Q} I_{j,q}^{\mathrm{BN}}(r)
\ge
\sum_{q\in\mathcal Q} I_{j,q}(r)
=
M_j(r)
\qquad\text{almost surely.}
\]
This proves \eqref{eq:cpa-familywise-as-dom}. Coordinatewise almost-sure domination implies increasing stochastic domination, so
\[
\mathbf M^{\mathrm{BN}}(r)\succeq_{\mathrm{st}}^{\uparrow}\mathbf M(r).
\]
The density comparison then follows from Theorem~\ref{thm:cpa-local-refinement-main}.

For the final statement, if
\[
\Delta_a\succeq_{\mathrm{st}}\Delta_a^{\mathrm{BN}}
\qquad\text{for all }a\in\Lambda_\ell,
\]
then Corollary~\ref{cor:offset-fosd-unified} implies
\[
p_a^{\mathrm{BN}}(r)\ge p_a(r)
\qquad\text{for all }a\in\Lambda_\ell.
\]
Applying the first part of the corollary completes the proof.
\end{proof}

\begin{corollary}[ReLU specialization]
\label{cor:relu-local-refinement-main}
Assume $Q=1$, fix a window $B=B_\infty(\bar u,r)\subset\mathbb R^d$, and condition on a fixed mini-batch $U$. Suppose that, inside $B$, the window-cutting hyperplanes in both the baseline and BN arrangements form simple arrangements with margin $\eta\in(0,r)$, in the sense that: (i) any $k\le d$ distinct window-cutting hyperplanes intersect either in the empty set or in codimension $k$; (ii) no $d+1$ such hyperplanes have a common point in $B$; and (iii) every nonempty intersection of at most $d$ such hyperplanes lies inside $B_\infty(\bar u,r-\eta)$. If
\[
C^{\mathrm{BN}}(r)\succeq_{\mathrm{st}} C(r),
\]
then
\[
\mathbb E\!\left[N_{\mathrm{reg}}(\mathcal A_\ell^{\mathrm{BN}},B)\mid U\right]
\ge
\mathbb E\!\left[N_{\mathrm{reg}}(\mathcal A_\ell,B)\mid U\right].
\]
Equivalently,
\[
\mathbb E\!\left[\rho(\mathcal A_\ell^{\mathrm{BN}};B)\mid U\right]
\ge
\mathbb E\!\left[\rho(\mathcal A_\ell;B)\mid U\right].
\]
\end{corollary}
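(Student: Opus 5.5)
The plan is to reduce the ReLU case to Lemma~\ref{lem:region-count-parallel-families} with $Q=1$, and then use the fact that the resulting region-count formula depends only on the scalar cut count $C(r)$. With $Q=1$, each neuron family $\{H_{j,1}\}$ is a single hyperplane. Condition (i) of Assumption~\ref{ass:window-stable-parallel} is therefore vacuous. Conditions (i)--(iii) of the corollary (simple arrangement with margin $\eta$) are exactly conditions (ii)--(iii) of the assumption, once we identify $a=(j,1)$ with $j$. So Lemma~\ref{lem:region-count-parallel-families} applies to both $\mathcal A_\ell$ and $\mathcal A_\ell^{\mathrm{BN}}$, giving $N_{\mathrm{reg}}(\mathcal A_\ell,B)=R_d^{\parallel}(\mathbf M(r))$ with $M_j(r)=I_j(r)\in\{0,1\}$, and similarly for BN.

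Next we use the symmetric structure of $R_d^{\parallel}$ on $0/1$ vectors. For $m\in\{0,1\}^n$, the product $\prod_{j\in S}m_j$ equals $1$ exactly when $S$ is a subset of the support of $m$. Hence
\[
R_d^{\parallel}(\mathbf M(r))=\sum_{k=0}^{d}\binom{C(r)}{k}=:\Phi_d\bigl(C(r)\bigr),
\]
since $|\operatorname{supp}\mathbf M(r)|=\sum_j I_j(r)=C(r)$. The same identity holds with $C^{\mathrm{BN}}(r)$ in place of $C(r)$. This recovers the classical Zaslavsky count for a simple arrangement of $C(r)$ hyperplanes in general position, localized to $B$ by the margin condition. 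The function $\Phi_d$ is nondecreasing on $\mathbb Z_{\ge0}$, because each binomial coefficient $\binom{c}{k}$ is nondecreasing in $c$. It is also bounded on $\{0,\dots,n\}$, where both counts live.

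Finally we invoke scalar first-order stochastic dominance. By Definition~\ref{def:fosd}, $C^{\mathrm{BN}}(r)\succeq_{\mathrm{st}}C(r)$ conditional on $U$ means the conditional CDFs are ordered. The standard characterization then gives $\mathbb E[\phi(C^{\mathrm{BN}}(r))\mid U]\ge\mathbb E[\phi(C(r))\mid U]$ for every bounded nondecreasing $\phi$. To make this self-contained, write
\[
\Phi_d(c)=\Phi_d(0)+\sum_{t=0}^{n-1}\bigl(\Phi_d(t+1)-\Phi_d(t)\bigr)\mathbb 1\{c>t\},
\]
where every increment is nonnegative. Taking conditional expectations gives
\[
\mathbb E[\Phi_d(C)\mid U]=\Phi_d(0)+\sum_{t=0}^{n-1}\bigl(\Phi_d(t+1)-\Phi_d(t)\bigr)\,\mathbb P(C>t\mid U),
\]
and the same expression with $C^{\mathrm{BN}}$. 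The dominance hypothesis gives $\mathbb P(C^{\mathrm{BN}}>t\mid U)\ge\mathbb P(C>t\mid U)$ for every $t$, which yields the region-count inequality. Dividing both sides by $(2r)^d$ gives the density form.

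The main obstacle is the identification step. Lemma~\ref{lem:region-count-parallel-families} must apply with the corollary's hypotheses alone, and the count must collapse to a function of the total $C(r)$ rather than of the vector $\mathbf M(r)$. The reduction is clean here because the corollary assumes only scalar dominance of $C$, not the vector order \eqref{eq:cpa-familywise-st-order}. Theorem~\ref{thm:cpa-local-refinement-main} could not be applied directly, since scalar dominance of the total count does not imply the vector order. We would therefore argue through the symmetric function $\Phi_d$ as above, rather than citing the theorem.
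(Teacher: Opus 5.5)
Your proposal is correct and follows the paper's argument. Both reduce the local count to $N_{\mathrm{reg}}=\sum_{k=0}^{d}\binom{C(r)}{k}$, note that this is nondecreasing in the scalar cut count, apply first-order stochastic dominance, and divide by $(2r)^d$. The differences are cosmetic: you obtain the count by collapsing $R_d^{\parallel}$ from Lemma~\ref{lem:region-count-parallel-families} on $\{0,1\}^n$ instead of citing the standard simple-arrangement formula, and you spell out the dominance step via the telescoping identity, which the paper takes as immediate.
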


\begin{proof}
When $Q=1$, each neuron contributes exactly one switching hyperplane, so
\[
M_j(r)=I_j(r)\in\{0,1\},
\qquad
M_j^{\mathrm{BN}}(r)=I_j^{\mathrm{BN}}(r)\in\{0,1\},
\]
and hence
\[
C(r)=\sum_{j=1}^n M_j(r),
\qquad
C^{\mathrm{BN}}(r)=\sum_{j=1}^n M_j^{\mathrm{BN}}(r).
\]
Under the stated simple-arrangement assumption, the standard region-count formula for a simple arrangement in a convex window gives
\[
N_{\mathrm{reg}}(\mathcal A_\ell,B)=R_d(C(r)),
\qquad
N_{\mathrm{reg}}(\mathcal A_\ell^{\mathrm{BN}},B)=R_d(C^{\mathrm{BN}}(r)),
\]
where
\[
R_d(m):=\sum_{k=0}^{d}\binom{m}{k}.
\]
Since $R_d$ is strictly increasing, the stochastic domination
\[
C^{\mathrm{BN}}(r)\succeq_{\mathrm{st}} C(r)
\]
implies
\[
\mathbb E\!\left[R_d(C^{\mathrm{BN}}(r))\mid U\right]
\ge
\mathbb E\!\left[R_d(C(r))\mid U\right].
\]
Substituting the region-count identities yields the first claim, and division by $(2r)^d$ yields the density comparison.
\end{proof}

\begin{corollary}[A sufficient route in the ReLU case]
\label{cor:relu-local-refinement-route}
Under the assumptions of Corollary~\ref{cor:relu-local-refinement-main}, suppose further that, within each model, the indicator families
\[
\{I_a(r)\}_{a\in\Lambda_\ell}
\qquad\text{and}\qquad
\{I_a^{\mathrm{BN}}(r)\}_{a\in\Lambda_\ell}
\]
are conditionally independent across $a$. If
\[
p_a^{\mathrm{BN}}(r)\ge p_a(r)
\qquad\text{for all }a\in\Lambda_\ell,
\]
then
\[
C^{\mathrm{BN}}(r)\succeq_{\mathrm{st}} C(r),
\]
and therefore
\[
\mathbb E\!\left[\rho(\mathcal A_\ell^{\mathrm{BN}};B)\mid U\right]
\ge
\mathbb E\!\left[\rho(\mathcal A_\ell;B)\mid U\right].
\]
In particular, the same conclusion holds if
\[
\Delta_a\succeq_{\mathrm{st}}\Delta_a^{\mathrm{BN}}
\qquad\text{for all }a\in\Lambda_\ell.
\]
\end{corollary}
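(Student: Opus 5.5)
The plan is to reduce this corollary to two results already in hand: the common-uniform coupling (Corollary~\ref{cor:common-uniform-coupling}) and the ReLU specialization (Corollary~\ref{cor:relu-local-refinement-main}). The argument runs exactly parallel to the proof of Corollary~\ref{cor:cpa-local-refinement-route}, except that the target is scalar first-order stochastic dominance of the total cut count $C(r)$, not the increasing order on the family vector. All probabilities and expectations are conditional on $U$ throughout.

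First, I invoke Corollary~\ref{cor:common-uniform-coupling}. Its hypotheses are conditional independence across $a$ within each model and $p_a^{\mathrm{BN}}(r)\ge p_a(r)$. From it I obtain, on a common probability space, indicator families $\widetilde I_a(r)=\mathbb 1\{V_a\le p_a(r)\}$ and $\widetilde I_a^{\mathrm{BN}}(r)=\mathbb 1\{V_a\le p_a^{\mathrm{BN}}(r)\}$. These satisfy three properties:
\begin{itemize}
\item each family has the same joint conditional law as the original family, because both are independent Bernoulli with the prescribed parameters;
\item $\widetilde I_a^{\mathrm{BN}}(r)\ge \widetilde I_a(r)$ almost surely;
\item summing over $a\in\Lambda_\ell$ (with $Q=1$, so $a$ is identified with $j$) gives $\widetilde C^{\mathrm{BN}}(r)\ge \widetilde C(r)$ almost surely.
\end{itemize}

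Second, I pass from this almost-sure domination of the coupled versions to stochastic dominance of the originals. For every $t$ we have $\{\widetilde C^{\mathrm{BN}}\le t\}\subseteq\{\widetilde C\le t\}$, hence $\mathbb P(\widetilde C^{\mathrm{BN}}\le t)\le \mathbb P(\widetilde C\le t)$. Since $\widetilde C$ and $C$ share a law, as do $\widetilde C^{\mathrm{BN}}$ and $C^{\mathrm{BN}}$, this gives $C^{\mathrm{BN}}(r)\succeq_{\mathrm{st}}C(r)$ in the sense of Definition~\ref{def:fosd}.

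Third, I apply Corollary~\ref{cor:relu-local-refinement-main}. Its simple-arrangement and margin assumptions are inherited by hypothesis, and it yields the expected region-count inequality. Dividing by $(2r)^d$ gives the density inequality. For the final \emph{in particular} clause, Corollary~\ref{cor:offset-fosd-unified} converts $\Delta_a\succeq_{\mathrm{st}}\Delta_a^{\mathrm{BN}}$ into $p_a^{\mathrm{BN}}(r)\ge p_a(r)$ for all $a$, and the argument above then applies. Note that $\mathbb P(\Delta<r)=\lim_{t\uparrow r}\mathbb P(\Delta\le t)$, so dominance of the non-strict CDFs transfers to the strict inequality by taking left limits.

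The only delicate step is the second one. The region count is a function of the original random arrangement, not of the coupled indicators. I avoid any measurability issue by working purely at the level of distributions. Corollary~\ref{cor:relu-local-refinement-main} only needs $C^{\mathrm{BN}}\succeq_{\mathrm{st}}C$ as a statement about marginal laws, and the coupling is used solely to certify that statement. Independence is what guarantees that $\widetilde C$ has the law of $C$, since both are then Poisson-binomial with parameters $\{p_a(r)\}$.
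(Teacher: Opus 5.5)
Your proposal is correct and follows the paper's proof: the paper likewise invokes Corollary~\ref{cor:common-uniform-coupling}, sums the dominated indicators to get $C^{\mathrm{BN}}(r)\ge C(r)$ almost surely and hence $C^{\mathrm{BN}}(r)\succeq_{\mathrm{st}}C(r)$, then concludes via Corollary~\ref{cor:relu-local-refinement-main}, with Corollary~\ref{cor:offset-fosd-unified} handling the final clause. Your proof also makes explicit two points the paper leaves implicit: the coupled counts have the same (Poisson-binomial) laws as the original counts, and taking left limits of the CDFs turns non-strict CDF domination into the strict-inequality cut probabilities.
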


\begin{proof}
By Corollary~\ref{cor:common-uniform-coupling}, the conditional independence assumption together with
\[
p_a^{\mathrm{BN}}(r)\ge p_a(r)
\qquad\text{for all }a\in\Lambda_\ell
\]
yields a coupling such that
\[
I_a^{\mathrm{BN}}(r)\ge I_a(r)
\qquad\text{almost surely for all }a\in\Lambda_\ell.
\]
Summing over $a\in\Lambda_\ell$ gives
\[
C^{\mathrm{BN}}(r)\ge C(r)
\qquad\text{almost surely,}
\]
hence
\[
C^{\mathrm{BN}}(r)\succeq_{\mathrm{st}} C(r).
\]
The density comparison then follows from Corollary~\ref{cor:relu-local-refinement-main}. For the final statement, Corollary~\ref{cor:offset-fosd-unified} gives
\[
p_a^{\mathrm{BN}}(r)\ge p_a(r)
\qquad\text{for all }a\in\Lambda_\ell
\]
whenever
\[
\Delta_a\succeq_{\mathrm{st}}\Delta_a^{\mathrm{BN}}
\qquad\text{for all }a\in\Lambda_\ell.
\]
This proves the claim.
\end{proof}

\subsection{Local transfer through parent affine regions}
\label{subsubsec:multilayer-extension}

We now transfer the single-layer mechanism through depth. Fix a hidden layer $\ell$ and a parent affine region $R$ of the prefix map
\[
g^{(\ell-1)}(x):=h^{(\ell-1)}(x)\in\mathbb R^d.
\]
Inside $R$, the prefix map is affine. If its linear part has full column rank, then it is an injective affine embedding of input space into a $D_0$-dimensional affine subspace of representation space, and connected-component counts can be transferred exactly between a representation-space window and its preimage in input space.

On $R$, write
\begin{equation}
\label{eq:parent-affine-map}
g^{(\ell-1)}(x)=A_Rx+d_R
\qquad
\text{for some }A_R\in\mathbb R^{d\times D_0},\ d_R\in\mathbb R^d.
\end{equation}
Assume throughout that
\begin{equation}
\label{eq:fullcolrank-multilayer}
\operatorname{rank}(A_R)=D_0.
\end{equation}
Define the affine image subspace
\begin{equation}
\label{eq:image-subspace-SR}
S_R:=d_R+\operatorname{Im}(A_R)\subset\mathbb R^d.
\end{equation}

At layer $\ell$, let
\[
\mathcal A_\ell=\{H_a:a\in\Lambda_\ell\},
\qquad
\mathcal A_\ell^{\mathrm{BN}}=\{H_a^{\mathrm{BN}}:a\in\Lambda_\ell\}.
\]
Fix
\begin{equation}
\label{eq:window-in-image-subspace}
B:=B_\infty(\bar u,r)\cap S_R,
\qquad
\bar u\in S_R,\ \ r>0,
\end{equation}
and assume $B\subset g^{(\ell-1)}(R)$. Define the preimage window
\begin{equation}
\label{eq:OmegaR}
\Omega_R:=\{x\in R:g^{(\ell-1)}(x)\in B\}.
\end{equation}

For each $a=(j,q)\in\Lambda_\ell$, define the pullback switching sets
\[
\widetilde H_{a,R}:=\{x\in R:g^{(\ell-1)}(x)\in H_a\},
\qquad
\widetilde H_{a,R}^{\mathrm{BN}}:=\{x\in R:g^{(\ell-1)}(x)\in H_a^{\mathrm{BN}}\},
\]
and the associated pullback families
\[
\widetilde{\mathcal A}_{\ell,R}:=\{\widetilde H_{a,R}:a\in\Lambda_\ell\},
\qquad
\widetilde{\mathcal A}_{\ell,R}^{\mathrm{BN}}:=\{\widetilde H_{a,R}^{\mathrm{BN}}:a\in\Lambda_\ell\}.
\]

For any hyperplane family $\mathcal H$, define the intrinsic connected-component count on $S_R$ by
\begin{equation}
\label{eq:intrinsic-region-count-SR}
N_{\mathrm{reg}}^{S_R}(\mathcal H,B)
:=
\#\Bigl\{\text{connected components of }B\setminus \bigcup_{H\in\mathcal H}(H\cap S_R)\Bigr\},
\end{equation}
and the corresponding intrinsic density
\begin{equation}
\label{eq:intrinsic-density-SR}
\rho_{S_R}(\mathcal H;B)
:=
\frac{N_{\mathrm{reg}}^{S_R}(\mathcal H,B)}{\operatorname{vol}_{D_0}^{S_R}(B)}.
\end{equation}
Likewise, for a pullback family $\widetilde{\mathcal H}$ on $\Omega_R$, define
\begin{equation}
\label{eq:pullback-density}
\widetilde\rho(\widetilde{\mathcal H};\Omega_R)
:=
\frac{N_{\mathrm{reg}}(\widetilde{\mathcal H},\Omega_R)}{\operatorname{vol}_{D_0}(\Omega_R)}.
\end{equation}

\begin{proposition}[Pullback geometry and component preservation]
\label{prop:pullback-geometry-component-preservation}
Under \eqref{eq:parent-affine-map}--\eqref{eq:fullcolrank-multilayer}, the following hold.
\begin{enumerate}
\item[(i)] For each $a=(j,q)\in\Lambda_\ell$, the pullback sets $\widetilde H_{a,R}$ and $\widetilde H_{a,R}^{\mathrm{BN}}$ are affine subsets of $\mathbb R^{D_0}$ intersected with $R$. More precisely, each is either empty, all of $R$, or an affine hyperplane intersected with $R$; in the nondegenerate case $A_R^\top w_j\neq 0$, it is a genuine affine hyperplane intersected with $R$.

\item[(ii)] The restriction
\[
g^{(\ell-1)}:\Omega_R\to B
\]
is a homeomorphism. Consequently,
\begin{equation}
\label{eq:component-equivalence}
N_{\mathrm{reg}}(\widetilde{\mathcal A}_{\ell,R};\Omega_R)
=
N_{\mathrm{reg}}^{S_R}(\mathcal A_\ell;B),
\qquad
N_{\mathrm{reg}}(\widetilde{\mathcal A}_{\ell,R}^{\mathrm{BN}};\Omega_R)
=
N_{\mathrm{reg}}^{S_R}(\mathcal A_\ell^{\mathrm{BN}};B).
\end{equation}

\item[(iii)] There exists a constant $J_R>0$, depending only on $A_R$, such that
\begin{equation}
\label{eq:jacobian-scaling}
\operatorname{vol}_{D_0}(\Omega_R)=J_R\,\operatorname{vol}_{D_0}^{S_R}(B).
\end{equation}
\end{enumerate}
\end{proposition}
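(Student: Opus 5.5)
We prove the three items in order, each by a direct computation with the affine representation \eqref{eq:parent-affine-map} and the rank condition \eqref{eq:fullcolrank-multilayer}.

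For (i), fix $a=(j,q)$. Both $H_a$ and $H_a^{\mathrm{BN}}$ have the form $\{u:\langle w_j,u\rangle=c\}$: for $H_a$ take $c=\tau_q-b_j$, and for $H_a^{\mathrm{BN}}$ take $c=\langle w_j,\bar u\rangle+\delta_a\sqrt{v_j+\varepsilon}$ by Lemma~\ref{lem:bn-hyperplane-exact}. On $R$, substituting $u=A_Rx+d_R$ gives
\[
\widetilde H=\{x\in R:\langle A_R^\top w_j,x\rangle=c-\langle w_j,d_R\rangle\}.
\]
If $A_R^\top w_j\neq0$, this is an affine hyperplane intersected with $R$. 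If $A_R^\top w_j=0$, the defining equation is constant in $x$, so the set is either all of $R$ or empty. No rank assumption is needed for this part.

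For (ii), full column rank makes $x\mapsto A_Rx+d_R$ an injective affine map from $\mathbb R^{D_0}$ onto $S_R$, with continuous affine inverse $u\mapsto A_R^{+}(u-d_R)$ on $S_R$. Since $B\subset g^{(\ell-1)}(R)$, every $u\in B$ equals $g^{(\ell-1)}(x)$ for some $x\in R$, and that $x$ lies in $\Omega_R$ by definition. So the restriction $\Omega_R\to B$ is a continuous bijection whose inverse is the restriction of $A_R^{+}(\cdot-d_R)$, hence a homeomorphism. For each $H\in\mathcal A_\ell$ we have $g^{(\ell-1)}(\widetilde H_{a,R}\cap\Omega_R)=H\cap S_R\cap B$, so the homeomorphism maps $\Omega_R\setminus\bigcup_a\widetilde H_{a,R}$ onto $B\setminus\bigcup_a(H_a\cap S_R)$. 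Homeomorphisms preserve connected components, which gives \eqref{eq:component-equivalence}. The argument for the BN family is identical.

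The main point to handle carefully is that the count $N_{\mathrm{reg}}(\cdot,\Omega_R)$ in \eqref{eq:layerwise-region-count} is defined for windows in ambient space, so it has to be read here as the number of components of $\Omega_R$ minus the pullback sets. With that reading the two counts are counts of the same topological object, transported by the homeomorphism.

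For (iii), the affine bijection $\mathbb R^{D_0}\to S_R$ has constant Jacobian $\sqrt{\det(A_R^\top A_R)}>0$ with respect to $D_0$-dimensional Hausdorff measure on $S_R$ (area formula). Therefore
\[
\operatorname{vol}_{D_0}^{S_R}(B)=\sqrt{\det(A_R^\top A_R)}\,\operatorname{vol}_{D_0}(\Omega_R),
\]
and $J_R:=\det(A_R^\top A_R)^{-1/2}$ depends only on $A_R$.
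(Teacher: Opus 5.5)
Your proof is correct and takes the same route as the paper. You substitute the parent affine map to obtain the pulled-back form $\langle A_R^\top w_j,x\rangle$, use injectivity together with $B\subset g^{(\ell-1)}(R)$ to get a homeomorphism $\Omega_R\to B$ that carries arrangement complements onto each other, and invoke a constant Jacobian for the volumes; beyond the paper, you usefully make explicit the inverse $A_R^{+}(\cdot-d_R)$ and the constant $J_R=\det(A_R^\top A_R)^{-1/2}$ from the area formula.
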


\begin{proof}
For item (i), on $R$ the layer-$\ell$ pre-activation takes the form
\[
\langle w_j,g^{(\ell-1)}(x)\rangle+b_j
=
\langle w_j,A_Rx+d_R\rangle+b_j
=
\langle A_R^\top w_j,x\rangle+\langle w_j,d_R\rangle+b_j.
\]
Thus $\widetilde H_{a,R}$ is the preimage of an affine hyperplane under an affine map; the same is true for $\widetilde H_{a,R}^{\mathrm{BN}}$ by Lemma~\ref{lem:bn-hyperplane-exact}. If $A_R^\top w_j=0$, the pullback is either empty or all of $R$.

For item (ii), \eqref{eq:fullcolrank-multilayer} implies that $x\mapsto A_Rx+d_R$ is injective. Since $B\subset g^{(\ell-1)}(R)$ by assumption, the restriction $g^{(\ell-1)}:\Omega_R\to B$ is an affine bijection and hence a homeomorphism. The complement of each pullback arrangement is exactly the preimage of the complement of the corresponding restricted arrangement, so connected components are preserved, which gives \eqref{eq:component-equivalence}.

For item (iii), the map $g^{(\ell-1)}:\mathbb R^{D_0}\to S_R$ is an affine bijection onto its image with linear part $A_R$ of full column rank. Therefore Lebesgue $D_0$-volume on $\mathbb R^{D_0}$ and intrinsic $D_0$-volume on $S_R$ differ by a constant Jacobian factor depending only on $A_R$, which yields \eqref{eq:jacobian-scaling}.
\end{proof}

\begin{theorem}[Local transfer of refinement through a parent affine region]
\label{thm:bn-local-density-mechanism-multilayer}
Under the setup above, assume further that no restricted switching hyperplane contains $S_R$:
\[
H_a\cap S_R\neq S_R,
\qquad
H_a^{\mathrm{BN}}\cap S_R\neq S_R
\qquad
\text{for all }a\in\Lambda_\ell.
\]
Assume also that the restricted families on $S_R$ satisfy Assumption~\ref{ass:window-stable-parallel} intrinsically, with ambient dimension replaced by $D_0$. Let
\[
\mathbf M_{S_R}(r),
\qquad
\mathbf M_{S_R}^{\mathrm{BN}}(r)
\]
denote the corresponding intrinsic family-wise cut-count vectors on $S_R$. If
\[
\mathbf M_{S_R}^{\mathrm{BN}}(r)\succeq_{\mathrm{st}}^{\uparrow}\mathbf M_{S_R}(r),
\]
then
\begin{equation}
\label{eq:intrinsic-transfer-density}
\mathbb E\!\left[\rho_{S_R}(\mathcal A_\ell^{\mathrm{BN}};B)\mid U\right]
\ge
\mathbb E\!\left[\rho_{S_R}(\mathcal A_\ell;B)\mid U\right].
\end{equation}
Consequently,
\begin{equation}
\label{eq:pullback-transfer-density}
\mathbb E\!\left[\widetilde\rho(\widetilde{\mathcal A}_{\ell,R}^{\mathrm{BN}};\Omega_R)\mid U\right]
\ge
\mathbb E\!\left[\widetilde\rho(\widetilde{\mathcal A}_{\ell,R};\Omega_R)\mid U\right].
\end{equation}
\end{theorem}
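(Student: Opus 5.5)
The plan is to reduce the statement to Theorem~\ref{thm:cpa-local-refinement-main}, applied intrinsically on the affine subspace $S_R$, and then to carry the conclusion back to input space with Proposition~\ref{prop:pullback-geometry-component-preservation}.

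\emph{Intrinsic coordinates.} Choose an affine isometry $\phi:\mathbb R^{D_0}\to S_R$, for instance $\phi(y)=\bar u+Py$ with $P$ having orthonormal columns spanning $\operatorname{Im}(A_R)$. Under $\phi$, every restricted hyperplane $H_a\cap S_R$ becomes
\[
\{y:\langle P^\top w_j,y\rangle=\tau_q-b_j-\langle w_j,\bar u\rangle\},
\]
and the BN version is $\{y:\langle P^\top w_j,y\rangle=\delta_a\sqrt{v_j+\varepsilon}\}$. By the non-containment hypothesis, each of these is either a genuine hyperplane in $\mathbb R^{D_0}$ or empty.
\begin{itemize}
\item If $P^\top w_j=0$, then non-containment forces the set to be empty. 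Such an index contributes no cut and can be dropped, or treated as a zero-count family.
\item Hyperplanes of a single neuron family share the normal $P^\top w_j$, so they remain parallel.
\end{itemize}
The image window $B=B_\infty(\bar u,r)\cap S_R$ becomes a convex body $\phi^{-1}(B)$. It is not an $\ell_\infty$ cube, and this is the main subtlety.

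\emph{Region-count identity on a convex window.} The proof of Lemma~\ref{lem:region-count-parallel-families} uses only three properties of $B$:
\begin{itemize}
\item convexity, so each cut hyperplane splits every region it crosses into two;
\item the genericity conditions (i)--(ii);
\item the margin condition (iii), which puts all relevant intersection flats strictly inside $B$.
\end{itemize}
Since the assumption is imposed intrinsically on $S_R$, I would rerun the incremental insertion argument verbatim in dimension $D_0$. The argument adds hyperplanes one at a time. Each new cut of family $j$ gains one region plus the number of regions of the induced $(D_0-1)$-dimensional arrangement on it; that induced arrangement meets only cuts from other families, and these intersections lie inside the window. This yields
\[
N^{S_R}_{\mathrm{reg}}(\mathcal A_\ell,B)=R^{\parallel}_{D_0}(\mathbf M_{S_R}(r)),
\]
and the analogous identity for BN.

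\emph{Comparison and transfer.} The function $R^{\parallel}_{D_0}$ is coordinatewise nondecreasing and bounded on $\{0,\dots,Q\}^n$. The hypothesis $\mathbf M^{\mathrm{BN}}_{S_R}\succeq^{\uparrow}_{\mathrm{st}}\mathbf M_{S_R}$ together with Definition~\ref{def:inc-order-vector} then gives the inequality for expected counts. Dividing by the deterministic positive volume $\operatorname{vol}^{S_R}_{D_0}(B)$, which is fixed given $U$ and the parent region, gives \eqref{eq:intrinsic-transfer-density}.

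For \eqref{eq:pullback-transfer-density}, I would use Proposition~\ref{prop:pullback-geometry-component-preservation}(ii) and (iii). Pullback counts equal intrinsic counts, and $\operatorname{vol}(\Omega_R)=J_R\operatorname{vol}^{S_R}(B)$. Hence
\[
\widetilde\rho=\rho_{S_R}/J_R,
\]
with $J_R>0$ common to both models, so the inequality transfers.

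The step I expect to be the main obstacle is justifying the region-count identity on the non-cubic window $\phi^{-1}(B)$. Should the margin and genericity conditions fail to transfer cleanly, I would state explicitly that Assumption~\ref{ass:window-stable-parallel} is read for the convex window $B$ in $S_R$ and check that the insertion argument needs only convexity. A second point to verify is that the expectation over remaining randomness leaves $R$, $A_R$, $S_R$ and $B$ fixed, which I would make explicit by conditioning on them as well as on $U$.
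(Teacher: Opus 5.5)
Your proposal is correct and follows the paper's proof step for step: the intrinsic parallel-family identity $N^{S_R}_{\mathrm{reg}}(\mathcal A_\ell,B)=R^{\parallel}_{D_0}(\mathbf M_{S_R}(r))$ (and its BN analogue), monotonicity of $R^{\parallel}_{D_0}$ under $\succeq^{\uparrow}_{\mathrm{st}}$, division by the common intrinsic volume, and transfer to $\Omega_R$ via Proposition~\ref{prop:pullback-geometry-component-preservation}(ii)--(iii); your explicit handling of the non-cubic window $\phi^{-1}(B)$, of zero projected normals $P^\top w_j=0$, and of what must be held fixed under the conditioning is extra care that the paper's proof leaves implicit. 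One small slip in your insertion sketch: a new cut adds exactly as many regions as its induced $(D_0-1)$-dimensional arrangement has, not one more, and this is why the increment of $R^{\parallel}_{D_0}$ when $M_j$ increases by one equals $R^{\parallel}_{D_0-1}$ evaluated on the remaining families.
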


\begin{proof}
Because no restricted switching hyperplane contains $S_R$, the restricted arrangements on $S_R$ are nontrivial. By the intrinsic version of Lemma~\ref{lem:region-count-parallel-families}, valid under Assumption~\ref{ass:window-stable-parallel} with ambient dimension replaced by $D_0$,
\[
N_{\mathrm{reg}}^{S_R}(\mathcal A_\ell,B)
=
R_{D_0}^{\parallel}\!\bigl(\mathbf M_{S_R}(r)\bigr),
\qquad
N_{\mathrm{reg}}^{S_R}(\mathcal A_\ell^{\mathrm{BN}},B)
=
R_{D_0}^{\parallel}\!\bigl(\mathbf M_{S_R}^{\mathrm{BN}}(r)\bigr),
\]
where
\[
R_{D_0}^{\parallel}(m_1,\dots,m_n)
:=
\sum_{\substack{S\subseteq\mathcal J_\ell\\ |S|\le D_0}}
\prod_{j\in S} m_j.
\]
As in Lemma~\ref{lem:region-count-parallel-families}, the map $R_{D_0}^{\parallel}$ is coordinatewise nondecreasing. Hence the increasing stochastic domination
\[
\mathbf M_{S_R}^{\mathrm{BN}}(r)\succeq_{\mathrm{st}}^{\uparrow}\mathbf M_{S_R}(r)
\]
implies
\[
\mathbb E\!\left[
R_{D_0}^{\parallel}\!\bigl(\mathbf M_{S_R}^{\mathrm{BN}}(r)\bigr)
\,\middle|\, U
\right]
\ge
\mathbb E\!\left[
R_{D_0}^{\parallel}\!\bigl(\mathbf M_{S_R}(r)\bigr)
\,\middle|\, U
\right].
\]
Dividing by the common intrinsic volume $\operatorname{vol}_{D_0}^{S_R}(B)$ yields \eqref{eq:intrinsic-transfer-density}.

For the transfer to input space, Proposition~\ref{prop:pullback-geometry-component-preservation}(ii) gives equality of the relevant connected-component counts, and Proposition~\ref{prop:pullback-geometry-component-preservation}(iii) shows that the corresponding denominators differ only by the same positive constant factor $J_R$. Therefore \eqref{eq:intrinsic-transfer-density} implies \eqref{eq:pullback-transfer-density}.
\end{proof}

\section{Experiments}
\label{sec:experiments}

We empirically examine the geometric mechanism developed in Section~\ref{sec:bn-local-density-standard-linf}. The experiments are organized at three levels. First, on low-dimensional problems where exact enumeration is feasible, we directly measure local affine-region counts and test the batch-conditional geometric statements appearing in the theory. Second, for deep networks, we separately evaluate local ingredients of the multilayer construction and the resulting global region counts. Third, on higher-dimensional datasets where exact counting is intractable, we use theory-aligned proxies and low-dimensional slices to assess whether similar geometric patterns remain observable.

\paragraph{Common protocol.}
Unless stated otherwise, BN is applied \emph{pre-activation}. All BN/non-BN comparisons use matched architectures, optimizers, learning-rate schedules, batch sizes, initialization protocols, and training horizons. Toy-data experiments are repeated over 10 random seeds, and real-data experiments over 5 seeds.

For experiments targeting the training-time statements in Section~\ref{sec:bn-local-density-standard-linf}, geometry is evaluated in the \emph{batch-conditional} setting: we fix a reference mini-batch and evaluate the induced CPA map under frozen batch statistics. When diagnostics additionally depend on the choice of reference batch, we average over multiple such batches per seed. Quantitative entries are reported as mean$\pm$std over matched seeds unless otherwise specified.

\subsection{Exact Local Region Counts in Single-Layer Networks}
\label{subsec:exp1}

We begin with the quantity that appears most directly in the single-layer analysis: the exact number of affine regions intersecting a local $\ell_\infty$ window,
\[
N_{\mathrm{reg}}(\Omega), \qquad \Omega = B_\infty(\bar x,r)\subset \mathbb{R}^2.
\]
For a fixed window, comparing $N_{\mathrm{reg}}(\Omega)$ is equivalent to comparing local region density, since the denominator $(2r)^2$ is common.

We train single-hidden-layer ReLU networks with widths $h\in\{32,64,128\}$ for 100 epochs using Adam (learning rate $10^{-4}$, batch size $64$) with Kaiming-uniform initialization. For BN models, standard BN is inserted before the ReLU. At each checkpoint, we compute $N_{\mathrm{reg}}(\Omega)$ by exact region enumeration.

We consider three two-dimensional classification datasets, each containing 200 samples, shown in Figure~\ref{fig:toy_datasets_overview}:
\begin{enumerate}
    \item \textbf{Gaussian Quantiles (5-class):} $\bar x=(0,0)$ and $\Omega_{\mathrm{Gauss}}=[-1,1]^2$ ($r=1$).
    \item \textbf{Two Moons (2-class):} $\bar x\approx(0.5,0.5)$ and $\Omega_{\mathrm{Moon}}=[-1,2]^2$ ($r=1.5$).
    \item \textbf{Random Uniform (2-class):} $\bar x=(2,2)$ and $\Omega_{\mathrm{Rand}}=[-0.5,4.5]^2$ ($r=2.5$).
\end{enumerate}
The Random Uniform dataset is included to reduce dependence on a particular low-dimensional class-manifold structure.

\begin{figure}[t]
    \centering
    \includegraphics[width=0.95\linewidth]{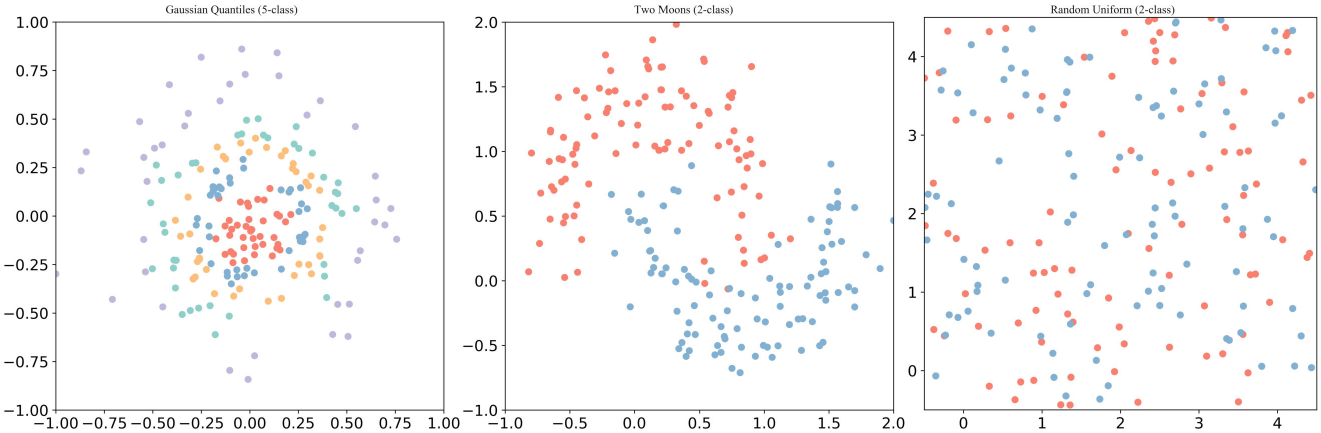}
    \caption{The three two-dimensional datasets used in the local-region experiments: Gaussian Quantiles, Two Moons, and Random Uniform. The plots illustrate the data geometries underlying the centroid-centered evaluation windows used in the experiments.}
    \label{fig:toy_datasets_overview}
\end{figure}

Table~\ref{tab:exp1_region_counts} reports the exact local region counts at epoch 100. Across all datasets and widths, BN models exhibit larger local region counts inside the same centroid-anchored window. The gap also increases with width in these experiments.

\begin{table}[t]
\caption{Exact local region counts in single-layer networks at epoch 100. Entries are reported as mean$\pm$std over 10 matched random seeds.}
\label{tab:exp1_region_counts}
\centering
\begin{tabular}{llccc}
\toprule
\textbf{Dataset} & \textbf{Model} & $\mathbf{h=32}$ & $\mathbf{h=64}$ & $\mathbf{h=128}$ \\
\midrule
\multirow{2}{*}{Gaussian Quantiles}
& non-BN & $186 \pm 3$ & $507 \pm 8$ & $2,238 \pm 16$ \\
& BN     & $\mathbf{447 \pm 8}$ & $\mathbf{1,720 \pm 12}$ & $\mathbf{6,552 \pm 13}$ \\
\midrule
\multirow{2}{*}{Two Moons}
& non-BN & $273 \pm 5$ & $1,138 \pm 8$ & $3,968 \pm 16$ \\
& BN     & $\mathbf{433 \pm 2}$ & $\mathbf{1,712 \pm 12}$ & $\mathbf{6,784 \pm 6}$ \\
\midrule
\multirow{2}{*}{Random Uniform}
& non-BN & $267 \pm 1$ & $679 \pm 5$ & $2,953 \pm 8$ \\
& BN     & $\mathbf{452 \pm 4}$ & $\mathbf{1,668 \pm 7}$ & $\mathbf{5,830 \pm 4}$ \\
\bottomrule
\end{tabular}
\end{table}

To examine the training-time evolution of the local counts, we track $N_{\mathrm{reg},t}(\Omega)$ over epochs for three pre-specified centroid/window configurations. Figure~\ref{fig:exp1_regioncount_over_epochs} shows that the BN curves remain above the non-BN curves across all three configurations throughout training. Figure~\ref{fig:exp1_visual_partition} shows a representative partition visualization for one of the two-dimensional tasks. The visible partition inside the evaluation window is qualitatively consistent with the count differences reported above.

\begin{figure}[t]
    \centering
    \includegraphics[width=0.64\linewidth]{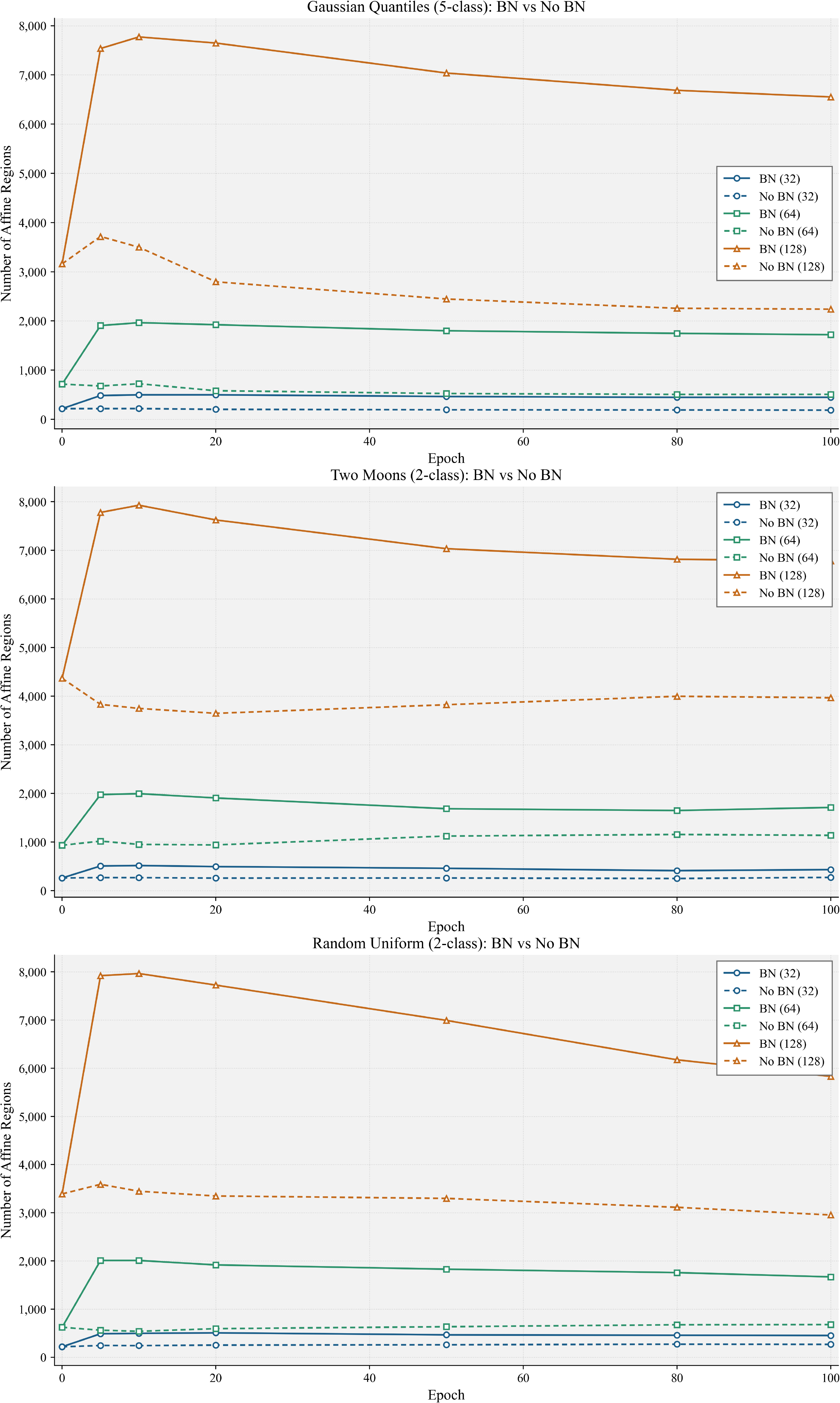}
    \caption{Training dynamics of exact local region counts in single-layer networks. We plot $N_{\mathrm{reg},t}(\Omega)$ over epochs for three window configurations under matched BN and non-BN training. In these experiments, the BN curves remain above the non-BN curves across the training horizon.}
    \label{fig:exp1_regioncount_over_epochs}
\end{figure}

\begin{figure}[t]
    \centering
    \includegraphics[width=0.95\linewidth]{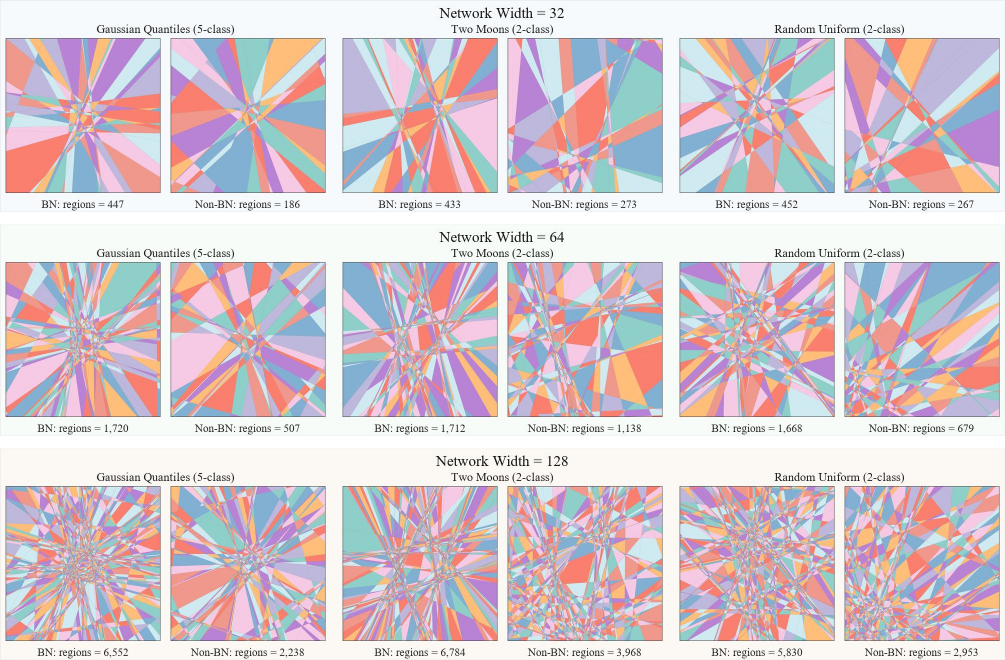}
    \caption{Representative single-layer partition visualization on a two-dimensional task. The evaluation window $\Omega$ is highlighted. The BN model induces a finer visible partition inside the displayed neighborhood, consistent with the exact counts reported above.}
    \label{fig:exp1_visual_partition}
\end{figure}

Overall, this subsection provides a direct empirical comparison of the local quantity studied in the single-layer theory.

\subsection{Validation of Batch-Conditional Geometric Statements}
\label{subsec:mechanism_validation_proximity_cutrate}

We next test, under fixed reference batches, the geometric statements that appear in the training-time analysis. Specifically, this subsection examines whether the intermediate geometric ingredients described in Section~\ref{sec:bn-local-density-standard-linf} are observed in the evaluated models.

\paragraph{Bias-decoupled recentering (Lemma~\ref{lem:bn-hyperplane-exact}).}
Conditioned on a fixed mini-batch $U$, the baseline normalized offset depends on $w^\top \bar u+b$, whereas the BN offset is invariant to shifts in the raw bias $b$. We test this on Two Moons and Gaussian Quantiles in two ways.

First, we compute the Pearson correlation across neurons between offset magnitude and $|b|$. Non-BN offsets show stronger dependence on $|b|$, whereas the corresponding BN correlations are markedly smaller (Figure~\ref{fig:bn_bias_corr_bars}).

Second, we perform an explicit invariance check. Holding $(W,\gamma,\beta)$ and the reference batch $U$ fixed, we shift the layer bias by $b\leftarrow b+c\mathbf 1$ and recompute the offsets. Figure~\ref{fig:bn_bias_perturb_sanity} shows that this perturbation changes non-BN offsets substantially, whereas BN offsets remain unchanged up to numerical tolerance. This behavior is consistent with Lemma~\ref{lem:bn-hyperplane-exact}.

\begin{figure}[t]
  \centering
  \includegraphics[width=0.45\linewidth]{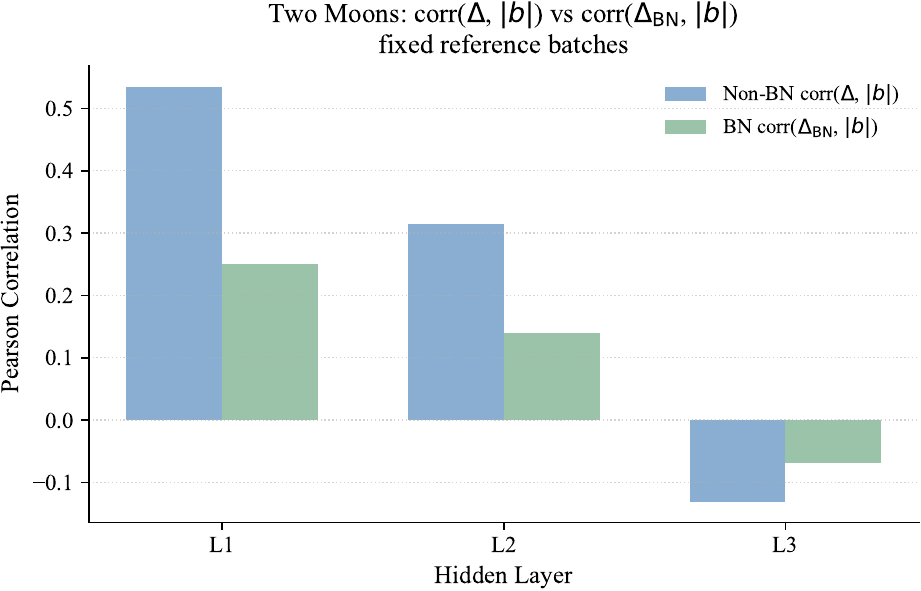}\hfill
  \includegraphics[width=0.45\linewidth]{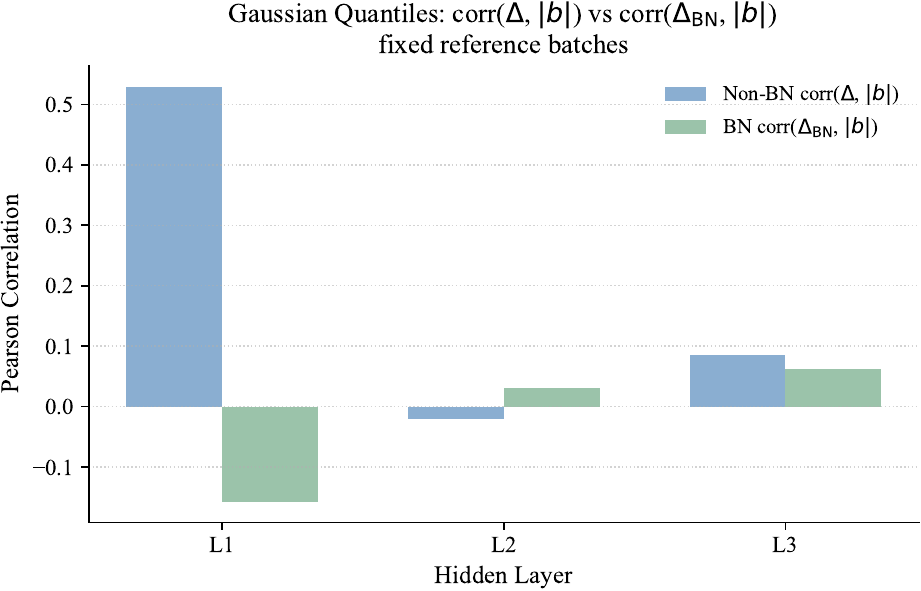}
  \caption{Bias-decoupling diagnostic under fixed reference batches. We plot layerwise Pearson correlations between normalized offsets and raw-bias magnitudes. Non-BN offsets depend more strongly on $|b|$, while BN correlations are substantially reduced.}
  \label{fig:bn_bias_corr_bars}
\end{figure}

\begin{figure}[t]
  \centering
  \includegraphics[width=0.45\linewidth]{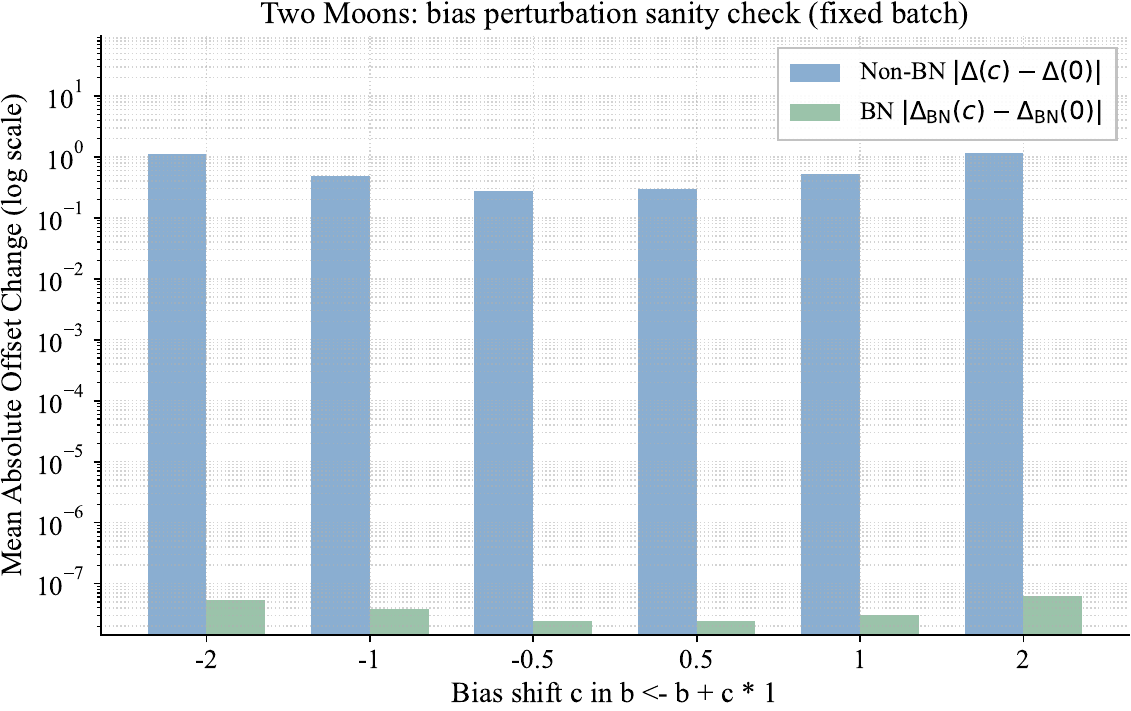}\hfill
  \includegraphics[width=0.45\linewidth]{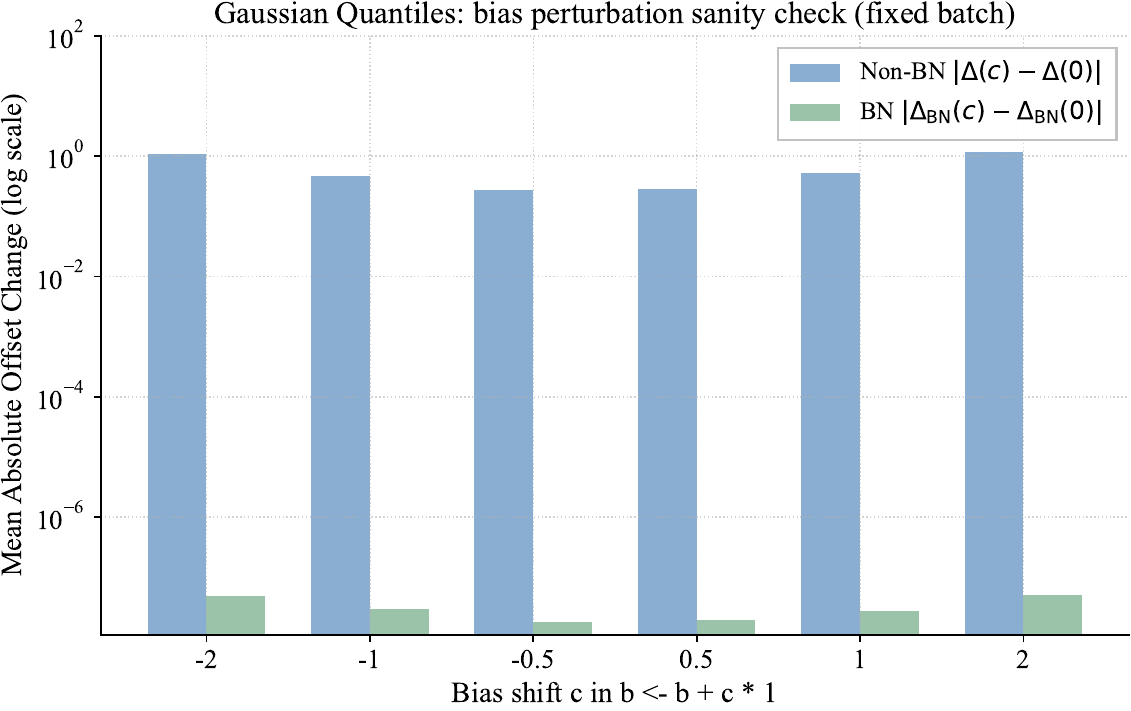}
  \caption{Explicit bias-shift invariance under fixed reference batches. After applying $b\leftarrow b+c\mathbf 1$ while holding $(W,\gamma,\beta)$ and $U$ fixed, non-BN offsets change substantially, whereas BN offsets remain invariant up to numerical precision.}
  \label{fig:bn_bias_perturb_sanity}
\end{figure}

\paragraph{Through-centroid structure and parallel translates (Lemma~\ref{lem:through-centroid} and Remark~\ref{rem:standard-vs-centered}).}
We next test the deterministic statement that the through-centroid reference hyperplanes $H_j^\circ:\langle w_j,u\rangle=\langle w_j,\bar u\rangle$ pass through the batch centroid $\bar u$, and that the BN switching hyperplanes are parallel translates of $H_j^\circ$. Figure~\ref{fig:through_centroid_vs_bn_lines} visualizes this relation for representative neurons in the first hidden layer. The measured residuals are at numerical precision, which supports the stated geometric description.

\begin{figure}[t]
  \centering
  \includegraphics[width=0.45\linewidth]{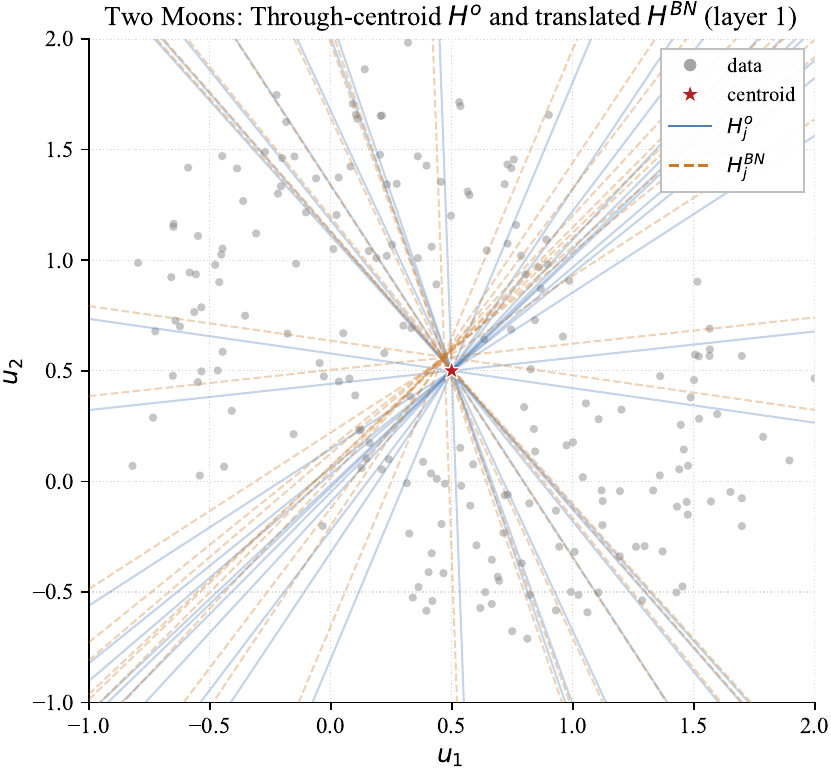}\hfill
  \includegraphics[width=0.45\linewidth]{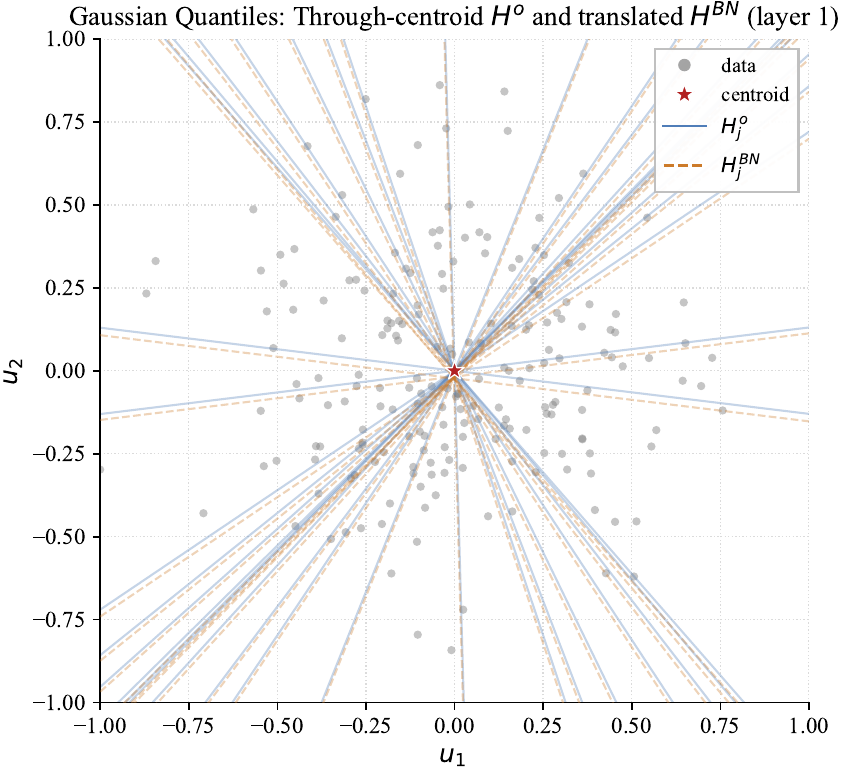}
  \caption{Training-time batch-conditional hyperplanes under a fixed reference batch. The through-centroid reference hyperplanes $H_j^\circ$ pass through the batch centroid $\bar u$, while the BN switching hyperplanes $H_j^{\mathrm{BN}}$ are parallel translates.}
  \label{fig:through_centroid_vs_bn_lines}
\end{figure}

\paragraph{Exact $\ell_\infty$ window-cut criterion (Lemma~\ref{lem:linf-intersection}).}
The theory reduces the interior hyperplane--window intersection event to the threshold condition $\mathbf{1}\{\Delta<r\}$. We compare two implementations of the cut event: (i) the theoretical normalized-offset criterion, and (ii) an explicit geometric box-intersection test. Across neurons and radii, the two procedures agree exactly in our evaluation (Figure~\ref{fig:linf_equivalence_sanity}), confirming that the measured cut events match the theoretical criterion.

\begin{figure}[t]
  \centering
  \includegraphics[width=0.65\linewidth]{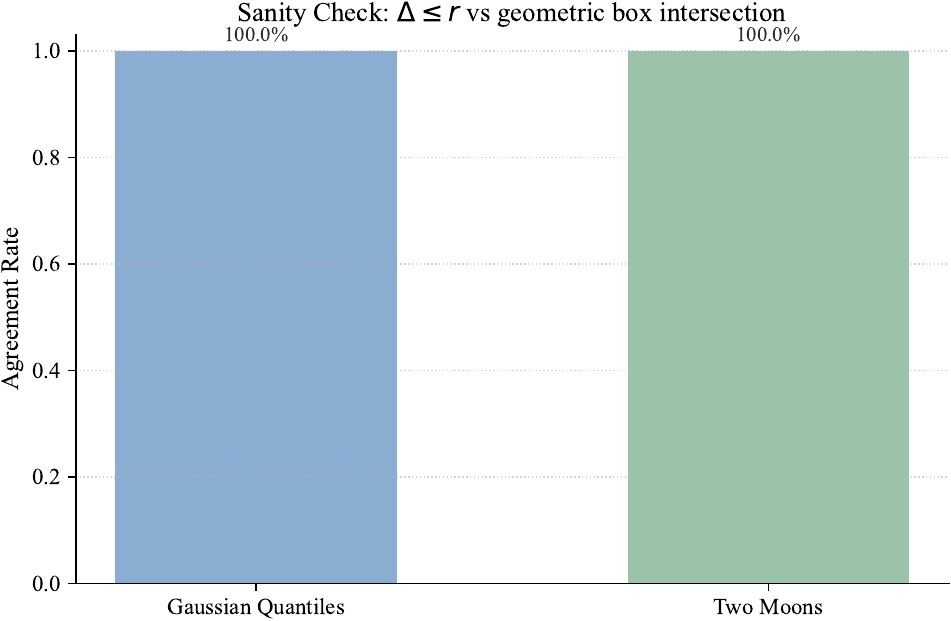}
  \caption{Exact $\ell_\infty$ window-cut criterion under fixed reference batches. The normalized-offset test matches explicit hyperplane--box intersection checks on both datasets.}
  \label{fig:linf_equivalence_sanity}
\end{figure}

\paragraph{Cross-trial CDF diagnostics in training-time geometry.}
To assess the offset-ordering premise used in the comparison results, we compare empirical CDFs of normalized offsets under non-BN and training-conditional BN across 30 reference batches and three checkpoints (epochs 0, 50, and 100). For each layer we report the one-sided KS statistic $D_\ell^+ := \sup_r \bigl(F_\ell^{\mathrm{BN}}(r)-F_\ell^{\mathrm{NonBN}}(r)\bigr)$ together with Wasserstein distance and CDF-area difference. In our experiments, the BN CDF is consistently shifted toward smaller offsets, and the magnitude of this shift tends to increase over training (Figure~\ref{fig:ks_dplus_and_area_traj}). This pattern is consistent with the offset-ordering direction appearing in Section~\ref{sec:bn-local-density-standard-linf}.

\begin{figure}[t]
  \centering
  \includegraphics[width=0.45\linewidth]{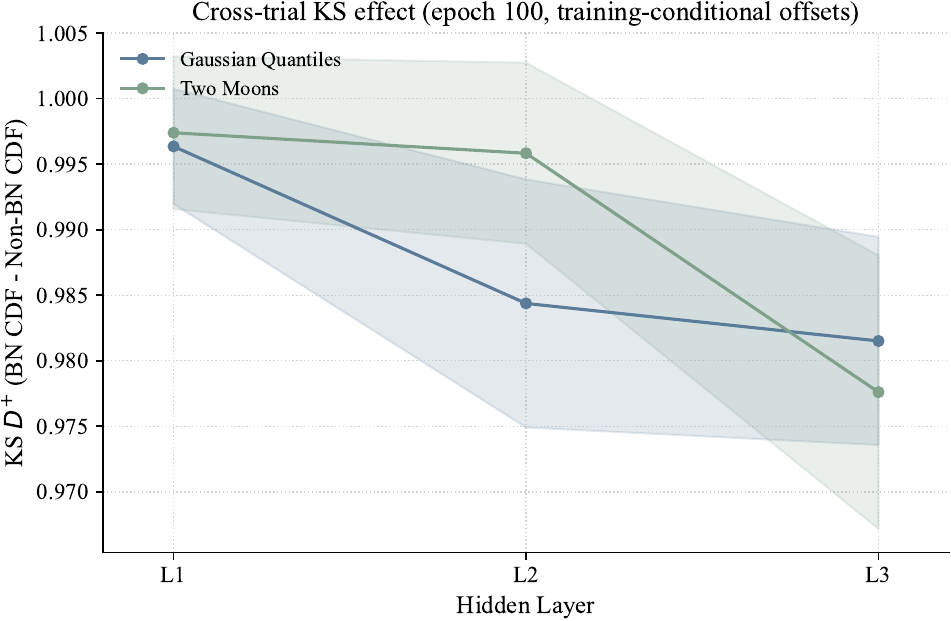}\hfill
  \includegraphics[width=0.45\linewidth]{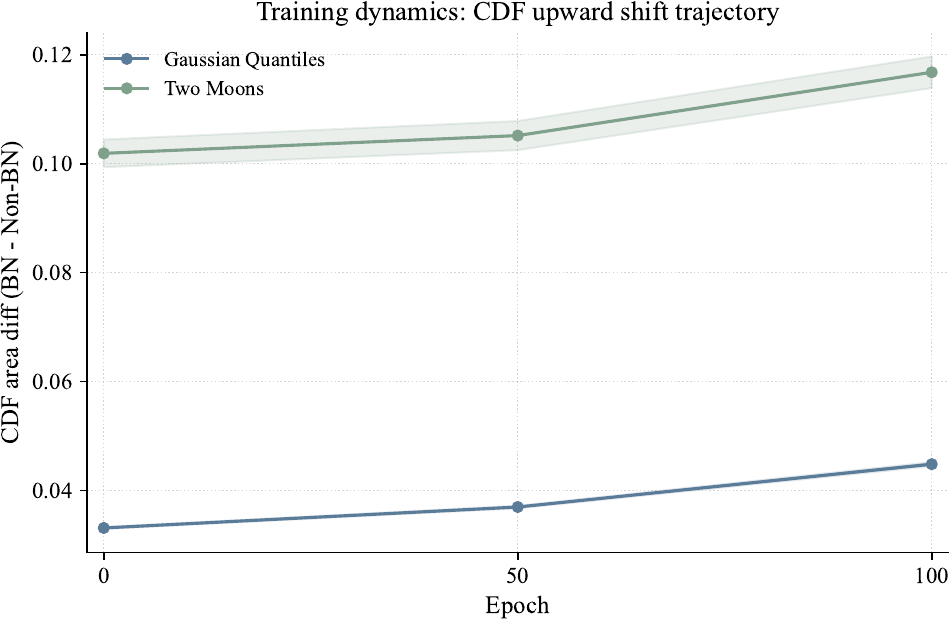}
  \caption{Training-conditional offset diagnostics across trials and checkpoints. \emph{Left:} one-sided KS statistics $D^+$ at epoch 100 are positive across trials. \emph{Right:} the CDF-area difference (BN minus non-BN) increases over training in these experiments.}
  \label{fig:ks_dplus_and_area_traj}
\end{figure}

Taken together, these experiments support the geometric components appearing in the batch-conditional analysis.

\paragraph{Additional diagnostics.}
We next report two further diagnostics that characterize the learned geometry.

\paragraph{Centroid-to-hyperplane distance as a proxy.}
We first report an interpretable geometric proxy based on centroid-to-hyperplane Euclidean distance:
\begin{equation}
\label{eq:proxy_l2_distance}
\mathrm{dist}_2(\bar u,H)=\frac{|c-\langle w,\bar u\rangle|}{\|w\|_2},
\end{equation}
for $H=\{u:\langle w,u\rangle=c\}$. Smaller distances are qualitatively consistent with increased local window-cut incidence, although this is not the exact $\ell_\infty$ criterion used in the theory. Figure~\ref{fig:hyperplane_distance_layers1to5} shows that BN induces a left shift in these distance histograms across layers and datasets.

\begin{figure*}[t]
  \centering
  \begin{minipage}[t]{\textwidth}
    \centering
    \includegraphics[width=0.95\linewidth]{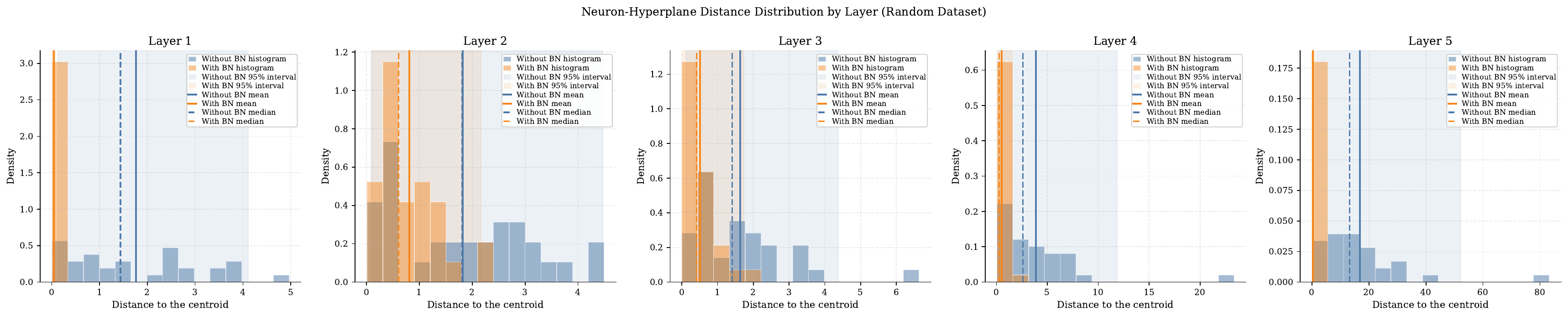}
    \vspace{-1mm}
    \small Random Uniform ($[32]^5$)
  \end{minipage}\hfill
  \begin{minipage}[t]{\textwidth}
    \centering
    \includegraphics[width=0.95\linewidth]{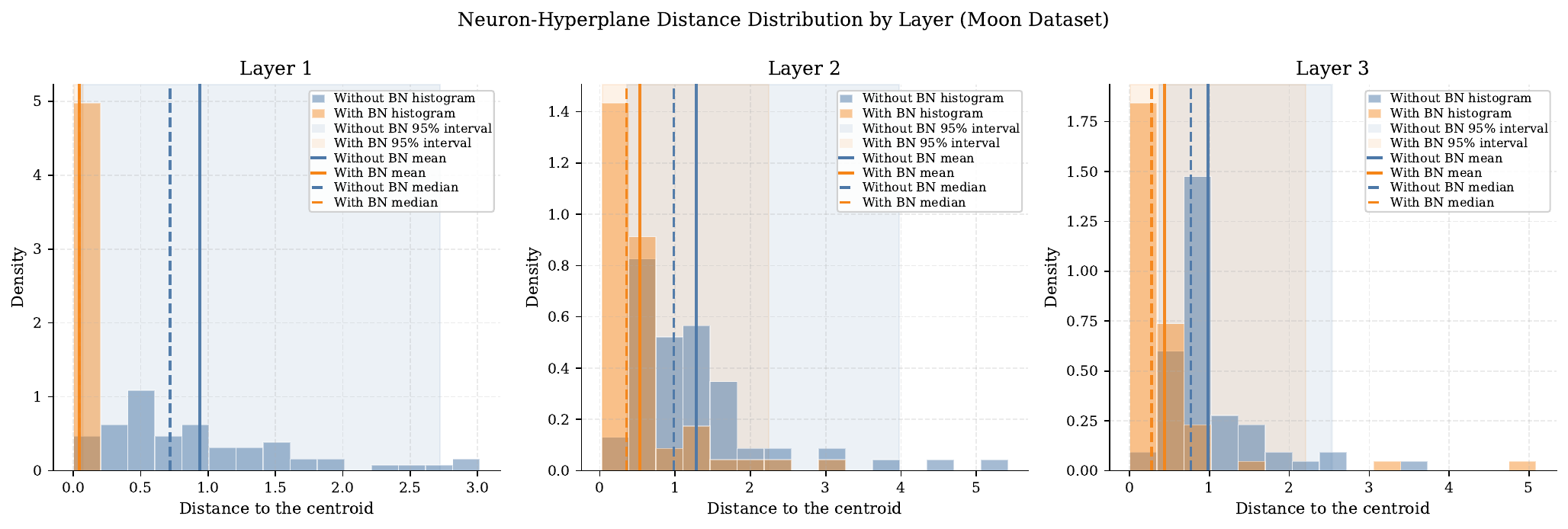}
    \vspace{-1mm}
    \small Two Moons ($[32]^5$)
  \end{minipage}\hfill
  \begin{minipage}[t]{\textwidth}
    \centering
    \includegraphics[width=0.95\linewidth]{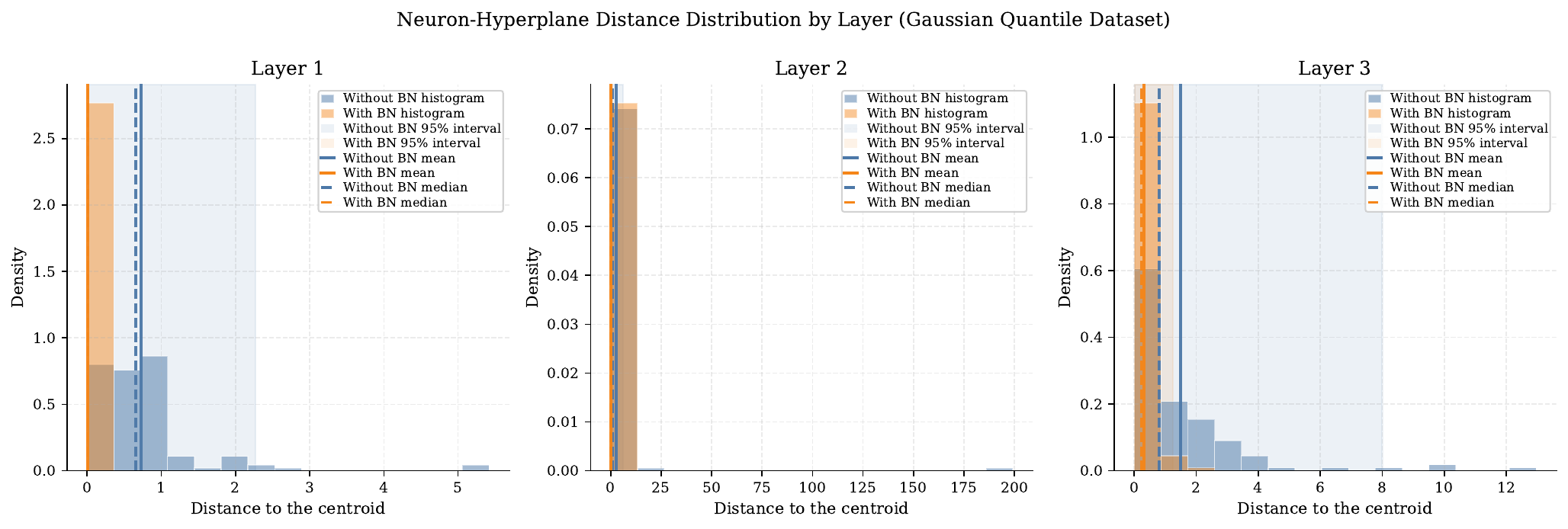}
    \vspace{-1mm}
    \small Gaussian Quantiles ($[32]^5$)
  \end{minipage}
  \caption{Centroid-to-hyperplane Euclidean distance distributions in representation space. BN produces a left shift across layers and datasets in this proxy diagnostic.}
  \label{fig:hyperplane_distance_layers1to5}
\end{figure*}

\paragraph{Inference-mode window-cut rates under frozen running statistics.}
We also report an inference-mode diagnostic based on running statistics. In this experiment, BN layers are evaluated in \texttt{eval()} mode, so BN uses checkpointed running statistics $(\bar\mu,\bar v)$ instead of instantaneous mini-batch statistics. We compute the inference-mode normalized offsets
\begin{equation}
\label{eq:offsets_run_for_cutrate}
\begin{aligned}
\Delta_{\ell,j}
&=
\frac{|w_{\ell,j}^{\top}\bar u_\ell+b_{\ell,j}|}{\|w_{\ell,j}\|_1},\\
\Delta^{\mathrm{BN,run}}_{\ell,j}
&=
\frac{|w_{\ell,j}^{\top}\bar u_\ell+b_{\ell,j}-\bar\mu_{\ell,j}
+\alpha_{\ell,j}\sqrt{\bar v_{\ell,j}+\varepsilon}|}{\|w_{\ell,j}\|_1},\\
\alpha_{\ell,j}
&:=\beta_{\ell,j}/\gamma_{\ell,j}.
\end{aligned}
\end{equation}
Using matched radii chosen by a fixed quantile rule, we find that BN models exhibit higher inference-mode window-cut rates across layers on the evaluated toy datasets (Figure~\ref{fig:cutrate_moon_gauss_main}). These results show that a similar geometric pattern remains visible when BN is evaluated with frozen running statistics.

\begin{figure*}[t]
  \centering
  \begin{minipage}[t]{\textwidth}
    \centering
    \includegraphics[width=0.95\linewidth]{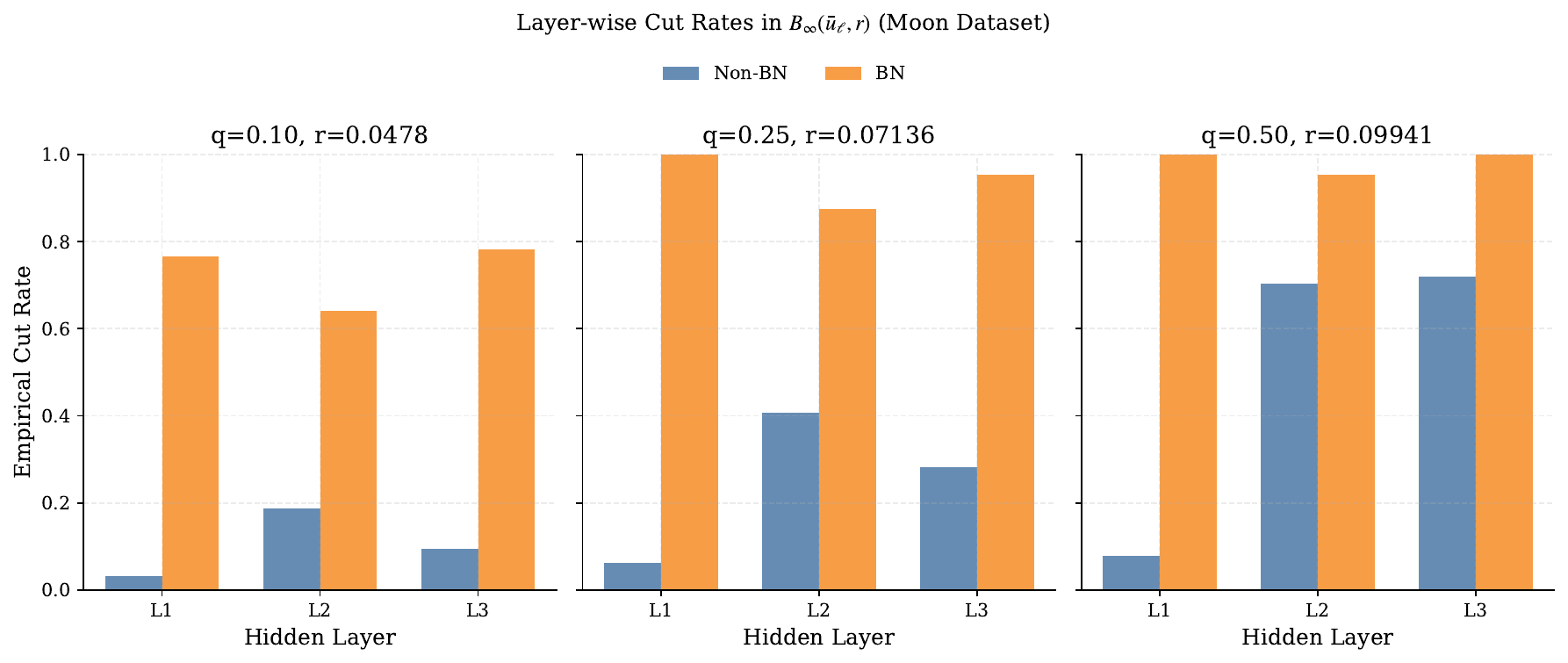}
    \vspace{-1mm}
    \small Two Moons: window-cut-rate bars ($q=\{0.10,0.25,0.50\}$)
  \end{minipage}\hfill
  \begin{minipage}[t]{\textwidth}
    \centering
    \includegraphics[width=0.95\linewidth]{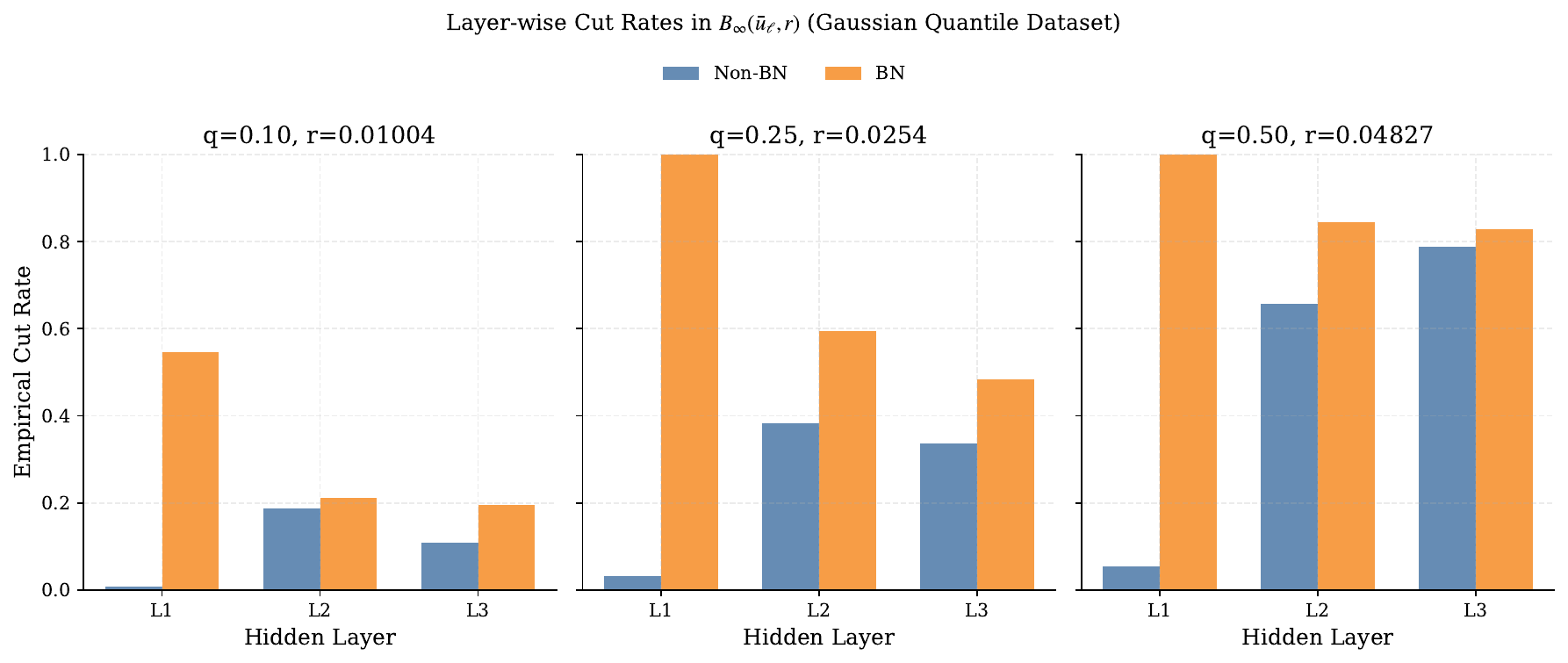}
    \vspace{-1mm}
    \small Gaussian Quantiles: window-cut-rate bars ($q=\{0.10,0.25,0.50\}$)
  \end{minipage}
  \caption{Inference-mode layerwise window-cut rates evaluated at radii selected by a fixed quantile rule. In these experiments, BN exhibits higher window-cut rates than non-BN across layers.}
  \label{fig:cutrate_moon_gauss_main}
\end{figure*}

\subsection{Multi-Layer Networks: Local Ingredients and Global Counts}
\label{subsec:exp2_multilayer}

For deep networks, we distinguish between local properties appearing in the theory and global region counts observed in the full input space.

\paragraph{Global deep exact local region counts.}
We first report exact input-space local region counts in centroid-anchored windows for deep MLPs. Training follows the protocol of Section~\ref{subsec:exp1}. The dataset--architecture pairs are:
\begin{enumerate}
    \item \textbf{Gaussian Quantiles:} $[128,128,128]$, with $\Omega=[-1,1]^2$.
    \item \textbf{Two Moons:} $[64,64,64]$, with $\Omega=[-1,2]^2$.
    \item \textbf{Random Uniform:} $[32,32,32,32,32]$, with $\Omega=[-0.5,4.5]^2$.
\end{enumerate}
Table~\ref{tab:exp2_region_counts} shows that BN yields larger exact local region counts in all deep configurations evaluated here. These counts are consistent with cumulative refinement across depth.

\begin{table}[t]
\caption{Exact local region counts in deep networks at epoch 100. Entries are reported as mean$\pm$std over 10 matched random seeds.}
\label{tab:exp2_region_counts}
\centering
\begin{tabular}{lllc}
\toprule
\textbf{Dataset} & \textbf{Architecture} & \textbf{Model} & $\mathbf{N_{\mathrm{reg}}(\Omega)}$ \\
\midrule
\multirow{2}{*}{Gaussian Quantiles}
& \multirow{2}{*}{3 Layers $[128\times3]$}
& non-BN & $29,351 \pm 16$ \\
& & BN & $\mathbf{161,100 \pm 22}$ \\
\midrule
\multirow{2}{*}{Two Moons}
& \multirow{2}{*}{3 Layers $[64\times3]$}
& non-BN & $5,868 \pm 8$ \\
& & BN & $\mathbf{30,434 \pm 9}$ \\
\midrule
\multirow{2}{*}{Random Uniform}
& \multirow{2}{*}{5 Layers $[32\times5]$}
& non-BN & $2,617 \pm 2$ \\
& & BN & $\mathbf{38,255 \pm 14}$ \\
\bottomrule
\end{tabular}
\end{table}

\begin{figure}[t]
    \centering
    \includegraphics[width=0.95\linewidth]{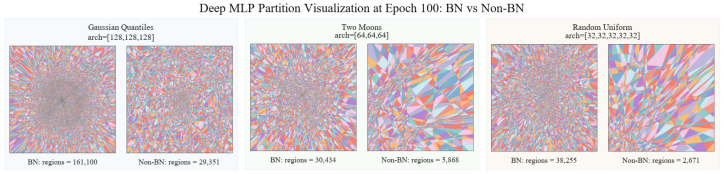}
    \caption{Representative input-space partitions for deep MLPs at epoch 100 across three dataset--architecture pairs. In each block, the BN partition is shown on the left and the non-BN partition on the right. The BN models exhibit finer visible partitions in the displayed neighborhoods.}
    \label{fig:exp2_deep_partition}
\end{figure}

\paragraph{Local ingredient checks for the multilayer construction.}
We now test the theorem on the object it directly concerns, namely a fixed parent affine region $R$ of the prefix map $g^{(\ell-1)}$.

\paragraph{Embedding condition: rank and conditioning of $A_R$.}
For sampled points in the data neighborhood, we identify their parent regions for a depth-2 prefix map and extract the local affine coefficient $A_R$ (equivalently, the Jacobian, which is constant inside $R$). We evaluate the drop-rank ratio $\mathbb P(\mathrm{rank}(A_R)<2)$ and the smallest singular value $\sigma_{\min}(A_R)$. Figure~\ref{fig:deep_lift_rank_sigma_stats} shows that the drop-rank ratio is zero on both datasets for both BN and non-BN in the sampled neighborhoods. Moreover, BN yields larger $\sigma_{\min}(A_R)$ in these experiments, suggesting better numerical conditioning of the affine embedding.

\begin{figure}[t]
  \centering
  \includegraphics[width=0.95\linewidth]{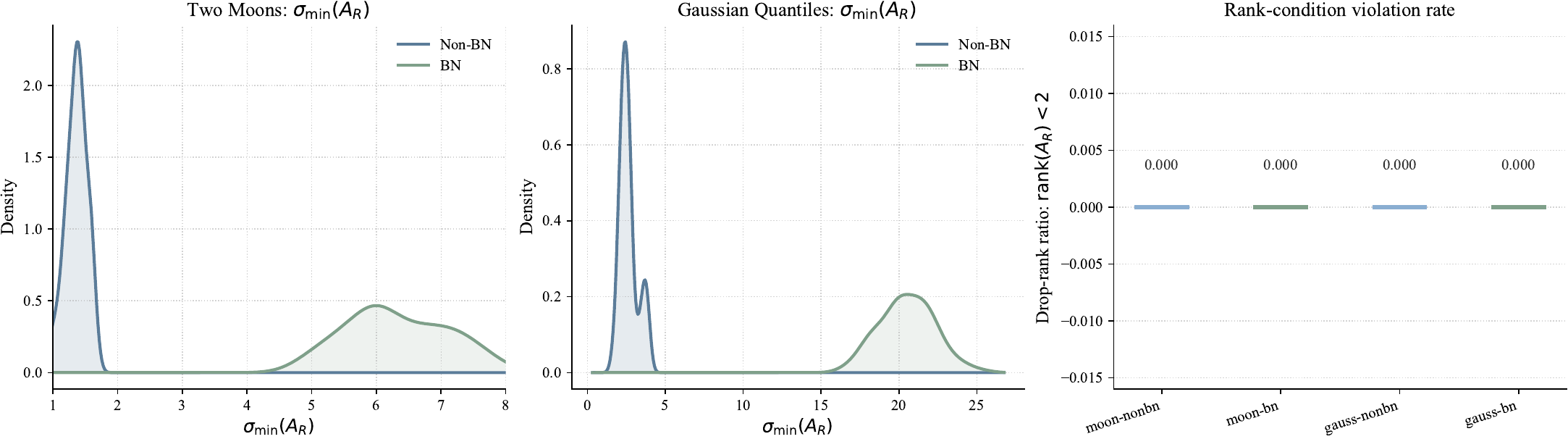}
  \caption{Assumption check for the multilayer construction inside sampled parent regions. The drop-rank ratio is zero in the sampled regions for both BN and non-BN, while BN exhibits larger $\sigma_{\min}(A_R)$ in these experiments.}
  \label{fig:deep_lift_rank_sigma_stats}
\end{figure}

\paragraph{Component-count equivalence inside a fixed parent region.}
We next test the topological bridge underlying Theorem~\ref{thm:bn-local-density-mechanism-multilayer}. Fix a parent affine region $R$ sampled from the data neighborhood. Inside $R$, the prefix map $g^{(\ell-1)}$ is affine, so the theorem predicts that the local switching arrangement in input space is the pullback of the corresponding restricted arrangement in representation space. To test this claim, we construct an intrinsic two-dimensional window that remains inside $R$ using a grid-based validity filter and retain only windows with sufficient in-region support. Across all retained windows, the connected-component counts agree exactly between input space and representation space, with Jaccard overlap 1.0. These observations support the local structural correspondence used in the theorem.

Taken together, the local parent-region tests support the applicability of the multilayer construction in the experimental regime studied here, while the global count gaps in Table~\ref{tab:exp2_region_counts} provide complementary evidence that the resulting refinement can accumulate across depth.

\subsection{Offset Distributions on Real Datasets}
\label{subsec:cdf_cross_dataset}

Exact affine-region counting in high dimensions is NP-hard under standard region definitions for ReLU networks~\cite{book47}. We therefore use the empirical CDF of normalized offsets as a theory-aligned proxy.

For CIFAR-10, MNIST, and TinyImageNet, we extract matched BN and non-BN checkpoints and compute, for each hidden layer $\ell\in\{1,2,3\}$, the empirical CDF of
\begin{equation}
\label{eq:exp3_normalized_offsets}
\delta^{\mathrm{NonBN}}_{\ell,j}
=
\frac{|w_{\ell,j}^{\top}\bar u_{\ell}+b_{\ell,j}|}{\|w_{\ell,j}\|_1},
\qquad
\delta^{\mathrm{BN}}_{\ell,j}
=
\frac{|\beta_{\ell,j}/\gamma_{\ell,j}|\,\sqrt{v_{\ell,j}+\varepsilon}}{\|w_{\ell,j}\|_1},
\end{equation}
where $\bar u_\ell$ is the empirical centroid of the layer-$\ell$ representations. This experiment targets the intermediate variable $\Delta$ rather than exact local region counts, and should therefore be interpreted as supporting evidence.

Figure~\ref{fig:cdf_three_datasets} shows that across all three datasets and all three layers, the BN curves lie above the non-BN curves. Equivalently, BN places more mass near zero in the normalized-offset coordinate, so that for any fixed radius in the plotted range, a larger fraction of BN hyperplanes satisfies the same centroid-centered window-cut criterion used in Section~\ref{sec:bn-local-density-standard-linf}.

\begin{figure*}[t]
  \centering
  \begin{minipage}[t]{\textwidth}
    \centering
    \includegraphics[width=0.95\linewidth]{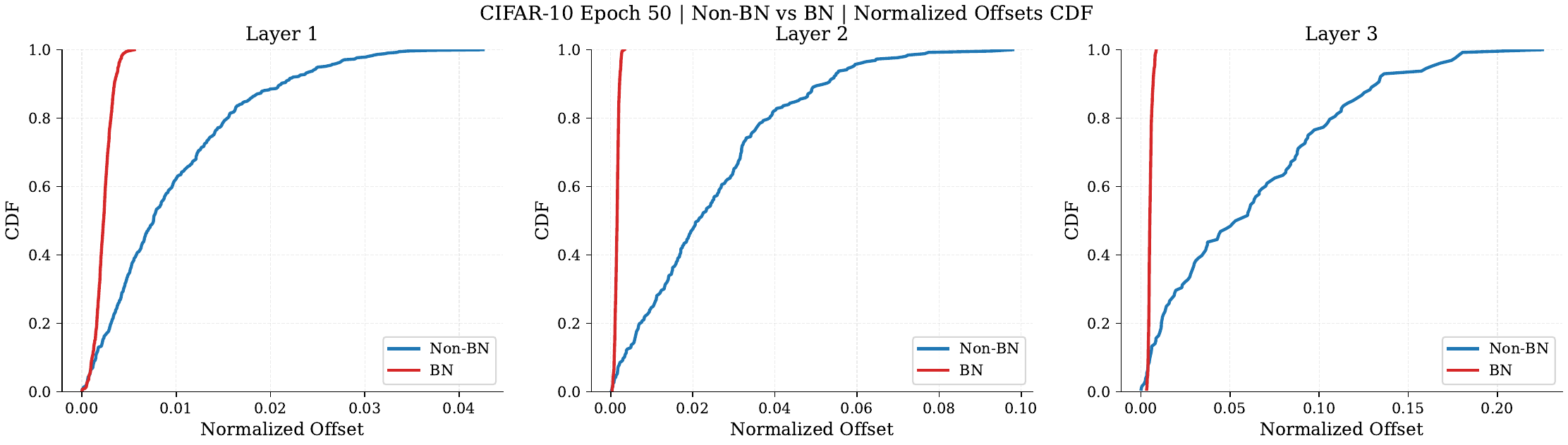}
    \vspace{-1mm}
    \small CIFAR-10 (epoch 50)
  \end{minipage}\hfill
  \begin{minipage}[t]{\textwidth}
    \centering
    \includegraphics[width=0.95\linewidth]{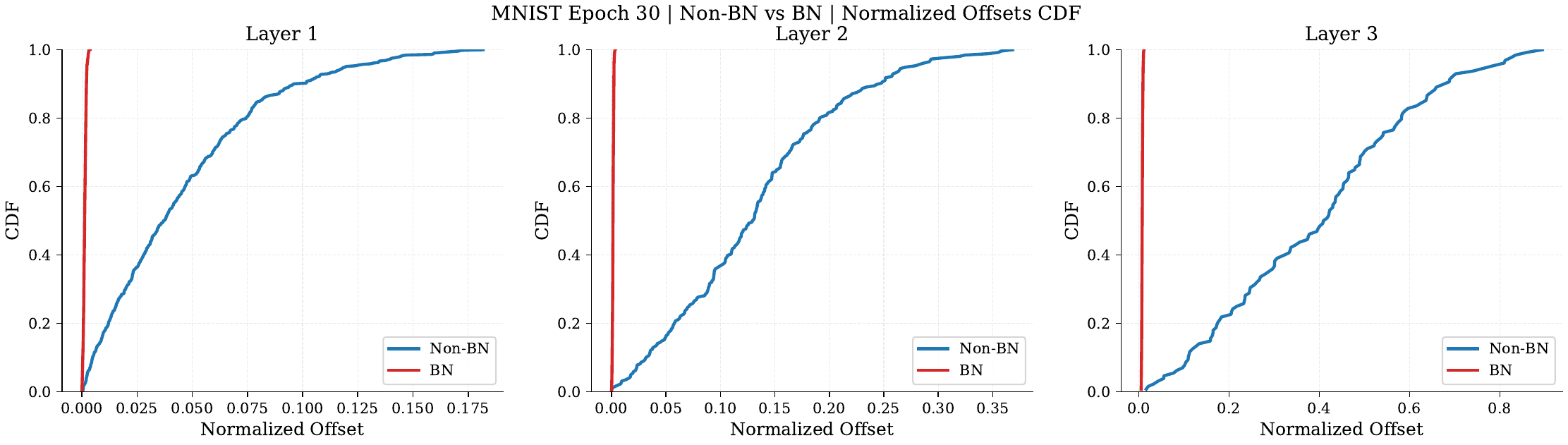}
    \vspace{-1mm}
    \small MNIST (epoch 30)
  \end{minipage}\hfill
  \begin{minipage}[t]{\textwidth}
    \centering
    \includegraphics[width=0.95\linewidth]{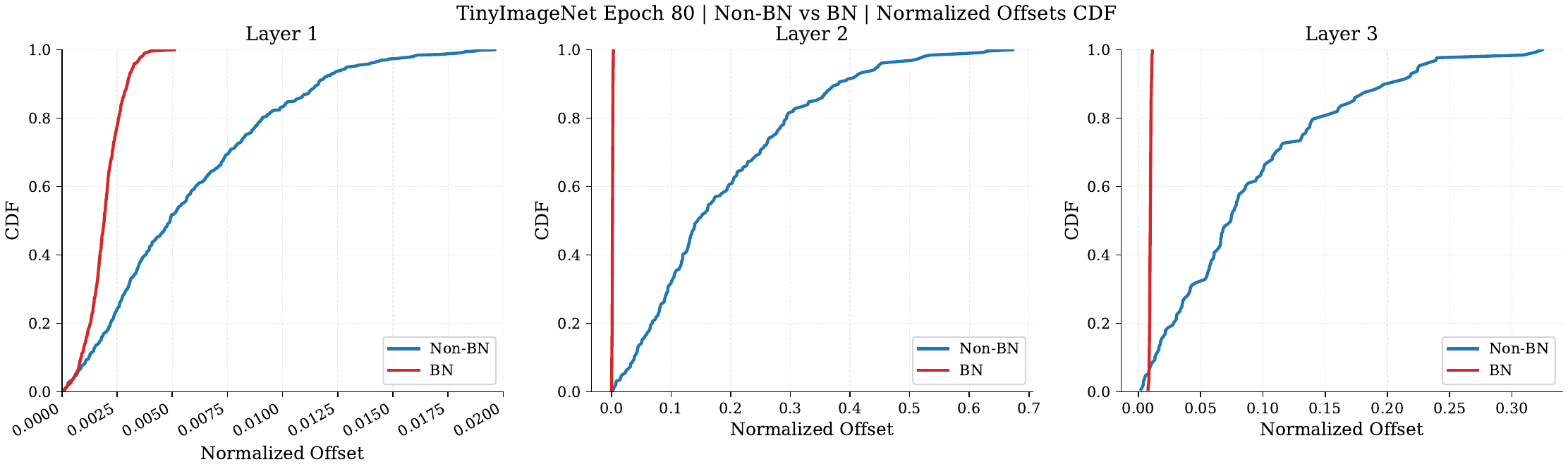}
    \vspace{-1mm}
    \small TinyImageNet (epoch 80)
  \end{minipage}
  \caption{Empirical CDFs of normalized offsets on three real datasets. In every layer and dataset, the BN curve lies above the non-BN curve.}
  \label{fig:cdf_three_datasets}
\end{figure*}

\subsection{Affine-Region Structure on Two-Dimensional Slices}
\label{subsec:realdata_affine_regions_figure_only}

To obtain directly computable geometric quantities on real-data models, we complement the CDF proxy analysis with matched two-dimensional slice evaluations.

For CIFAR-10, MNIST, and TinyImageNet, we fix matched two-dimensional projection planes in representation space. For each plane, we evaluate a trained BN model and its matched non-BN counterpart under identical architecture, optimization, and checkpoint selection, and compute the exact affine-region count on that slice. Across datasets and matched views, BN produces finer partitions and larger slice-wise affine-region counts (Figure~\ref{fig:realdata_affine_regions_bn_vs_nonbn}). These slice-wise measurements provide additional geometric evidence on matched low-dimensional sections.

\begin{figure*}[t]
  \centering
  \includegraphics[width=0.95\linewidth]{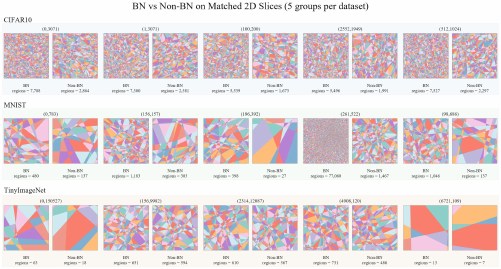}
  \caption{Affine-region partitions on matched two-dimensional slices for BN and non-BN models trained on CIFAR-10, MNIST, and TinyImageNet. Each BN/non-BN pair is evaluated on the same projection plane, and the exact affine-region count on that slice is reported on the corresponding panel.}
  \label{fig:realdata_affine_regions_bn_vs_nonbn}
\end{figure*}

\subsection{Decision-Boundary Formation}
\label{subsec:bn_boundary_dynamics_moon_gauss}

Finally, we examine whether differences in partition structure are reflected at the classifier level.

Let $f_t:\mathbb{R}^2\to\mathbb{R}^C$ denote the network at epoch $t$, with class logits
\begin{equation}
f_t(x)=\bigl(f_{t,1}(x),\dots,f_{t,C}(x)\bigr),
\end{equation}
and predicted label
\begin{equation}
\hat y_t(x):=\arg\max_{c\in\{1,\dots,C\}} f_{t,c}(x).
\end{equation}
The decision region of class $c$ at epoch $t$ is
\begin{equation}
\mathcal D_{t,c}:=\{x\in\mathbb{R}^2:\hat y_t(x)=c\},
\end{equation}
and the decision boundary is
\begin{equation}
\partial \mathcal D_t
:=
\bigcup_{c\neq c'}
\overline{\mathcal D_{t,c}}\cap \overline{\mathcal D_{t,c'}}.
\end{equation}
Because the networks considered here are CPA, each logit is affine on every affine region, so the decision boundary is assembled from region-wise affine pieces after composition through the hidden layers. The visualized decision boundary is therefore distinct from the hidden-layer switching hyperplanes themselves.

We consider Two Moons and Gaussian Quantiles, and train matched MLPs with architecture $[16,16,16]$ under two conditions: non-BN and BN, with all other hyperparameters fixed.

Figures~\ref{fig:bn_boundary_dynamics_moon_gauss} and \ref{fig:bn_accuracy_vs_epoch} show paired boundary visualizations and validation-accuracy trajectories. In these experiments, BN models exhibit earlier visually coherent decision boundaries and faster increases in validation accuracy.

\begin{figure*}[t]
  \centering
  \includegraphics[width=0.95\linewidth]{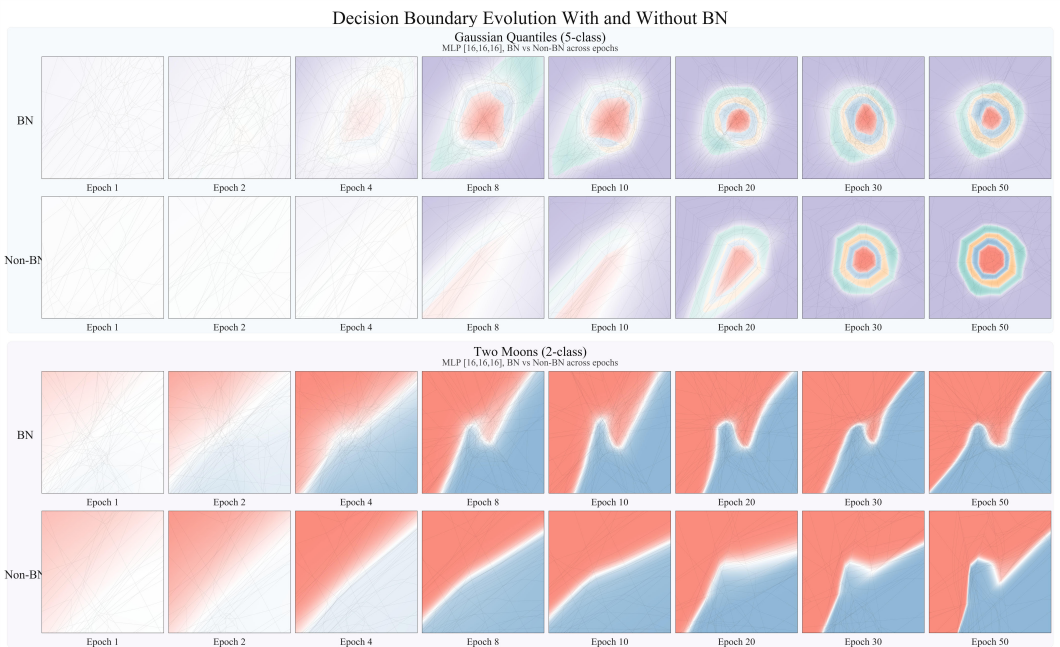}
  \caption{Decision-boundary evolution under matched BN and non-BN training on Two Moons and Gaussian Quantiles. Each colored region corresponds to the classifier prediction in input space, and the white interfaces provide a numerical approximation of the decision boundary.}
  \label{fig:bn_boundary_dynamics_moon_gauss}
\end{figure*}

\begin{figure}[t]
  \centering
  \includegraphics[width=0.95\linewidth]{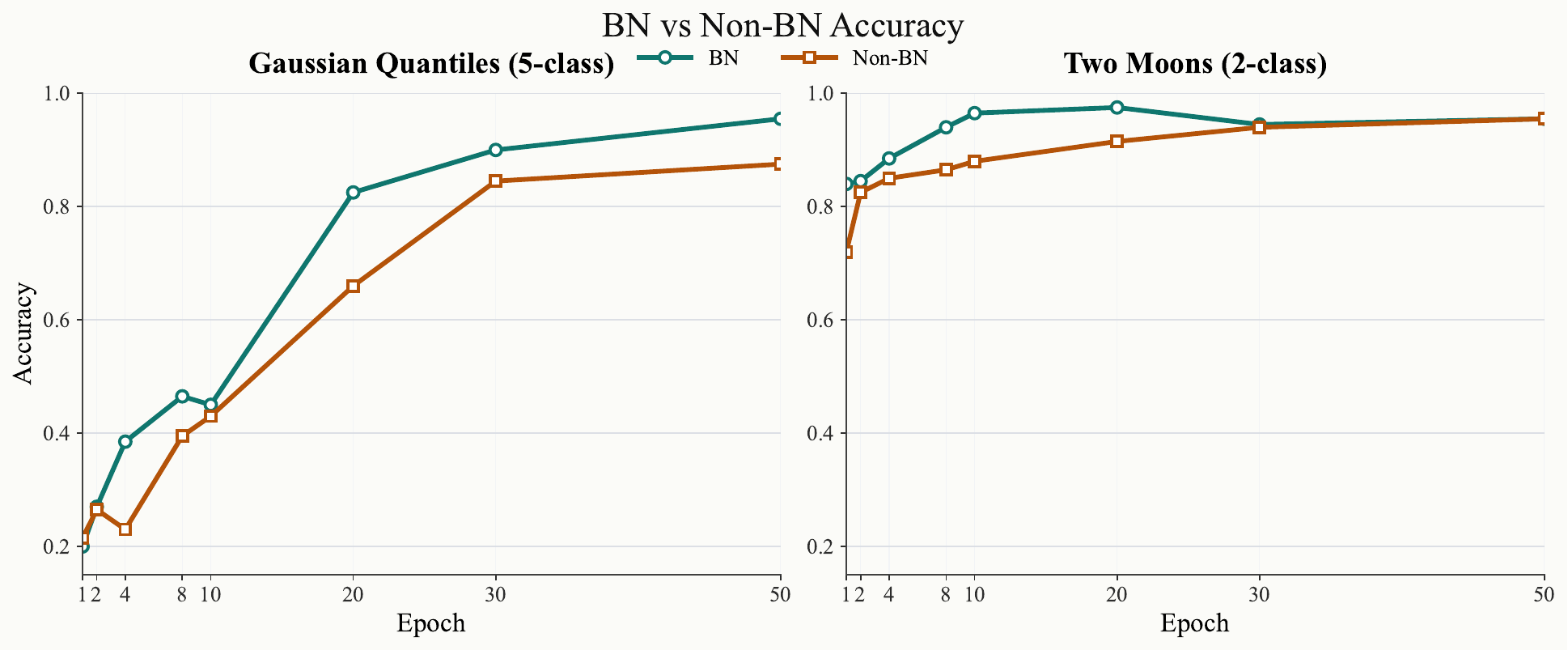}
  \caption{Validation accuracy over training epochs on Two Moons and Gaussian Quantiles under matched settings.}
  \label{fig:bn_accuracy_vs_epoch}
\end{figure}

\paragraph{Summary.}
The experiments provide evidence at multiple levels. First, in low dimensions, exact enumeration shows that BN is associated with larger local region counts under matched conditions. Second, under fixed reference batches, the geometric components appearing in the training-time analysis are directly observed. Third, in deep networks, local ingredient checks support the applicability of the multilayer construction in the evaluated regime, and global counts are consistent with cumulative refinement across depth. Fourth, on real datasets, offset CDFs and slice-based measurements exhibit similar patterns. Overall, the empirical results are consistent with the geometric mechanism proposed in Section~\ref{sec:bn-local-density-standard-linf}, while remaining observational in nature.

\section{Discussion}
\label{sec:discussion}

A useful way to interpret the present results is through the geometric role of \emph{training-time} BN in CPA networks. Rather than affecting the inference-time function class, which remains unchanged since frozen BN can be absorbed into an affine reparameterization, BN acts during optimization to reshape the \emph{realized local partition geometry} through its batch-dependent centering and scaling. Conditioned on a mini-batch, BN induces a through-centroid reference hyperplane for each neuron, and each breakpoint-switching hyperplane becomes a parallel translate whose offset is measured in batch-standardized units and is independent of the raw bias. This yields a concrete geometric mechanism linking BN statistics to local window cuts and, under explicit conditions, to local affine-region refinement.

A central aspect of the analysis is the distinction between \emph{exact geometric identities} and \emph{sufficient-condition comparison results}. The batch-conditional hyperplane representation and the $\ell_\infty$ window-cut criterion hold exactly, while the increase in local region density follows under additional stochastic-order and local genericity conditions. In this sense, the results provide a \emph{mechanism-level} understanding of how BN can promote local partition refinement, while making precise the setting in which this effect can be established.

The multilayer analysis supports the same interpretation. Within parent affine regions where the upstream map is an affine embedding, the switching geometry in deeper layers corresponds to a pullback of a representation-space arrangement, with connected-component counts preserved under this mapping. This gives a structural explanation for how the local refinement mechanism can propagate through depth in nondegenerate regions and connects the single-layer geometric picture to the deeper CPA partition induced in input space.

The empirical results are consistent with this perspective. On low-dimensional problems, exact region enumeration shows a clear increase in local region counts under BN. On higher-dimensional datasets, where exact counting is computationally intractable, normalized-offset distributions and matched low-dimensional slice analyses provide evidence aligned with the theoretical predictions. Taken together, these observations support the view that BN functions not only as an optimization aid, but also as a training-time geometric mechanism that reorganizes switching structure relative to data-centered neighborhoods.

More broadly, the analysis highlights a local geometric role of training-time BN and provides a principled way to study its effect at the level of realized CPA partitions. From this viewpoint, the contribution of BN is not only to facilitate optimization, but also to reshape the local switching geometry that governs piecewise-affine behavior near the data.

\section{Limitations}
\label{sec:limitations}

The results of this paper should be interpreted within the training-time, batch-conditional setting studied here. We highlight two limitations.

\paragraph{Local and batch-conditional scope of the theory.}
The analysis is formulated for the batch-conditional CPA map obtained by freezing training-time BN statistics, and the multilayer transfer result is established inside parent affine regions where the upstream representation map is an affine embedding. Accordingly, the theory provides a local geometric characterization of training-time BN, but not a global characterization of the full deep partition across all parent regions or across the full sequence of mini-batches encountered during optimization. Degenerate regions, rank-deficient prefix maps, and interactions across neighboring parent regions are not covered by the present framework.

\paragraph{Limits of exact empirical verification.}
Exact empirical verification is feasible only in low-dimensional settings where affine-region counts can be enumerated directly. In higher dimensions, exact region counting is computationally intractable under standard region definitions for ReLU networks, being NP-hard in general~\cite{book47}. The corresponding experiments therefore necessarily rely on theory-aligned proxies, such as normalized-offset distributions and low-dimensional slices. These results are consistent with the proposed mechanism, but they are not equivalent to exact verification of the underlying partition geometry.

\section{Conclusion}
\label{sec:conclusion}

We studied training-time BN in CPA networks from the perspective of local partition geometry. Our main result is an exact batch-conditional geometric characterization: for each neuron, BN induces a through-centroid reference hyperplane, and the associated breakpoint-switching hyperplanes are parallel translates whose offsets are expressed in batch-standardized coordinates and are independent of the raw bias. This gives a concrete function-level interpretation of training-time BN as a batch-conditional recentering mechanism for switching structure near the data.

Building on this characterization, we introduced a local region-density framework based on exact affine-region counts in $\ell_\infty$ windows and derived an exact criterion for when a switching hyperplane cuts such a window. Under explicit sufficient conditions, we showed that this recentering mechanism can increase expected local partition refinement in ReLU and more general CPA networks, and that the same mechanism transfers locally through depth inside parent affine regions where the upstream map is an affine embedding. The experiments support this picture through exact local region enumeration, mechanism-level diagnostics, and higher-dimensional supporting evidence.

At the same time, the scope of the results should be stated clearly. The geometric identities are exact for the training-time batch-conditional map, whereas the refinement comparisons rely on explicit sufficient conditions, and exact region enumeration is only feasible in low dimensions. Extending the theory beyond these conditions, developing stronger scalable diagnostics in high dimensions, and comparing this mechanism more systematically with other normalization schemes are natural directions for future work. Overall, the paper suggests that BN should be understood not only as an optimization device, but also as a training-time geometric mechanism that reshapes the realized local CPA partition near the data.




\newpage

\vskip 0.2in
\bibliography{sample}

\end{document}